\newcommand{\NN}{\mathcal{N}}
\newcommand{\bx}{{\mathbf{x}}}
\newcommand{\by}{{\mathbf{y}}}
\newcommand{\bz}{{\mathbf{z}}}
\DeclareMathOperator*{\argmax}{arg\,max}
\newtheorem{theorem}{Theorem}
\newtheorem{lemma}[theorem]{Lemma}
\newtheorem{proposition}[theorem]{Proposition}
\newcommand{\Tr}{\text{Tr}}
\newcommand{\rhoVal}{\rho} %
\newcommand{\xhdr}[1]{\textbf{#1}\:}
\providecommand{\NN}{\mathcal{N}}
\providecommand{\bx}{\mathbf{x}}
\theoremstyle{definition}
\newtheorem{assumption}{Assumption}
\newtheorem{definition}{Definition}
\providecommand{\rhoVal}{\rho}
\title{

Noise Hypernetworks: Amortizing Test-Time\\ Compute in Diffusion Models
}
\author{
Luca Eyring$^{1,2,3,}$ \quad Shyamgopal Karthik$^{1,2,3,4}$ \quad Alexey Dosovitskiy$^{5}$ \\
\textbf{Nataniel Ruiz}$^6$\thanks{Equal Supervision} \quad \textbf{Zeynep Akata}$^{1,2,3}$\footnotemark[1]\\
\\
$^1$Technical University of Munich \quad $^2$Munich Center of Machine Learning\\
\quad $^3$Helmholtz Munich \quad  $^4$University of Tübingen \quad $^5$Inceptive \quad $^6$Google\\
\texttt{luca.eyring@tum.de} \\
}
\begin{document}

\maketitle

\begin{abstract}

The new paradigm of test-time scaling has yielded remarkable breakthroughs in Large Language Models (LLMs) (e.g.~reasoning models) and in generative vision models, allowing models to allocate additional computation during inference to effectively tackle increasingly complex problems. Despite the improvements of this approach, an important limitation emerges: the substantial increase in computation time makes the process slow and impractical for many applications. Given the success of this paradigm and its growing usage, we seek to preserve its benefits while eschewing the inference overhead. In this work we propose one solution to the critical problem of integrating test-time scaling knowledge into a model during post-training. Specifically, we replace reward guided test-time noise optimization in diffusion models with a  Noise Hypernetwork that modulates initial input noise. We propose a theoretically grounded framework for learning this reward-tilted distribution for distilled generators, through a tractable noise-space objective that maintains fidelity to the base model while optimizing for desired characteristics. We show that our approach recovers a substantial portion of the quality gains from explicit test-time optimization at a fraction of the computational cost. Code is available at \url{https://github.com/ExplainableML/HyperNoise}.
\end{abstract}

\section{Introduction}

Recently, inference-time scaling has made remarkable breakthroughs in Large Language Models~\cite{snell2024scaling,r1,o1} and generative vision models, enabling models to spend more computation during inference to solve complex problems effectively. Drawing from the success and growing usage of test-time compute in LLMs, several methods have attempted to apply similar ideas in the context of diffusion models for generation~\cite{ma2025inference,reno,dflow,doodl,uehara2025rewardguidediterativerefinementdiffusion,uehara2025inferencetimealignmentdiffusionmodels,novack2024dittodiffusioninferencetimetoptimization,tang2024inference,sundaram2024cocono,ditto2,rbmodulation}. The goal of this process is to spend additional compute during inference to obtain generations that better reflect desired output properties.

Diffusion model test-time techniques that optimize the initial noise or intermediate steps of the diffusion process, often guided by feedback from pre-trained reward models~\cite{imagereward,pickscore,hps,hpsv2,mps,vqascore}, have demonstrated significant promise in improving critical attributes of the generated outputs, such as prompt following, aesthetics, quality and composition~\cite{ma2025inference,imageselect,reno,chefer2023attendandexcite,doodl,novack2024dittodiffusioninferencetimetoptimization,ditto2}. These methods generally fall into two broad categories: gradient-based optimization, which typically requires substantial GPU memory for backpropagation through the full model~\cite{doodl,dflow,reno,novack2024dittodiffusioninferencetimetoptimization,dno}, and gradient-free optimization, which often necessitates a very large number of function evaluations (NFEs), sometimes thousands, of the computationally expensive denoising network~\cite{imageselect,ma2025inference,uehara2025inferencetimealignmentdiffusionmodels}. While both strategies can effectively boost output quality, they introduce considerable latency (exceeding 10 minutes for one generation), severely limiting their practical utility, particularly for real-time applications. This is an instantiation of a global problem of test-time scaling methods that we seek to tackle in this work. The core hypothesis of our work is whether \textit{it is possible to capture a portion of test-time scaling benefits by integrating this knowledge into a neural network during training time?}

\begin{figure}[t]
    \centering
    \vspace{-2em}
    \includegraphics[width=\textwidth]{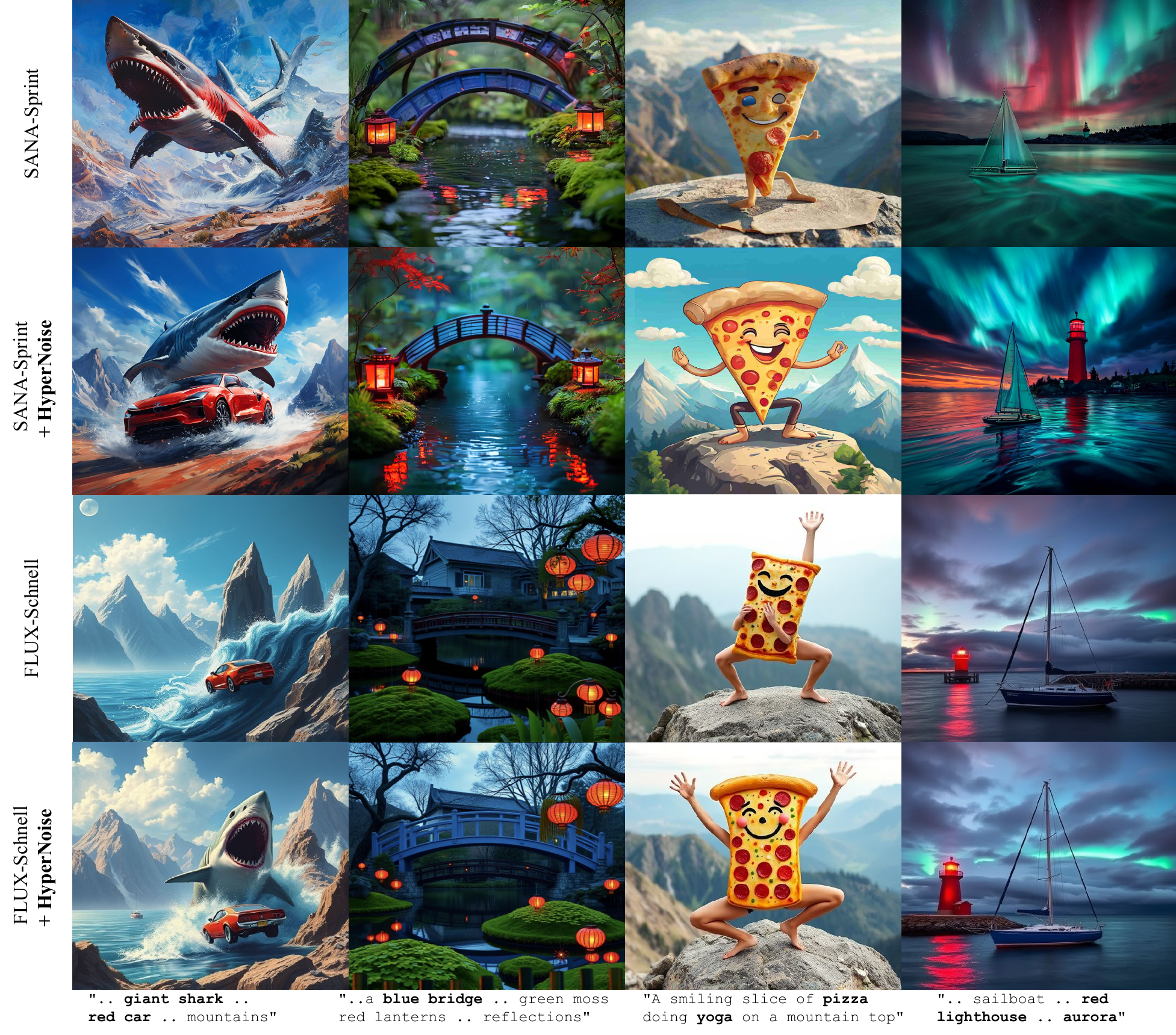}
    \caption{The same initial random noise is used for the base generation and the initialization of noise hypernetwork. \textbf{HyperNoise} significantly improves upon the initially generated image with respect to both prompt faithfulness and aesthetic quality for both SANA-Sprint and FLUX-Schnell.%
    }
    \vspace{-10pt}
    \label{fig:teaser-qualitative}
\end{figure}

To address this, one might consider directly fine-tuning the diffusion model using reward signals~\cite{clark2023directly,alignprop,domingo2024adjoint,uehara2024finetuningcontinuoustimediffusionmodels,tang2024finetuningdiffusionmodelsstochastic,li2024textcraftor,rldiffusion1,rldiffusion2} or with Direct Preference Optimization (DPO)~\cite{dpo,diffusiondpo,diffusionkto,rankdpo,mapo}. The objective here can be formulated as learning a \textit{tilted distribution} (Equation~\ref{eq:main_objective_direct_finetuning}), which upweights samples with high reward while maintaining fidelity to a base model's distribution. These methods are usually expensive to train due to the need for backpropagation through the sampling process. Instead, one might consider directly fine-tuning a step-distilled generative model to learn this target distribution.  However, this approach typically involves a KL regularization to the base model that is intractable for distilled models. An imbalance or poor estimation of this can lead to the model \smash{"reward-hacking"}, superficially maximizing the reward metric while significantly deviating from the desired underlying data distribution, thus not achieving the genuine desired improvements.

In this work, we propose a different path to realize the benefits of the target tilted distribution (Equation~\ref{eq:target_tilted_dist}), particularly for step-distilled generative models. Our core hypothesis is that instead of modifying the parameters of the base generator, we can achieve the desired output distribution by learning to predict an optimal initial noise distribution. We first show that such an optimal \textit{tilted noise distribution} $p_0^\star$ exists (characterized by Equation~\ref{eq:optimal_latent_dist}). When samples from this $p_0^\star$ are passed through the frozen generator, they naturally produce outputs that are distributed according to the target data-space tilted distribution. To learn this tilted noise distribution, we introduce a lightweight network, $f_\phi$, that transforms standard Gaussian noise into a modulated, improved noise latent. The crucial advantage of this approach lies in its optimization objective. In particular, the regularization term, a KL divergence between the modulated noise distribution and the standard Gaussian prior, is defined entirely in the noise space. We show that this noise-space KL divergence can be made tractable and effectively approximated by an $L_2$ penalty on the magnitude of the noise modification.

This lightweight network forms the core of our approach, which we term 
\textbf{Noise} \textbf{Hyper}networks. It functions akin to a hypernetwork~\cite{ha2016hypernetworks,alaluf2022hyperstyle,hypernetfield,hyperlm1,hyperlm2,hyperlm3,hypernetcl,hypernetnas} as rather than generating the final image, it produces a specific, optimized starting latent for the main frozen generative model. This effectively guides the output of the base model without any changes to its parameters. Broadly, a hypernetwork is an auxiliary model trained to generate crucial inputs or parameters of a primary model. Our $f_\phi$ embodies this concept by learning to predict the optimized initial noise as input to the frozen generator. Consequently, \textit{our proposed approach is effectively training a Noise Hypernetwork to perform the task of test-time noise optimization}, by learning to directly output an optimized noise latent in a single step sidestepping the need for expensive, iterative test-time optimization.

Our practical implementation of the noise hypernetwork utilizes Low-Rank Adaptation (LoRA), ensuring it remains parameter-efficient and adds negligible computational cost during inference. We apply our method to text-to-image generation, conducting evaluations with an illustrative "redness" reward task to demonstrate core mechanics, as well as complex alignments using sophisticated human-preference reward models. We demonstrate the efficacy of our approach by applying it to distilled diffusion models SD-Turbo~\cite{sdxlturbo}, SANA-Sprint~\cite{sanasprint}, and FLUX-Schnell. Overall, our experiments show that we can recover a substantial portion of the quality gains from explicit test-time optimization at a fraction of the computational inference cost. In summary, our contributions are:
\begin{enumerate}[leftmargin=*,nosep]
\item We introduce HyperNoise, a novel framework that learns to predict an optimized initial noise for a fixed distilled generator, effectively moving test-time noise optimization benefits and computational costs into a one-time post-training stage.
\item We propose the first theoretically grounded framework for learning the reward tilted distribution of distilled generators, through a tractable noise-space objective that maintains fidelity to the base model while optimizing for desired characteristics.
\item We demonstrate through extensive experiments significant enhancements in generation quality for state-of-the-art distilled models with minimal added inference latency, making high-quality, reward-aligned generation practical for fast generators.
\end{enumerate}

\section{Background}
\label{sec:background}

\xhdr{Preliminaries.} Recent generative models are based on a time-dependent formulation between a standard Gaussian distribution $\mathbf{x}_0 \sim p_0 = \NN(0, \mathbf{I})$ and a data distribution $\mathbf{x}_1 \sim p_{data}$. These models define an interpolation between the initial noise and the data distribution, such that
\begin{align}
\mathbf{x}_t = \alpha_t \bx_0 + \sigma_t \bx_1,
\label{eq:time_varying}
\end{align}
where $\alpha_t$ is a decreasing and $\sigma_t$ is an increasing function of $t \in [0,1]$. Score-based diffusion~\citep{song2021scorebased, kingma2023variational, karras2022elucidating,ho2020denoising,ddim} and flow matching~\citep{rf1, rf2, rf3} models share the observation that the process $\bx_t$ can be sampled dynamically using a stochastic or ordinary differential equation (SDE or ODE). The neural networks parameterizing these ODEs/SDEs are trained to learn the underlying dynamics, typically by predicting the score of the perturbed data distribution or the conditional vector field. Generating a sample then involves simulating this learned differential equation starting from $\mathbf{x}_0 \sim p_0$.

\xhdr{Step-Distilled Models.} The iterative simulation of such ODEs/SDEs often requires numerous steps, leading to slow sample generation. To address this latency, distillation techniques have emerged as a powerful approach. The objective is to train a "student" model that emulates the behavior of a pre-trained "teacher" model (which performs the full ODE/SDE simulation) but achieves this with drastically fewer, or even a single, evaluation step(s). Prominent distillation methods such as Adversarial Diffusion Distillation~\cite{sdxlturbo} or Consistency Models~\cite{consistency1,consistency2} have enabled the development of highly efficient few-step or one-step generative models, like SD/SDXL-Turbo~\cite{sdxlturbo} and SANA-Sprint~\cite{sanasprint}. In this work, we denote such a distilled generator by $g_\theta$. The significantly reduced number of sampling steps in these distilled models makes them more amenable to various optimization techniques and practical for real-time applications, which is why they are the focus of our work.

\xhdr{Test-Time Noise Optimization}
\label{subsec:connection_to_inference_opt}
Test-time optimization techniques aim to improve pre-trained generative models on a per-sample basis at inference. One prominent gradient-based strategy is test-time noise optimization~\cite{doodl,dflow,novack2024dittodiffusioninferencetimetoptimization,dno,initno,tang2024inference}. Given a pre-trained generator $g_\theta$ (which could be a multi-step diffusion or flow matching model), this approach optimizes the initial noise $\mathbf{x}_0$ for each generation instance. The objective is to find an improved $\mathbf{x}_0^\star$ that maximizes a given reward $r(g_\theta(\mathbf{x}_0))$, often subject to regularization and can be formulated as
\begin{equation}
\label{eq:noise-opt}
\bx_0^\star = \argmax_{\bx_0}(r(g_\theta(\bx_0)) - \mathrm{Reg}(\bx_0)),
\end{equation}
where $\mathrm{Reg}(\bx_0)$ is a regularization term designed to keep $\bx_0^\star$ within a high-density region of the prior noise distribution $p_0$, thus ensuring the generated sample $g_\theta(\bx_0^\star)$ remains plausible. ReNO~\cite{reno} adapted this framework for distilled generators $g_\theta$, enabling more efficient test-time optimization compared to full diffusion models. However, this per-sample optimization still incurs significant computational costs at inference, involving multiple forward and backward passes, and increased GPU memory. This inherent latency and computational burden motivate the exploration of methods that can imbue models with desired properties without per-instance test-time optimization.

\xhdr{Reward-based Fine-tuning and the Tilted Distribution}
To circumvent the per-sample inference costs associated with test-time optimization, an alternative paradigm is to directly fine-tune the generative model $g_\theta$ to align with a reward function. We consider the pre-trained base distilled diffusion model $g_\theta$, which transforms an initial noise sample $\bx_0$ into an output sample $\bx = g_\theta(\bx_0)$. The distribution of these generated output samples is the pushforward of $p_0$ by $g_\theta$, which we denote as $p^{\text{base}} = (g_\theta)_\sharp p_0$. Given $g_\theta$ and a differentiable reward function $r(\bx): \mathbb{R}^d \rightarrow \mathbb{R}$ that quantifies the preference of samples $\bx$, our objective is to learn the so called \textit{tilted distribution}
\begin{equation}
    p^\star(\bx) \propto p^{\text{base}}(\bx)\exp(r(\bx)). \label{eq:target_tilted_dist}
\end{equation}
This target distribution is defined to upweight samples with high rewards under $r(\bx)$ while staying close to the original $p^{\text{base}}(\bx)$. We would like to learn $p^\star(\bx)$ by minimizing the KL divergence $D_{\text{KL}}(p^\phi \| p^\star)$. Here, $\smash{p^\phi}$ is the distribution generated by modifying the base process using trainable parameters $\phi$. e.g. $\phi$ could correspond to a fine-tuned version of $\theta$. This objective can be decomposed such that
\begin{align}
    \min_\phi D_{\text{KL}}(p^\phi \| p^\star) = \min_\phi  D_{\text{KL}}(p^\phi \| p^{\text{base}}) - \mathbb{E}_{\bx \sim p^\phi}[r(\bx)], \label{eq:main_objective_direct_finetuning}
\end{align}
where we omit the normalization constant of $p^\star(\bx)$ which is constant w.r.t. $\phi$ (see Appendix~\ref{app:output_tilted_dist}). This objective encourages the learned model $\smash{p^\phi}$ to generate high-reward samples while regularizing its deviation from the original base distribution $p^{\text{base}}$.

\xhdr{Challenges in Direct Reward Fine-tuning of Distilled Models}
Directly optimizing Equation~\ref{eq:main_objective_direct_finetuning} by fine-tuning the parameters of a distilled, e.g. one-step, $g_\theta$ poses significant challenges. The term $D_{\text{KL}}(p^\phi \| p^{\text{base}})$ requires evaluating the densities of  $\smash{p^\phi}$ and $\smash{p^{\text{base}}}$. For typical neural network generators, these densities involve Jacobian determinants through the change-of-variable formula, which are often intractable or computationally prohibitive to compute for high-dimensional data~\cite{papamakarios2021normalizingflowsprobabilisticmodeling}. Previously, a line of work has analyzed fine-tuning Diffusion~\cite{tang2024finetuningdiffusionmodelsstochastic,uehara2024finetuningcontinuoustimediffusionmodels} and Flow matching~\cite{domingo2024adjoint} models based on Equation~\ref{eq:main_objective_direct_finetuning} through the lens of Stochastic Optimal Control. However, this formulation relies on dynamical generative models (SDEs) and its application to distilled models is not straightforward, as these often lack the explicit continuous-time dynamical structure (e.g., an underlying SDE or ODE) that these fine-tuning techniques leverage.

\begin{figure}[t]
    \centering
    \includegraphics[width=\textwidth]{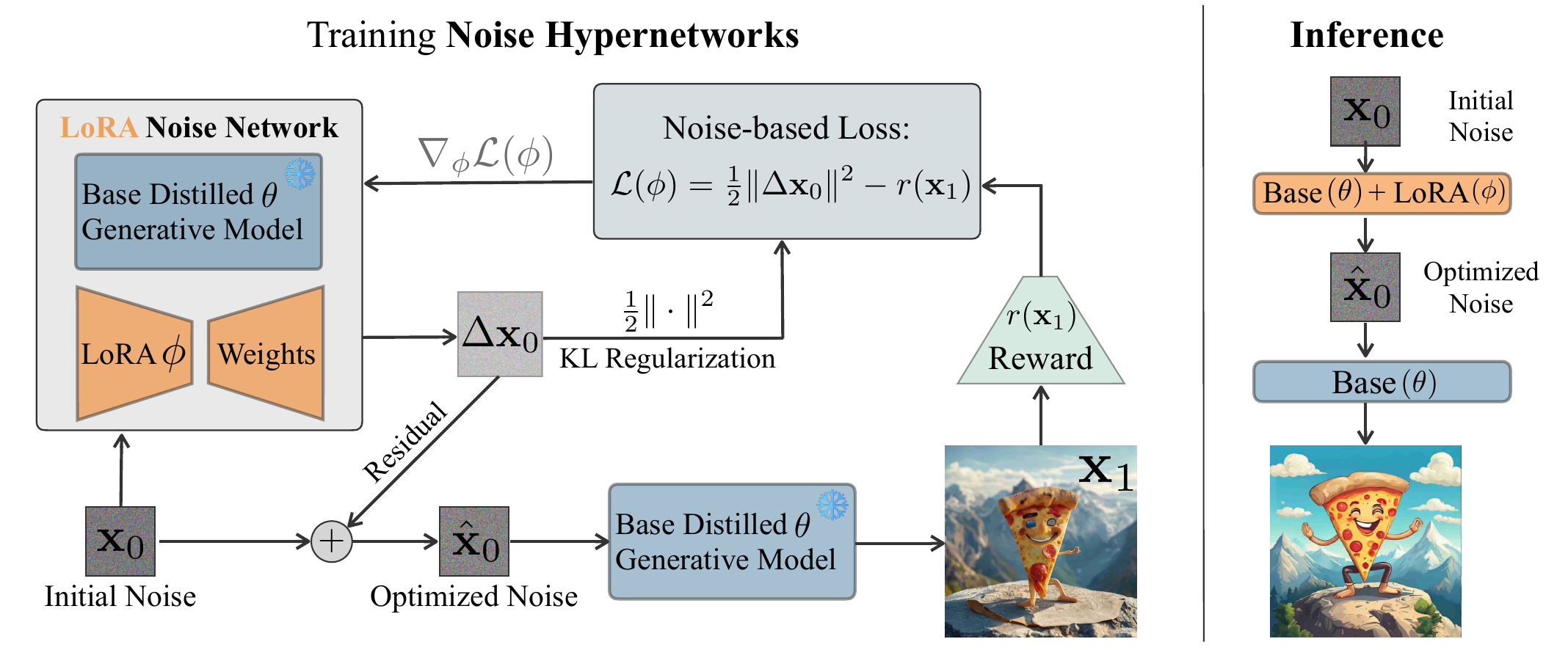}
    \caption{Illustration of our proposed HyperNoise approach. During training, the LoRA parameters are trained to predict improved noises and are optimized by reward maximization subject to KL regularization. During inference, the noise hypernetwork directly predicts the improved noise initialization which is used for the final generation.}
    \label{fig:concept-hypernoise}
    \vspace{-2mm}
\end{figure}

\section{Noise Hypernetworks}
\label{subsec:initial_noise_modulation}
Given the challenges in directly fine-tuning $g_\theta$, we introduce Noise Hypernetworks (\textbf{HyperNoise}), a novel theoretically grounded approach to learn $p^\star$ for distilled generative models. The core idea is to learn a new distribution for the initial noise, $\smash{p_0^\phi}$, such that samples $\smash{\hat{\bx}_0 \sim p_0^\phi}$, when passed through the fixed generator $g_\theta$, produce outputs $\bx = g_\theta(\hat{\bx}_0)$ that are effectively drawn from the target tilted distribution $p^\star(\bx)$ (Equation~\ref{eq:target_tilted_dist}). Instead of modifying the parameters $\theta$ of the base generator, we keep $g_\theta$ fixed. This requires $\smash{p_0^\phi}$ to approximate an optimal modulated noise distribution, $\smash{p_0^\star}$. This \textit{tilted noise distribution}, which precisely steers $g_\theta$ to $\smash{p^\star}$, can be characterized by (Appendix~\ref{app:noise_tilted_dist})
\begin{equation}
p_0^\star(\bx_0) \propto p_0(\bx_0)\exp(r(g_\theta(\bx_0))). \label{eq:optimal_latent_dist}
\end{equation}
To realize the modulated noise distribution $\smash{p_0^\phi}$, we parameterize it using a learnable noise hypernetwork $f_\phi$ (with parameters $\phi$). This network defines a transformation $T_\phi$ that maps initial noise samples $\bx_0 \sim p_0$ to modulated samples $\hat{\bx}_0$ via a residual formulation such that
\begin{equation}
\hat{\bx}_0 = T_\phi(\bx_0) \coloneqq \bx_0 + f_\phi(\bx_0). \label{eq:residual_initial_noise_transform}
\end{equation}
The distribution of these modulated samples, $\smash{p_0^\phi}$, is thus the pushforward of $p_0$ by $T_\phi$, i.e., $\smash{p_0^\phi = (T_\phi)_\sharp p_0}$. We propose to train the parameters $\phi$ of the noise modulation network $f_\phi$ by minimizing the KL divergence $\smash{D_{\text{KL}}(p_0^\phi \| p_0^\star)}$. This can be shown to be equivalent to minimizing the loss function
\begin{equation}
\mathcal{L}_{\mathrm{noise}}(\phi) = D_{\text{KL}}(p_0^\phi \| p_0) - \mathbb{E}_{\hat{\bx}_0 \sim p_0^\phi} [r(g_{\theta}(\hat{\bx}_0))]. \label{eq:initial_noise_modulation_objective_base}
\end{equation}
Analogously to Equation~\ref{eq:main_objective_direct_finetuning}, this objective encourages $p_0^\phi$ (and thus $f_\phi$) to produce initial noise samples $\hat{\bx}_0$ that effectively steer the fixed generator $g_\theta$ towards high-reward outputs $\bx$. The KL term $\smash{D_{\text{KL}}(p_0^\phi \| p_0)}$ regularizes this steering by ensuring $\smash{p_0^\phi}$ remains close to the original noise distribution $p_0$.  Next, we show that in contrast to Equation~\ref{eq:main_objective_direct_finetuning}, $\mathcal{L}_{\mathrm{noise}}$ can be made tractable.

\subsection{KL Divergence in Noise Space}
The resulting KL divergence term $D_{\text{KL}}(p_0^\phi \| p_0)$ is derived in detail in Appendix~\ref{app:theory}. The derivation involves the change of variables formula, simplification of Gaussian log-PDF terms, and an application of Stein's Lemma. This leads to the following expression for the KL divergence:
\begin{equation}
    D_{\text{KL}}(p_0^\phi \| p_0) = \mathbb{E}_{\bx_0 \sim p_0} [ \tfrac{1}{2} \|f_\phi(\bx_0)\|^2 + \text{Tr}(J_{f_\phi}(\bx_0)) - \log |\det(I + J_{f_\phi}(\bx_0))| ], \label{eq:kl_after_steins_lemma_noise}
\end{equation}
where $J_{f_\phi}(\bx_0)$ is the Jacobian of $f_\phi$ with respect to $\bx_0$.
Let $\mathcal{E}(A) \coloneqq \text{Tr}(A) - \log |\det(I + A)|$. Then Equation~\ref{eq:kl_after_steins_lemma_noise} can be rewritten as $D_{\text{KL}}(p_0^\phi \| p_0) = \mathbb{E}_{\bx_0 \sim p_0} [ \frac{1}{2} \|f_\phi(\bx_0)\|^2 + \mathcal{E}(J_{f_\phi}(\bx_0)) ]$.
To simplify this expression, we analyze the error term $\mathcal{E}(J_{f_\phi}(\bx_0))$. The following Theorem provides a bound on this term under a Lipschitz assumption on $f_\phi$.

\begin{theorem}[Bound on Log-Determinant Approximation Error]
\label{lemma:log_det_error}
Let $A = J_{f_\phi}(\bx_0)$ be the $d \times d$ Jacobian matrix of $f_\phi(\bx_0)$. Assume $f_\phi$ is $L$-Lipschitz continuous, such that its Lipschitz constant $L < 1$. This implies that the spectral radius $\rhoVal(A) \le L < 1$. Then, the error term $\mathcal{E}(A) = \Tr(A) - \log |\det(I + A)|$ is bounded by
\begin{equation}
    |\mathcal{E}(A)| \le d(-\log(1-L) - L).
\end{equation}
\end{theorem}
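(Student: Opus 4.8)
The plan is to reduce everything to the eigenvalues of $A$ and then to a single scalar power-series estimate. Write $\lambda_1,\dots,\lambda_d$ for the eigenvalues of $A$, counted with multiplicity. Since $A$ is real its spectrum is closed under conjugation, and the hypothesis gives $|\lambda_j| \le \rhoVal(A) \le L < 1$ for every $j$. Both quantities in $\mathcal{E}(A)$ are spectral: $\Tr(A) = \sum_j \lambda_j$ and $\det(I+A) = \prod_j (1+\lambda_j)$. Because $|\lambda_j| < 1$, each factor satisfies $\operatorname{Re}(1+\lambda_j) = 1 + \operatorname{Re}(\lambda_j) \ge 1 - L > 0$, so $\det(I+A) \neq 0$; in fact $\det(I+A) > 0$, since complex eigenvalues contribute conjugate pairs with product $|1+\lambda_j|^2 > 0$ and real ones contribute $1+\lambda_j > 0$. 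Hence the modulus is harmless and $\log|\det(I+A)| = \sum_j \log|1+\lambda_j|$.

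First I would introduce the scalar function $h(z) = z - \log(1+z)$ on the disk $|z| < 1$ with the principal branch of the logarithm. The expansion $\log(1+z) = \sum_{k\ge 1} \frac{(-1)^{k-1}}{k} z^k$ cancels the linear term, leaving $h(z) = \sum_{k\ge 2} \frac{(-1)^k}{k} z^k$, absolutely convergent for $|z| < 1$. Taking real parts and using $\operatorname{Re}\log(1+z) = \log|1+z|$, together with the fact that $\Tr(A) = \sum_j \lambda_j$ is real (so equals $\sum_j \operatorname{Re}\lambda_j$), I can identify $\mathcal{E}(A) = \operatorname{Re}\sum_{j} h(\lambda_j)$.

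Next I would interchange the (absolutely convergent) sums over $j$ and $k$ and recognize $\sum_j \lambda_j^k = \Tr(A^k)$, giving the clean representation $\mathcal{E}(A) = \operatorname{Re}\sum_{k\ge 2} \frac{(-1)^k}{k}\Tr(A^k)$. The bound then follows from the triangle inequality and the eigenvalue estimate $|\Tr(A^k)| = |\sum_j \lambda_j^k| \le \sum_j |\lambda_j|^k \le d\,L^k$:
\[
|\mathcal{E}(A)| \le \sum_{k\ge 2} \frac{1}{k}\,|\Tr(A^k)| \le d\sum_{k\ge 2}\frac{L^k}{k} = d\big(-\log(1-L) - L\big),
\]
where the last equality uses $\sum_{k\ge 1} L^k/k = -\log(1-L)$ minus the $k=1$ term $L$.

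The main obstacle is the bookkeeping around complex eigenvalues and the absolute value inside the logarithm, rather than any deep inequality. The two points to get right are (i) that $\det(I+A) > 0$, so the modulus can be dropped and the spectral identities are unambiguous, which rests on $\rhoVal(A) < 1$ pushing the spectrum of $I+A$ into the open right half-plane; and (ii) the interchange of summation, which is immediate once absolute convergence is noted, since $\sum_j\sum_{k\ge 2}\frac{1}{k}|\lambda_j|^k \le d(-\log(1-L)-L) < \infty$. An essentially equivalent but slicker route replaces the scalar series by the matrix identity $\log\det(I+A) = \Tr\log(I+A)$ with $\log(I+A) = \sum_{k\ge 1}\frac{(-1)^{k-1}}{k}A^k$ valid for $\rhoVal(A)<1$; I would present the eigenvalue version as the more elementary one and mention this alternative in passing.
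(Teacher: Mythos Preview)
Your proposal is correct and follows essentially the same approach as the paper: both arrive at the series representation $\mathcal{E}(A) = \sum_{k\ge 2}\frac{(-1)^k}{k}\Tr(A^k)$ and then bound it termwise via $|\Tr(A^k)| \le d L^k$. The paper jumps straight to the matrix identity $\log\det(I+A) = \sum_{k\ge 1}\frac{(-1)^{k-1}}{k}\Tr(A^k)$ (your ``slicker route''), while you derive it through scalar eigenvalue series; your handling of complex eigenvalues and the positivity of $\det(I+A)$ is actually more careful than the paper's one-line justification.
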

See Appendix~\ref{app:tractable_kl} for the full proof. Theorem~\ref{lemma:log_det_error} shows that if the Lipschitz constant $L$ of $f_\phi$ is sufficiently small (specifically, $L < 1$), the error term $|\mathcal{E}(A)|$ is bounded. For small $L$, $-\log(1-L) - L \approx L^2/2$, making the bound approximately $dL^2/2$. Thus, the expected error $\mathbb{E}_{\bx_0 \sim p_0}[\mathcal{E}(J_{f_\phi}(\bx_0))]$ becomes negligible if $L$ is kept small. Under this condition, we can approximate the KL divergence with
\begin{equation}
    D_{\text{KL}}(p_0^\phi \| p_0) \approx \mathbb{E}_{\bx_0 \sim p_0} [ \tfrac{1}{2} \|f_\phi(\bx_0)\|^2 ]. \label{eq:l2_kl_approximation_final_noise}
\end{equation}

This approximation simplifies the KL divergence term in our objective to a computationally tractable $L_2$ penalty on the magnitude of the noise modification $f_\phi(\bx_0)$. Substituting it into our initial noise modulation objective (Equation~\ref{eq:initial_noise_modulation_objective_base}), we arrive at the final loss to minimize
\begin{equation}
    \mathcal{L}_{\mathrm{noise}}(\phi) = \mathbb{E}_{\bx_0 \sim p_0} [\tfrac{1}{2}\|f_\phi(\bx_0)\|^2 - r(g_{\theta}(\bx_0 + f_\phi(\bx_0)))]. \label{eq:final_l2_objective_initial_noise_mod}
\end{equation}

\xhdr{Connection to test-time noise optimization.} Our proposed method addresses the same fundamental goal as Noise Optimization (Equation~\ref{eq:noise-opt}) of steering generation towards high-reward outputs while maintaining fidelity to the base distribution. However, instead of performing iterative optimization for each sample at inference time, we amortizes this optimization into a one-time post-training process. By learning the noise modulation network $f_\phi$, we effectively pre-computes a general policy for transforming any initial noise $\bx_0$. Consequently, steered generation with HyperNoise remains highly efficient at inference, requiring only a single forward pass through $f_\phi$ and then $g_\theta$.

\xhdr{Theoretical Justification via Data Processing Inequality.}
The KL divergence term $D_{\text{KL}}(p_0^\phi \| p_0)$ in our objective (Equation~\ref{eq:initial_noise_modulation_objective_base}) provides a principled way to regularize the output distribution in data space. The Data Processing Inequality (DPI)~\cite{Cover2006} states that for any function, such as our fixed generator $g_\theta$, the KL divergence between its output distributions is upper-bounded by the KL divergence between its input distributions. In our context, where $p_0^\phi = (T_\phi)_\sharp p_0$ is the distribution of modulated noise $\hat{\bx}_0 = T_\phi(\bx_0)$ and $p^{\text{base}} = (g_\theta)_\sharp p_0$ is the base output distribution, the DPI implies
\begin{equation}
D_{\text{KL}}(p_0^\phi \| p_0) \ge D_{\text{KL}}((g_\theta)_\sharp p_0^\phi \| (g_\theta)_\sharp p_0).
\label{eq:dpi_connection_lins}
\end{equation}
Thus, by minimizing $D_{\text{KL}}(p_0^\phi \| p_0)$ in the noise space, we effectively minimizes an upper bound on the KL divergence between the steered output distribution $(g_\theta)_\sharp p_0^\phi$ and the original base distribution $p^{\text{base}}$. This offers a theoretically grounded mechanism for controlling the deviation of the generated data distribution, complementing the empirical reward maximization, even when direct computation of data-space KL divergences (as in Equation~\ref{eq:main_objective_direct_finetuning}) is intractable.

\subsection{Effective Implementation}
\label{subsec:lins_implementation_algorithm}

To implement Noise Hypernetworks efficiently and ensure stable training, we adopt several key strategies for the noise modulation network $f_\phi$ and the training process, summarized in Algorithm~\ref{alg:lips}. Note that our training algorithm (Equation~\ref{eq:final_l2_objective_initial_noise_mod}) does not require target data samples from $p^\star$, $p^{base}$, nor $p_{data}$. It only requires: (1) base noise samples $\bx_0 \sim p_0$, (2) the fixed generator $g_\theta$, and (3) the reward function $r(\cdot)$. For conditional $f_\phi(\bx_0|c)$, it additionally requires the conditions $c$.

\begin{wrapfigure}{r}{0.55\textwidth}
  \vspace{-10pt}
  \begin{minipage}{\linewidth}
  \vspace{-1.5em}
  \begin{algorithm}[H]
    \caption{\textbf{HyperNoise}}
    \label{alg:lips}
    \begin{algorithmic}[1]
      \STATE {\bfseries Input:} $g_\theta$ (distilled generative Model), $r$ (reward fn), Optional $\mathcal{C} = \{c_i\}^N_{i=1}$ (condition dataset)
      \STATE Initialize Noise Hypernetwork $f_\phi(\cdot) = \mathbf{0}$ through LoRA weights $\phi$ applied on top of $g_\theta$
      \WHILE{training}
        \STATE Sample noise $\bx_0 \sim \mathcal{N}(0,\mathbf{I})$, $\textnormal{c} = \varnothing$
        \IF{$\mathcal{C}$}
          \STATE Sample condition $\ \textnormal{c} \sim \mathcal{C}$
        \ENDIF
        \STATE Predict modulated noise $\Delta{\bx}_0 = f_{\phi}(\bx_0, \textnormal{c})$
        \STATE Generate ${\bx}_1 = g_\theta(\bx_0 + \Delta{\bx}_0, \textnormal{c})$
        \STATE Compute Loss $\mathcal{L}_{\mathrm{noise}}(\phi) = \tfrac{1}{2}\|\Delta{\bx}_0\|^2 - r(\bx_1)$
        \STATE Gradient step on $\nabla_\phi \mathcal{L}_{\mathrm{noise}}(\phi)$
      \ENDWHILE
      \RETURN Noise Hypernetwork LoRA weights $\phi$
    \end{algorithmic}
  \end{algorithm}
  \end{minipage}
  \vspace{-10pt}
\end{wrapfigure}
\xhdr{Lightweight Noise Hypernetwork with LoRA.}
The noise modulation network $f_\phi$ is instantiated by reusing the architecture of the pre-trained generator $g_\theta$ and making it trainable via Low-Rank Adaptation (LoRA)~\cite{lora}. The original $g_\theta$ weights are frozen, and only the LoRA adapter parameters in $f_\phi$ are learned. This approach is parameter-efficient, reducing memory and computational overhead as we only need to keep $g_\theta$ in memory once. It also allows $f_\phi$ to inherit useful inductive biases from $g_\theta$'s architecture. For conditional models $g_\theta(\cdot|c)$, $f_\phi(\bx_0|c)$ can similarly leverage learned conditional representations by applying LoRA to conditioning pathways, e.g. the learned text-conditioning of a text-to-image model.

\xhdr{Initialization.}
We initialize $f_\phi$ such that its output $f_\phi(\cdot) = \mathbf{0}$. For LoRA, this is achieved by setting the second LoRA matrix (often denoted $B$) to zero. This ensures that initially $\smash{\hat{\bx}_0 = \bx_0 + f_\phi(\bx_0) \approx \bx_0}$, making $\smash{p_0^\phi \approx p_0}$. This is crucial for training stability and supports the validity of the $L_2$ approximation for $\smash{D_{\text{KL}}(p_0^\phi | p_0)}$ (Equation~\ref{eq:l2_kl_approximation_final_noise}) from the start of training. We modify the final layer of $f_\phi$ to output only the LoRA-generated perturbation, not adding to any frozen base weights such that at initialization $f_\phi(\cdot) = \mathbf{0}$, which significantly stabilizes training.

\section{Experiments}
Our experimental evaluation is designed to assess the efficacy of our objective for the popular setting of text-to-image (T2I) models. 
We benchmark the noise hypernetwork against established methods, primarily direct LoRA fine-tuning of the base generative model~\cite{alignprop}, and investigate its capacity to match or recover the performance gains typically associated with test-time scaling techniques like ReNO~\cite{reno}, but through a post-training approach. %
To clearly delineate these comparisons, we structure our experiments as follows: We first present an illustrative experiment employing a \emph{"redness reward"}. This controlled setting is designed to demonstrate the advantages of our training objective, particularly its ability to optimize for a target reward while mitigating divergence from the base model's learned data manifold $p^{\text{base}}$. Subsequently, we extend our evaluation to more complex and practical scenarios, focusing on aligning generative models with \emph{human-preference reward models}.

\begin{figure}[t]
    \centering 
    \includegraphics[width=1.0\textwidth]{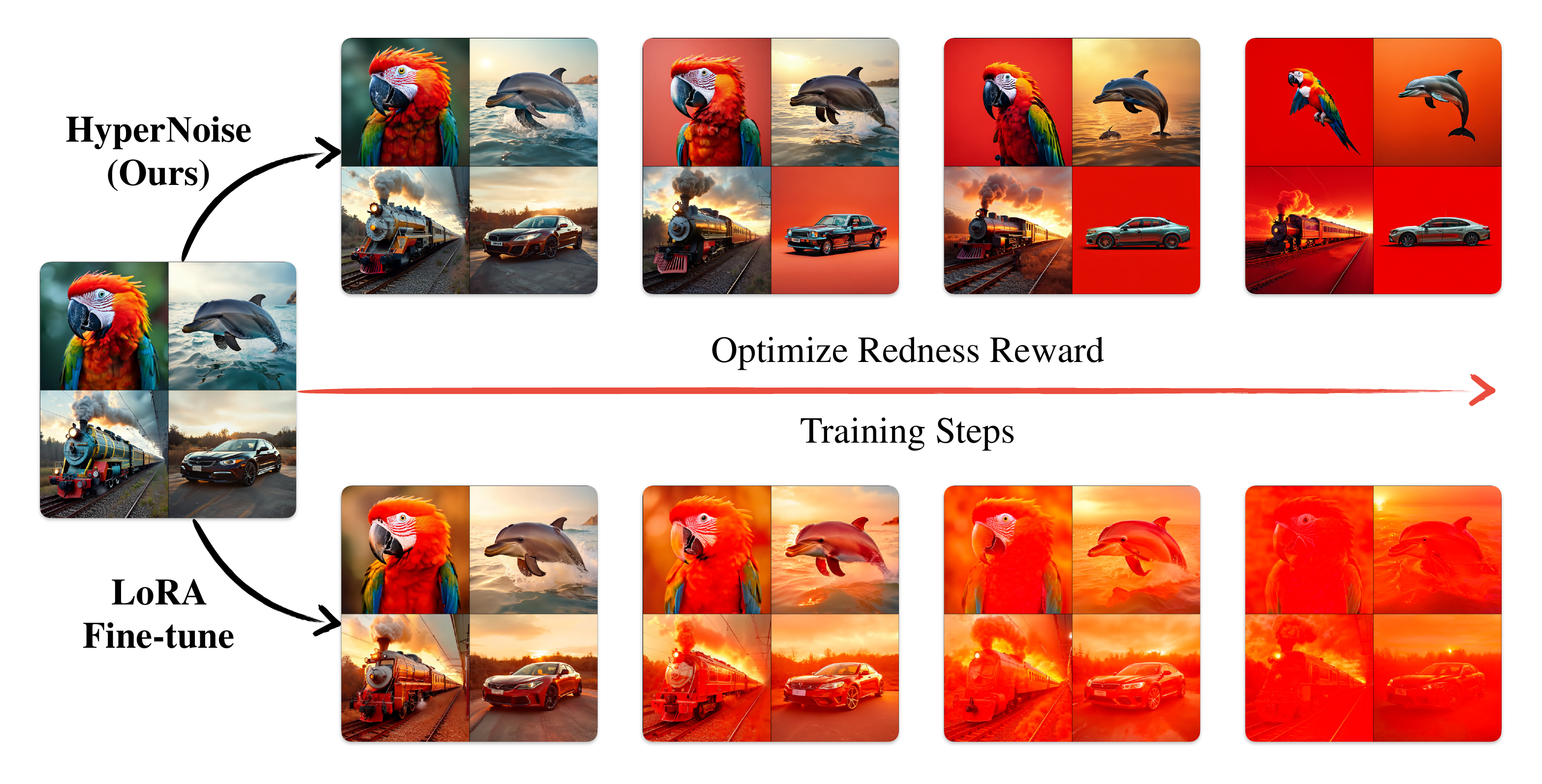}
    \caption{An illustrative example of optimizing  for learning the tilted distribution with an image redness reward. We show direct LoRA fine-tuning of SANA-Sprint~\cite{sanasprint} in comparison to training a noise hypernetwork with our proposed objective. Notably, when training with our objective, the model optimizes the desired reward while staying considerably closer to $p^{\text{base}}$, as showcased by the model not diverging from the image manifold, unlike in direct LoRA fine-tuning.}
    \label{fig:redness_illustration}
\end{figure}

\begin{wrapfigure}{r}{0.55\textwidth}
    \vspace{-4em}
    \centering
    \includegraphics[width=\linewidth]{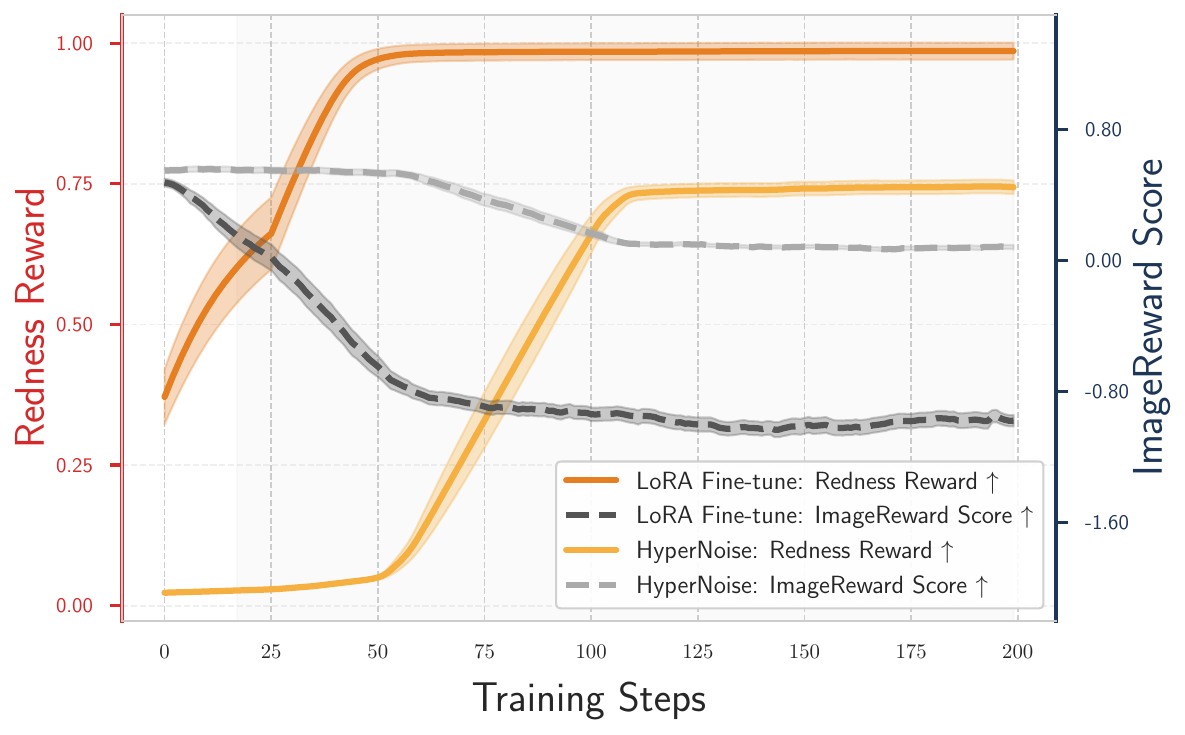}
    \caption{Trade-off between the redness reward objective and an image quality metric, ImageReward, for direct fine-tuning and Noise Hypernetworks. As opposed to direct fine-tuning, our proposed method optimizes the redness objective while not significantly dropping image quality as indicated by the ImageReward score.}
    \label{fig:redness_tradeoff_wrap}
    \vspace{-1em}
\end{wrapfigure}
\subsection{Redness Reward}
We begin our evaluation with the goal of learning the tilted distribution (Equation~\ref{eq:target_tilted_dist}) given a redness reward. This metric helps showcase the potential underlying issue of directly fine-tuning the generation model $g_\phi$ (a fine-tuned variant of the base model $g_\theta$). For this experiment, the redness reward $r(\mathbf{x})$ is defined as the difference between the red channel intensity and the average of the green and blue channel intensities: $r(\mathbf{x}) = \mathbf{x}^0 - \frac{1}{2}(\mathbf{x}^1 + \mathbf{x}^2)$, where $\mathbf{x}^i$ denotes the $i$-th color channel of the generated image $\mathbf{x}$ and is used to train the recent SANA-Sprint~\cite{sanasprint} model, for full details see Appendix~\ref{app:redness-details}.

The primary concern with directly fine-tuning $g_\phi$ to maximize a reward is the risk of significant deviation from the original data distribution $\smash{p^{\text{base}}}$. This deviation can lead to a high $\smash{D_{\mathrm{KL}}(p^\phi \| p^{\text{base}})}$, where $\smash{p^\phi}$ is the distribution induced by the fine-tuned model $g_\phi$. Such a divergence often manifests as a degradation in overall image quality or a loss of diversity, even if the target reward (e.g. redness) is achieved. Figure~\ref{fig:redness_tradeoff_wrap} quantitatively illustrates this trade-off by plotting the redness reward against a general image quality metric (ImageReward), comparing our Noise Hypernetwork approach with LoRA fine-tuning, while Figure~\ref{fig:redness_illustration} visually corroborates these results.

\begin{table*}
\centering
\setlength{\tabcolsep}{4pt}
\caption{\textbf{Quantitative Results on GenEval}. Our Noise Hypernetwork combined with (1) SD-Turbo~\cite{sdxlturbo}, (2) SANA-Sprint 0.6B~\cite{sanasprint}, and Flux-Schnell consistently improving results while maintaining few-step denoising, fast inference, and minimal memory overhead. Results from best-of-n sampling~\cite{imageselect}, ReNO~\cite{reno}, and prompt optimization~\cite{mañas2024improvingtexttoimageconsistencyautomatic,ashutosh2025llmsheartraining} are greyed out to provide a reference upper-bound in terms of applying optimization at inference. Prompt optimization $\dagger$ additionally requires a significant amount of calls to an LLM, either locally or through an API.}
\label{tab:geneval}
\resizebox{\textwidth}{!}{
\begin{tabular}
{lccc|ccccccc} %
\toprule
\textbf{Model} &
{\bf Params (B)} & %
{\bf Time (s) $\downarrow$} & %
{\bf Mean $\uparrow$ } &
{\bf Single$\uparrow$} &
{\bf Two$\uparrow$} &
{\bf Counting$\uparrow$} &
{\bf Colors$\uparrow$} &
{\bf Position$\uparrow$} &
{\bf Attribution$\uparrow$} \\
\midrule
SD v2.1~\cite{rombach22ldm} & 0.8 & 1.9 & 0.50 & 0.98 & 0.51 & 0.44 & 0.85 & 0.07 & 0.17 \\
SDXL~\cite{podell2023sdxl} & 2.6 & 6.9 & 0.55 & 0.98 & 0.74 & 0.39 & 0.85 & 0.15 & 0.23 \\
DPO-SDXL~\cite{diffusiondpo} & 2.6 & 6.9 & 0.59 & 0.99 & 0.84 & 0.49 & 0.87 & 0.13 & 0.24 \\
Hyper-SDXL~\cite{ren2024hypersd} & 2.6 & 0.3 & 0.56 & 1.00 & 0.76 & 0.43 & 0.87 & 0.10 & 0.21 \\
Flux-dev & 12.0 & 23.0 & 0.68 & 0.99 & 0.85 & 0.74 & 0.79 & 0.21 & 0.48 \\ %
SD3-Medium~\cite{sd3} & 2.0 & 4.4 & 0.70 & 1.00 & 0.90 & 0.72 & 0.87 & 0.31 & 0.66 \\
\midrule
SD-Turbo~\cite{sdxlturbo} & 0.8 & 0.2 & 0.49 & 0.99 & 0.51 & 0.38 & 0.85 & 0.07 & 0.14 \\
\rowcolor[HTML]{E6E6FA} \textbf{+ HyperNoise} & 1.1 & 0.3 & 0.57 & 0.99 & 0.65 & 0.50 & 0.89 & 0.14 & 0.22 \\ %
\textcolor{gray}{+ Prompt Optimization~\cite{mañas2024improvingtexttoimageconsistencyautomatic,ashutosh2025llmsheartraining}} & \textcolor{gray}{$0.8^\dagger$} & \textcolor{gray}{$95.0^\dagger$} & \textcolor{gray}{0.59} & \textcolor{gray}{0.99} & \textcolor{gray}{0.76} & \textcolor{gray}{0.53} & \textcolor{gray}{0.88} & \textcolor{gray}{0.10} & \textcolor{gray}{0.28} \\
\textcolor{gray}{+ Best-of-N~\cite{imageselect}} & \textcolor{gray}{0.8} & \textcolor{gray}{10.0} & \textcolor{gray}{0.60} & \textcolor{gray}{1.00} & \textcolor{gray}{0.78} & \textcolor{gray}{0.55} & \textcolor{gray}{0.88} & \textcolor{gray}{0.10} & \textcolor{gray}{0.29} \\
\textcolor{gray}{+ ReNO~\cite{reno}} & \textcolor{gray}{0.8} & \textcolor{gray}{20.0} & \textcolor{gray}{0.63} & \textcolor{gray}{1.00} & \textcolor{gray}{0.84} & \textcolor{gray}{0.60} & \textcolor{gray}{0.90} & \textcolor{gray}{0.11} & \textcolor{gray}{0.36} \\
\midrule
SANA-Sprint~\cite{sanasprint} & 0.6 & 0.2 & 0.70 & 1.00 & 0.80 & 0.64 & 0.86 & 0.41 & 0.51 \\
\rowcolor[HTML]{E6E6FA}\textbf{+ HyperNoise} & 0.9 & 0.3 & 0.75 & 1.00 & 0.88 & 0.71 & 0.85 & 0.51 & 0.55 \\
\textcolor{gray}{+ Prompt Optimization~\cite{mañas2024improvingtexttoimageconsistencyautomatic,ashutosh2025llmsheartraining}} & \textcolor{gray}{$0.6^\dagger$} & \textcolor{gray}{$95.0^\dagger$} & \textcolor{gray}{0.75} & \textcolor{gray}{0.99} & \textcolor{gray}{0.91} & \textcolor{gray}{0.82} & \textcolor{gray}{0.89} & \textcolor{gray}{0.36} & \textcolor{gray}{0.56} \\
\textcolor{gray}{+ Best-of-N~\cite{imageselect}} & \textcolor{gray}{0.6} & \textcolor{gray}{15.0} & \textcolor{gray}{0.79} & \textcolor{gray}{0.99} & \textcolor{gray}{0.92} & \textcolor{gray}{0.72} & \textcolor{gray}{0.91} & \textcolor{gray}{0.53} & \textcolor{gray}{0.65} \\
\textcolor{gray}{+ ReNO~\cite{reno}} & \textcolor{gray}{0.6} & \textcolor{gray}{30.0} & \textcolor{gray}{0.81} & \textcolor{gray}{0.99} & \textcolor{gray}{0.93} & \textcolor{gray}{0.74} & \textcolor{gray}{0.92} & \textcolor{gray}{0.60} & \textcolor{gray}{0.67} \\

\midrule
FLUX-schnell (4-step) & 12.0 & 0.7 & 0.68 & 0.99 & 0.88 & 0.66 & 0.78 & 0.27 & 0.48\\
\rowcolor[HTML]{E6E6FA} \textbf{+ HyperNoise} & 13.0 & 0.9 & 0.72 & 0.99 & 0.93 & 0.67 & 0.83 & 0.30 & 0.59 \\ %
\textcolor{gray}{+ ReNO~\cite{reno}} & \textcolor{gray}{12.0} & \textcolor{gray}{40.0} & \textcolor{gray}{0.76} & \textcolor{gray}{0.99} & \textcolor{gray}{0.94} & \textcolor{gray}{0.70} & \textcolor{gray}{0.86} & \textcolor{gray}{0.39} & \textcolor{gray}{0.65} \\
\bottomrule
\end{tabular}
}
\end{table*}

\begin{figure}[t]
    \centering
    \includegraphics[width=\textwidth]{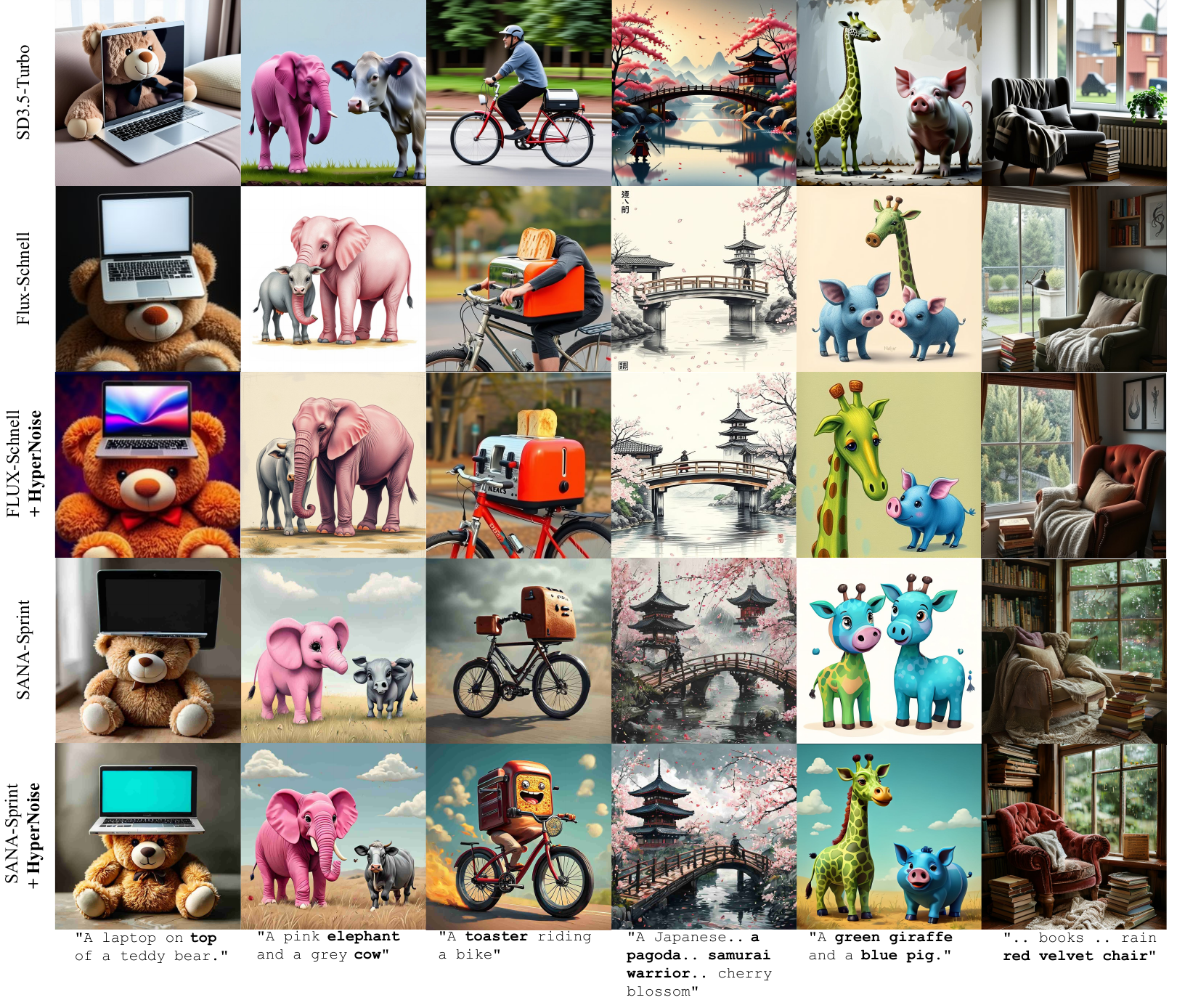}
    \caption{Qualitative comparison our proposed noise hypernetwork with popular distilled models such as Flux-Schnell, SD3.5-Turbo, SANA-Sprint for 4-step generation. Both SANA-Sprint and FLUX-Schnell share the initial noise for the base and HyperNoise generation. 
    }
    \label{fig:comparisons}
\end{figure}
\subsection{Human-preference Reward Models}
\xhdr{Implementation Details.} We conduct our primary experiments on aligning text-to-image models with human preferences using SD-Turbo~\cite{sdxlturbo}, SANA-Sprint~\cite{sanasprint} and FLUX-Schnell. Notably, SANA-Sprint and FLUX-Schnell exhibit strong prompt-following capabilities competitive with proprietary models, making them robust base models for our evaluations. For the reward signal $r(\cdot)$ essential to our objective (Equation~\ref{eq:final_l2_objective_initial_noise_mod}) and for the direct fine-tuning baseline, we utilize the exact same composition of reward models proposed in ReNO~\cite{reno} consisting of ImageReward~\cite{imagereward}, HPSv2.1~\cite{hpsv2}, Pickscore~\cite{pickscore}, and a CLIP-score. For the noise hypernetwork, we use a LoRA~\cite{lora} module on the base distilled model with the proposed initialization as described in Section~\ref{subsec:lins_implementation_algorithm}. Training for the noise hypernetwork is performed using \textasciitilde 70k prompts from Pick-a-Picv2~\cite{pickscore}, T2I-Compbench train set~\cite{huang2023t2icompbench}, and Attribute Binding (ABC-6K)~\cite{feng2023structured} prompts. Our evaluations of the trained models are performed on GenEval~\cite{ghosh2023geneval}, ensuring that the training and evaluation prompts do not have any overlap, measuring the generalization of the noise hypernetwork to unseen prompts. We mainly compare HyperNoise with three different test-time techniques: Best-of-N sampling~\cite{imageselect,ma2025inference}, ReNO~\cite{reno}, and LLM-based prompt optimization~\cite{mañas2024improvingtexttoimageconsistencyautomatic,ashutosh2025llmsheartraining}. As detailed in Table~\ref{tab:geneval}, all of these incur significantly increased computational costs at test-time, ranging from 33× to 300× slower inference compared to HyperNoise, making them impractical for large-scale deployment where efficiency is paramount. Full experimental details are provided in Appendix~\ref{app:human-preference}. 
 
\xhdr{Quantitative Results.} We present our main quantitative results on the GenEval benchmark in Table~\ref{tab:geneval}. Our Noise Hypernetwork training scheme consistently yields significant performance gains across all model scales while maintaining near-baseline inference costs. When applied to SD-Turbo, our method nearly recovers most of the improvements from inference-time noise optimization, achieving an overall GenEval performance of 0.57 that even surpasses SDXL (which has 2× more parameters and 25× NFEs), clearly highlighting the benefits from our noise hypernetwork training. With SANA-Sprint, we observe consistent improvements (0.75 vs 0.70) over the base model, achieving the same performance as LLM-based prompt optimization while being 300× faster, and recovering about half of the performance gains achieved by ReNO with minimal GPU memory overhead. Notably, we observe similar trends for the larger 12B parameter FLUX-Schnell, where we again recover substantial performance gains (0.71 vs 0.68) while maintaining the efficiency advantages that make our approach practical for real-world deployment. The consistent efficiency gains across model scales demonstrate that our approach successfully amortizes the optimization cost during training, enabling high-quality generation without the prohibitive test-time computational overhead of alternative methods.

\begin{wraptable}{r}{0.4\textwidth}
    \vspace{-1.2em}
    \centering
    \caption{Mean GenEval results for SANA-Sprint highlighting generalization across inference timesteps of our Noise Hypernetwork and failure of direct LoRA fine-tuning.}
    \label{tab:sana_timesteps_generalization}
    \begin{adjustbox}{width=0.4\textwidth}
    \begin{tabular}{@{}lcc@{}}
        \toprule
        \textbf{SANA-Sprint}~\cite{sanasprint} & \textbf{NFEs} & \textbf{GenEval Mean$\uparrow$}\\
        \midrule
        One-step & 1 & 0.70 \\
        + LoRA fine-tune~\cite{alignprop,clark2023directly,imagereward} & 1 & 0.67\\
        \textbf{+ HyperNoise} & 2 & \textbf{0.75} \\
        \midrule
        Two-step & 2 & 0.72 \\
        + LoRA fine-tune~\cite{alignprop,clark2023directly,imagereward} & 2 & 0.66\\
        \textbf{+ HyperNoise} & 3 & \textbf{0.76} \\
        \midrule
        Four-step & 4 & 0.73 \\
        + LoRA fine-tune~\cite{alignprop,clark2023directly,imagereward} & 4 & 0.62\\
        \textbf{+ HyperNoise} & 5 & \textbf{0.77} \\
        \bottomrule
    \end{tabular}
    \label{tab:multistep}
    \end{adjustbox}
    \vspace{-1em}
\end{wraptable}
\xhdr{Superiority over Direct Fine-tuning and Multi-Step Generalization.} In Tab.~\ref{tab:multistep}, we show the generalization of our training on multi-step inference despite being trained only with one-step generation. We obtain consistent improvements over SANA-Sprint for one, two, and four step generation. Notably, our model with one-step generation noticeably outperforms SANA-Sprint with 4 steps. We also illustrate how direct fine-tuning of the base model with the \emph{same} reward objective can lead to significantly worse results, highlighting the necessity of preventing "reward-hacking" in a principled fashion. We visualize this in Appendix~\ref{app:direct-finetuning}, where we observe similar patterns as previous works for reward-hacking~\cite{li2024textcraftor,uehara2024finetuningcontinuoustimediffusionmodels,clark2023directly}.  

\xhdr{Qualitative Results.} We illustrate examples of generated images in Fig.~\ref{fig:comparisons} showing our method applied to both SANA-Sprint and FLUX-Schnell, alongside comparisons to SD3.5-Turbo. Our noise hypernetwork demonstrates consistent improvements across both base models. For SANA-Sprint, the improvements are substantial: we observe both correction of generation artifacts and significantly enhanced prompt following for complex compositional requests. When applied to the already high-quality FLUX-Schnell, our method still provides noticeable improvements in detail quality and prompt adherence, demonstrating that our approach can enhance even strong base models while maintaining the efficiency advantages essential for practical deployment.

\section{Related Work}

\xhdr{Test-Time Scaling.} The paradigm of test-time scaling has yielded remarkable breakthroughs, with models allocating additional computation during inference to solve increasingly complex problems. In language models, this has manifested through process reward models~\cite{prm1,prm2,snell2024scaling} and reinforcement learning from verifiable rewards~\cite{rlvr,s1}, leading to systems like o1~\cite{o1} and DeepSeek-R1~\cite{r1}. Beyond scaling denoising steps in diffusion models, test-time techniques improve generation quality by finding better initial noise or refining intermediate states during inference, often guided by pre-trained reward models. These methods fall into two categories: search-based approaches~\cite{ma2025inference,uehara2025rewardguidediterativerefinementdiffusion,uehara2025inferencetimealignmentdiffusionmodels,imageselect} that evaluate multiple candidates, and optimization-based approaches~\cite{doodl,dflow,novack2024dittodiffusioninferencetimetoptimization,dno,initno,tang2024inference} that iteratively refine noise or latents through gradient descent. Although both strategies achieve significant quality improvements, they introduce substantial computational overhead, with generation times frequently exceeding several minutes per image.

\xhdr{Aligning Diffusion Models with Rewards.} Reward models~\cite{imagereward,hps,hpsv2,pickscore,mps} have been effectively used to directly fine-tune diffusion models using reinforcement learning~\citep{ddpo,dpok,prdp,rldiffusion1,rldiffusion2} or direct reward fine-tuning~\citep{lee2023aligning,li2024textcraftor,alignprop,imagereward,clark2023directly,prabhudesai2024video,domingo2024adjoint,jena2024elucidating}. Alternatively, Direct Preference Optimization (DPO)~\cite{dpo,diffusiondpo,diffusionkto, mapo,rankdpo} learns from paired comparisons rather than absolute rewards. A particular instance of reward fine-tuning \cite{uehara2024finetuningcontinuoustimediffusionmodels,tang2024finetuningdiffusionmodelsstochastic,domingo2024adjoint} analyzes learning the reward-tilted distribution through stochastic optimal control. \citet{uehara2024finetuningcontinuoustimediffusionmodels} fine-tune continuous-time diffusion models by jointly optimizing both the drift term and initial noise distribution, but their SDE-based formulation requires continuous-time dynamics and backpropagation through the full sampling process, making it computationally expensive and inapplicable to step-distilled models. For distilled models, concurrent work~\cite{rlcm,pso,jia2024reward} has explored preference tuning, though without the theoretical foundation for sampling from the target-tilted distribution that our approach provides. \citet{wagenmaker2025steeringdiffusionpolicylatent} apply similar noise-space optimization principles to diffusion policies in robotic control, demonstrating efficient adaptation while preserving pretrained capabilities across diverse domains.

\xhdr{Hypernetworks.} Auxiliary models~\cite{ha2016hypernetworks} that predict parameters of task-specific models have been used for vision~\cite{alaluf2022hyperstyle,hypernetfield} and language tasks~\cite{hyperlm1,hyperlm2,hyperlm3}. For generative models, they have been used to generate weights through diffusion~\cite{hyperdiffusion,nndiffusion} and to speed up personalization~\cite{hyperdreambooth,alaluf2022hyperstyle}. NoiseRefine~\cite{noiserefine} and Golden Noise~\cite{goldennoise} train hypernetworks to predict initial noise to replace classifier-free guidance or find reliable generations by selecting 'ground-truth' noise pairs as supervision, as opposed to the end-to-end training in our framework.   Work on diffusion priors~\cite{eyring2024unbalancedness,bartosh2025neuralflowdiffusionmodels,lee2022priorgradimprovingconditionaldenoising} also adapts the noise distribution, but these approaches modify the training process rather than enabling post-hoc adaptation of pre-trained models. Concurrently, \citet{venkatraman2025outsourced} explore sampling from reward-tilted distributions for arbitrary generators, but our work demonstrates this approach at scale with comprehensive evaluation across multiple model architectures and unseen prompt distributions.

\section{Conclusion}
In this work we provide fresh perspective for post-training diffusion models through the introduction of a noise prediction strategy. Our principled training objective coupled with the efficient training scheme is able to achieve a meaningful improvements in performance across multiple models while avoiding `reward-hacking`. We hope that our efficient and effective solution for aligning diffusion models with downstream objectives finds use across a wide variety of domains and use cases, especially in cases where test-time optimization would be prohibitively expensive. 

\xhdr{Limitations.} Preference-tuning diffusion models heavily relies on strong pre-trained base models and meaningful reward signals. While constant improvements are made to develop better pre-trained base models, specific focus should be devoted to improving reward models that can give meaningful feedback on a variety of aspects that are important for high-quality generation.

\section*{Acknowledgements}
This work was partially funded by the ERC (853489 - DEXIM) and the Alfried Krupp von Bohlen und Halbach Foundation, which we thank for their generous support. Shyamgopal Karthik thanks the International Max Planck Research School for Intelligent Systems (IMPRS-IS) for support. Luca Eyring would like to thank the European Laboratory for Learning and Intelligent Systems (ELLIS) PhD program for support.

{
\bibliographystyle{plainnat}
\bibliography{main}

\begin{thebibliography}{104}
\providecommand{\natexlab}[1]{#1}
\providecommand{\url}[1]{\texttt{#1}}
\expandafter\ifx\csname urlstyle\endcsname\relax
  \providecommand{\doi}[1]{doi: #1}\else
  \providecommand{\doi}{doi: \begingroup \urlstyle{rm}\Url}\fi

\bibitem[Ahn et~al.(2024)Ahn, Kang, Lee, Min, Kim, Jang, Cho, Paul, Kim, Cha, et~al.]{noiserefine}
Donghoon Ahn, Jiwon Kang, Sanghyun Lee, Jaewon Min, Minjae Kim, Wooseok Jang, Hyoungwon Cho, Sayak Paul, SeonHwa Kim, Eunju Cha, et~al.
\newblock A noise is worth diffusion guidance.
\newblock \emph{arXiv preprint arXiv:2412.03895}, 2024.

\bibitem[Alaluf et~al.(2022)Alaluf, Tov, Mokady, Gal, and Bermano]{alaluf2022hyperstyle}
Yuval Alaluf, Omer Tov, Ron Mokady, Rinon Gal, and Amit Bermano.
\newblock Hyperstyle: Stylegan inversion with hypernetworks for real image editing.
\newblock In \emph{CVPR}, 2022.

\bibitem[Albergo and Vanden-Eijnden(2023)]{rf3}
Michael~S Albergo and Eric Vanden-Eijnden.
\newblock Building normalizing flows with stochastic interpolants.
\newblock In \emph{ICLR}, 2023.

\bibitem[Ashutosh et~al.(2025)Ashutosh, Gandelsman, Chen, Misra, and Girdhar]{ashutosh2025llmsheartraining}
Kumar Ashutosh, Yossi Gandelsman, Xinlei Chen, Ishan Misra, and Rohit Girdhar.
\newblock Llms can see and hear without any training, 2025.
\newblock URL \url{https://arxiv.org/abs/2501.18096}.

\bibitem[Bartosh et~al.(2025)Bartosh, Vetrov, and Naesseth]{bartosh2025neuralflowdiffusionmodels}
Grigory Bartosh, Dmitry Vetrov, and Christian~A. Naesseth.
\newblock Neural flow diffusion models: Learnable forward process for improved diffusion modelling, 2025.
\newblock URL \url{https://arxiv.org/abs/2404.12940}.

\bibitem[Ben-Hamu et~al.(2024)Ben-Hamu, Puny, Gat, Karrer, Singer, and Lipman]{dflow}
Heli Ben-Hamu, Omri Puny, Itai Gat, Brian Karrer, Uriel Singer, and Yaron Lipman.
\newblock D-flow: Differentiating through flows for controlled generation.
\newblock In \emph{ICML}, 2024.

\bibitem[Bhatia and Dangel(2024)]{bhatia2024lowering}
Samarth Bhatia and Felix Dangel.
\newblock Lowering pytorch's memory consumption for selective differentiation.
\newblock 2024.

\bibitem[Black et~al.(2024)Black, Janner, Du, Kostrikov, and Levine]{ddpo}
Kevin Black, Michael Janner, Yilun Du, Ilya Kostrikov, and Sergey Levine.
\newblock Training diffusion models with reinforcement learning.
\newblock In \emph{ICLR}, 2024.

\bibitem[Chefer et~al.(2023)Chefer, Alaluf, Vinker, Wolf, and Cohen-Or]{chefer2023attendandexcite}
Hila Chefer, Yuval Alaluf, Yael Vinker, Lior Wolf, and Daniel Cohen-Or.
\newblock Attend-and-excite: Attention-based semantic guidance for text-to-image diffusion models.
\newblock In \emph{SIGGRAPH}, 2023.

\bibitem[Chen et~al.(2024)Chen, Wang, Wu, Liao, Sun, Yan, and Lin]{rldiffusion2}
Chaofeng Chen, Annan Wang, Haoning Wu, Liang Liao, Wenxiu Sun, Qiong Yan, and Weisi Lin.
\newblock Enhancing diffusion models with text-encoder reinforcement learning.
\newblock In \emph{ECCV}, 2024.

\bibitem[Chen et~al.(2025)Chen, Xue, Zhao, Yu, Paul, Chen, Cai, Xie, and Han]{sanasprint}
Junsong Chen, Shuchen Xue, Yuyang Zhao, Jincheng Yu, Sayak Paul, Junyu Chen, Han Cai, Enze Xie, and Song Han.
\newblock Sana-sprint: One-step diffusion with continuous-time consistency distillation.
\newblock \emph{arXiv preprint arXiv:2503.09641}, 2025.

\bibitem[Clark et~al.(2024)Clark, Vicol, Swersky, and Fleet]{clark2023directly}
Kevin Clark, Paul Vicol, Kevin Swersky, and David~J Fleet.
\newblock Directly fine-tuning diffusion models on differentiable rewards.
\newblock In \emph{ICLR}, 2024.

\bibitem[Cover and Thomas(2006)]{Cover2006}
Thomas~M. Cover and Joy~A. Thomas.
\newblock \emph{Elements of Information Theory 2nd Edition (Wiley Series in Telecommunications and Signal Processing)}.
\newblock Wiley-Interscience, July 2006.
\newblock ISBN 0471241954.

\bibitem[Deng et~al.(2024)Deng, Wang, Wei, Grundmann, and Hou]{prdp}
Fei Deng, Qifei Wang, Wei Wei, Matthias Grundmann, and Tingbo Hou.
\newblock Prdp: Proximal reward difference prediction for large-scale reward finetuning of diffusion models.
\newblock In \emph{CVPR}, 2024.

\bibitem[Domingo-Enrich et~al.(2024)Domingo-Enrich, Drozdzal, Karrer, and Chen]{domingo2024adjoint}
Carles Domingo-Enrich, Michal Drozdzal, Brian Karrer, and Ricky~TQ Chen.
\newblock Adjoint matching: Fine-tuning flow and diffusion generative models with memoryless stochastic optimal control.
\newblock \emph{arXiv preprint arXiv:2409.08861}, 2024.

\bibitem[Erko{\c{c}} et~al.(2023)Erko{\c{c}}, Ma, Shan, Nie{\ss}ner, and Dai]{hyperdiffusion}
Ziya Erko{\c{c}}, Fangchang Ma, Qi~Shan, Matthias Nie{\ss}ner, and Angela Dai.
\newblock Hyperdiffusion: Generating implicit neural fields with weight-space diffusion.
\newblock In \emph{ICCV}, 2023.

\bibitem[Esser et~al.(2024)Esser, Kulal, Blattmann, Entezari, M{\"u}ller, Saini, Levi, Lorenz, Sauer, Boesel, et~al.]{sd3}
Patrick Esser, Sumith Kulal, Andreas Blattmann, Rahim Entezari, Jonas M{\"u}ller, Harry Saini, Yam Levi, Dominik Lorenz, Axel Sauer, Frederic Boesel, et~al.
\newblock Scaling rectified flow transformers for high-resolution image synthesis.
\newblock \emph{arXiv preprint arXiv:2403.03206}, 2024.

\bibitem[Eyring et~al.(2024{\natexlab{a}})Eyring, Karthik, Roth, Dosovitskiy, and Akata]{reno}
Luca Eyring, Shyamgopal Karthik, Karsten Roth, Alexey Dosovitskiy, and Zeynep Akata.
\newblock Reno: Enhancing one-step text-to-image models through reward-based noise optimization.
\newblock In \emph{NeurIPS}, 2024{\natexlab{a}}.

\bibitem[Eyring et~al.(2024{\natexlab{b}})Eyring, Klein, Uscidda, Palla, Kilbertus, Akata, and Theis]{eyring2024unbalancedness}
Luca Eyring, Dominik Klein, Th{\'e}o Uscidda, Giovanni Palla, Niki Kilbertus, Zeynep Akata, and Fabian~J Theis.
\newblock Unbalancedness in neural monge maps improves unpaired domain translation.
\newblock In \emph{The Twelfth International Conference on Learning Representations}, 2024{\natexlab{b}}.
\newblock URL \url{https://openreview.net/forum?id=2UnCj3jeao}.

\bibitem[Fan et~al.(2023)Fan, Watkins, Du, Liu, Ryu, Boutilier, Abbeel, Ghavamzadeh, Lee, and Lee]{dpok}
Ying Fan, Olivia Watkins, Yuqing Du, Hao Liu, Moonkyung Ryu, Craig Boutilier, Pieter Abbeel, Mohammad Ghavamzadeh, Kangwook Lee, and Kimin Lee.
\newblock Reinforcement learning for fine-tuning text-to-image diffusion models.
\newblock \emph{NeurIPS}, 2023.

\bibitem[Feng et~al.(2023)Feng, He, Fu, Jampani, Akula, Narayana, Basu, Wang, and Wang]{feng2023structured}
Weixi Feng, Xuehai He, Tsu-Jui Fu, Varun Jampani, Arjun~Reddy Akula, Pradyumna Narayana, Sugato Basu, Xin~Eric Wang, and William~Yang Wang.
\newblock Training-free structured diffusion guidance for compositional text-to-image synthesis.
\newblock In \emph{ICLR}, 2023.

\bibitem[Ghosh et~al.(2023)Ghosh, Hajishirzi, and Schmidt]{ghosh2023geneval}
Dhruba Ghosh, Hanna Hajishirzi, and Ludwig Schmidt.
\newblock Geneval: An object-focused framework for evaluating text-to-image alignment.
\newblock In \emph{NeurIPS}, 2023.

\bibitem[gil Lee et~al.(2022)gil Lee, Kim, Shin, Tan, Liu, Meng, Qin, Chen, Yoon, and Liu]{lee2022priorgradimprovingconditionaldenoising}
Sang gil Lee, Heeseung Kim, Chaehun Shin, Xu~Tan, Chang Liu, Qi~Meng, Tao Qin, Wei Chen, Sungroh Yoon, and Tie-Yan Liu.
\newblock Priorgrad: Improving conditional denoising diffusion models with data-dependent adaptive prior, 2022.
\newblock URL \url{https://arxiv.org/abs/2106.06406}.

\bibitem[Guo et~al.(2025)Guo, Yang, Zhang, Song, Zhang, Xu, Zhu, Ma, Wang, Bi, et~al.]{r1}
Daya Guo, Dejian Yang, Haowei Zhang, Junxiao Song, Ruoyu Zhang, Runxin Xu, Qihao Zhu, Shirong Ma, Peiyi Wang, Xiao Bi, et~al.
\newblock Deepseek-r1: Incentivizing reasoning capability in llms via reinforcement learning.
\newblock \emph{arXiv preprint arXiv:2501.12948}, 2025.

\bibitem[Guo et~al.(2024)Guo, Liu, Cui, Li, Yang, and Huang]{initno}
Xiefan Guo, Jinlin Liu, Miaomiao Cui, Jiankai Li, Hongyu Yang, and Di~Huang.
\newblock Initno: Boosting text-to-image diffusion models via initial noise optimization.
\newblock In \emph{CVPR}, 2024.

\bibitem[Ha et~al.(2016)Ha, Dai, and Le]{ha2016hypernetworks}
David Ha, Andrew Dai, and Quoc~V Le.
\newblock Hypernetworks.
\newblock \emph{arXiv preprint arXiv:1609.09106}, 2016.

\bibitem[Hedlin et~al.(2024)Hedlin, Hayat, Porikli, Yi, and Mahajan]{hypernetfield}
Eric Hedlin, Munawar Hayat, Fatih Porikli, Kwang~Moo Yi, and Shweta Mahajan.
\newblock Hypernet fields: Efficiently training hypernetworks without ground truth by learning weight trajectories.
\newblock \emph{arXiv preprint arXiv:2412.17040}, 2024.

\bibitem[Hessel et~al.(2022)Hessel, Holtzman, Forbes, Bras, and Choi]{hessel2022clipscore}
Jack Hessel, Ari Holtzman, Maxwell Forbes, Ronan~Le Bras, and Yejin Choi.
\newblock Clipscore: A reference-free evaluation metric for image captioning, 2022.

\bibitem[Ho et~al.(2020)Ho, Jain, and Abbeel]{ho2020denoising}
Jonathan Ho, Ajay Jain, and Pieter Abbeel.
\newblock Denoising diffusion probabilistic models.
\newblock In \emph{NeurIPS}, 2020.

\bibitem[Hong et~al.(2024)Hong, Paul, Lee, Rasul, Thorne, and Jeong]{mapo}
Jiwoo Hong, Sayak Paul, Noah Lee, Kashif Rasul, James Thorne, and Jongheon Jeong.
\newblock Margin-aware preference optimization for aligning diffusion models without reference.
\newblock \emph{arXiv preprint arXiv:2406.06424}, 2024.

\bibitem[Hu et~al.(2022)Hu, yelong shen, Wallis, Allen-Zhu, Li, Wang, Wang, and Chen]{lora}
Edward~J Hu, yelong shen, Phillip Wallis, Zeyuan Allen-Zhu, Yuanzhi Li, Shean Wang, Lu~Wang, and Weizhu Chen.
\newblock Lo{RA}: Low-rank adaptation of large language models.
\newblock In \emph{ICLR}, 2022.

\bibitem[Hu et~al.(2024)Hu, Wang, Fang, Fu, Cheng, and Yu]{ella}
Xiwei Hu, Rui Wang, Yixiao Fang, Bin Fu, Pei Cheng, and Gang Yu.
\newblock Ella: Equip diffusion models with llm for enhanced semantic alignment.
\newblock \emph{arXiv preprint arXiv:2403.05135}, 2024.

\bibitem[Huang et~al.(2023)Huang, Sun, Xie, Li, and Liu]{huang2023t2icompbench}
Kaiyi Huang, Kaiyue Sun, Enze Xie, Zhenguo Li, and Xihui Liu.
\newblock T2i-compbench: A comprehensive benchmark for open-world compositional text-to-image generation.
\newblock In \emph{NeurIPS}, 2023.

\bibitem[Ilharco et~al.(2021)Ilharco, Wortsman, Wightman, Gordon, Carlini, Taori, Dave, Shankar, Namkoong, Miller, Hajishirzi, Farhadi, and Schmidt]{openclip}
Gabriel Ilharco, Mitchell Wortsman, Ross Wightman, Cade Gordon, Nicholas Carlini, Rohan Taori, Achal Dave, Vaishaal Shankar, Hongseok Namkoong, John Miller, Hannaneh Hajishirzi, Ali Farhadi, and Ludwig Schmidt.
\newblock Openclip, July 2021.
\newblock URL \url{https://doi.org/10.5281/zenodo.5143773}.

\bibitem[Ivison et~al.(2022)Ivison, Bhagia, Wang, Hajishirzi, and Peters]{hyperlm2}
Hamish Ivison, Akshita Bhagia, Yizhong Wang, Hannaneh Hajishirzi, and Matthew Peters.
\newblock Hint: hypernetwork instruction tuning for efficient zero-\& few-shot generalisation.
\newblock \emph{arXiv preprint arXiv:2212.10315}, 2022.

\bibitem[Jaech et~al.(2024)Jaech, Kalai, Lerer, Richardson, El-Kishky, Low, Helyar, Madry, Beutel, Carney, et~al.]{o1}
Aaron Jaech, Adam Kalai, Adam Lerer, Adam Richardson, Ahmed El-Kishky, Aiden Low, Alec Helyar, Aleksander Madry, Alex Beutel, Alex Carney, et~al.
\newblock Openai o1 system card.
\newblock \emph{arXiv preprint arXiv:2412.16720}, 2024.

\bibitem[Jena et~al.(2024)Jena, Taghibakhshi, Jain, Shen, Tajbakhsh, and Vahdat]{jena2024elucidating}
Rohit Jena, Ali Taghibakhshi, Sahil Jain, Gerald Shen, Nima Tajbakhsh, and Arash Vahdat.
\newblock Elucidating optimal reward-diversity tradeoffs in text-to-image diffusion models.
\newblock \emph{arXiv preprint arXiv:2409.06493}, 2024.

\bibitem[Jia et~al.(2024)Jia, Nan, Zhao, and Liu]{jia2024reward}
Zhiwei Jia, Yuesong Nan, Huixi Zhao, and Gengdai Liu.
\newblock Reward fine-tuning two-step diffusion models via learning differentiable latent-space surrogate reward.
\newblock \emph{arXiv preprint arXiv:2411.15247}, 2024.

\bibitem[Karras et~al.(2022)Karras, Aittala, Aila, and Laine]{karras2022elucidating}
Tero Karras, Miika Aittala, Timo Aila, and Samuli Laine.
\newblock Elucidating the design space of diffusion-based generative models.
\newblock In \emph{NeurIPS}, 2022.

\bibitem[Karthik et~al.(2023)Karthik, Roth, Mancini, and Akata]{imageselect}
Shyamgopal Karthik, Karsten Roth, Massimiliano Mancini, and Zeynep Akata.
\newblock If at first you don't succeed, try, try again: Faithful diffusion-based text-to-image generation by selection.
\newblock \emph{arXiv preprint arXiv:2305.13308}, 2023.

\bibitem[Karthik et~al.(2024)Karthik, Coskun, Akata, Tulyakov, Ren, and Kag]{rankdpo}
Shyamgopal Karthik, Huseyin Coskun, Zeynep Akata, Sergey Tulyakov, Jian Ren, and Anil Kag.
\newblock Scalable ranked preference optimization for text-to-image generation.
\newblock \emph{arXiv preprint arXiv:2410.18013}, 2024.

\bibitem[Karunratanakul et~al.(2024)Karunratanakul, Preechakul, Aksan, Beeler, Suwajanakorn, and Tang]{dno}
Korrawe Karunratanakul, Konpat Preechakul, Emre Aksan, Thabo Beeler, Supasorn Suwajanakorn, and Siyu Tang.
\newblock Optimizing diffusion noise can serve as universal motion priors.
\newblock In \emph{CVPR}, 2024.

\bibitem[Kingma et~al.(2021)Kingma, Salimans, Poole, and Ho]{kingma2023variational}
Diederik~P. Kingma, Tim Salimans, Ben Poole, and Jonathan Ho.
\newblock Variational diffusion models.
\newblock In \emph{NeurIPS}, 2021.

\bibitem[Kirstain et~al.(2023)Kirstain, Polyak, Singer, Matiana, Penna, and Levy]{pickscore}
Yuval Kirstain, Adam Polyak, Uriel Singer, Shahbuland Matiana, Joe Penna, and Omer Levy.
\newblock Pick-a-pic: An open dataset of user preferences for text-to-image generation.
\newblock In \emph{NeurIPS}, 2023.

\bibitem[Lambert et~al.(2024)Lambert, Morrison, Pyatkin, Huang, Ivison, Brahman, Miranda, Liu, Dziri, Lyu, et~al.]{rlvr}
Nathan Lambert, Jacob Morrison, Valentina Pyatkin, Shengyi Huang, Hamish Ivison, Faeze Brahman, Lester James~V Miranda, Alisa Liu, Nouha Dziri, Shane Lyu, et~al.
\newblock T$\backslash$" ulu 3: Pushing frontiers in open language model post-training.
\newblock \emph{arXiv preprint arXiv:2411.15124}, 2024.

\bibitem[Lee et~al.(2023)Lee, Liu, Ryu, Watkins, Du, Boutilier, Abbeel, Ghavamzadeh, and Gu]{lee2023aligning}
Kimin Lee, Hao Liu, Moonkyung Ryu, Olivia Watkins, Yuqing Du, Craig Boutilier, Pieter Abbeel, Mohammad Ghavamzadeh, and Shixiang~Shane Gu.
\newblock Aligning text-to-image models using human feedback.
\newblock \emph{arXiv preprint arXiv:2302.12192}, 2023.

\bibitem[Li et~al.(2022)Li, Li, Xiong, and Hoi]{blip}
Junnan Li, Dongxu Li, Caiming Xiong, and Steven Hoi.
\newblock Blip: Bootstrapping language-image pre-training for unified vision-language understanding and generation.
\newblock In \emph{International Conference on Machine Learning}, 2022.

\bibitem[Li et~al.(2024{\natexlab{a}})Li, Kallidromitis, Gokul, Kato, and Kozuka]{diffusionkto}
Shufan Li, Konstantinos Kallidromitis, Akash Gokul, Yusuke Kato, and Kazuki Kozuka.
\newblock Aligning diffusion models by optimizing human utility.
\newblock \emph{arXiv preprint arXiv:2404.04465}, 2024{\natexlab{a}}.

\bibitem[Li et~al.(2024{\natexlab{b}})Li, Liu, Kag, Hu, Idelbayev, Sagar, Wang, Tulyakov, and Ren]{li2024textcraftor}
Yanyu Li, Xian Liu, Anil Kag, Ju~Hu, Yerlan Idelbayev, Dhritiman Sagar, Yanzhi Wang, Sergey Tulyakov, and Jian Ren.
\newblock Textcraftor: Your text encoder can be image quality controller.
\newblock In \emph{CVPR}, 2024{\natexlab{b}}.

\bibitem[Lin et~al.(2024)Lin, Pathak, Li, Li, Xia, Neubig, Zhang, and Ramanan]{vqascore}
Zhiqiu Lin, Deepak Pathak, Baiqi Li, Jiayao Li, Xide Xia, Graham Neubig, Pengchuan Zhang, and Deva Ramanan.
\newblock Evaluating text-to-visual generation with image-to-text generation.
\newblock \emph{arXiv preprint arXiv:2404.01291}, 2024.

\bibitem[Lipman et~al.(2023)Lipman, Chen, Ben-Hamu, Nickel, and Le]{rf2}
Yaron Lipman, Ricky~TQ Chen, Heli Ben-Hamu, Maximilian Nickel, and Matt Le.
\newblock Flow matching for generative modeling.
\newblock In \emph{ICLR}, 2023.

\bibitem[Liu et~al.(2022)Liu, Gong, and Liu]{rf1}
Xingchao Liu, Chengyue Gong, and Qiang Liu.
\newblock Flow straight and fast: Learning to generate and transfer data with rectified flow.
\newblock \emph{arXiv preprint arXiv:2209.03003}, 2022.

\bibitem[Liu et~al.(2021)Liu, Lin, Cao, Hu, Wei, Zhang, Lin, and Guo]{liu2021Swin}
Ze~Liu, Yutong Lin, Yue Cao, Han Hu, Yixuan Wei, Zheng Zhang, Stephen Lin, and Baining Guo.
\newblock Swin transformer: Hierarchical vision transformer using shifted windows.
\newblock In \emph{Proceedings of the IEEE/CVF International Conference on Computer Vision (ICCV)}, 2021.

\bibitem[Luo et~al.(2023)Luo, Tan, Huang, Li, and Zhao]{consistency2}
Simian Luo, Yiqin Tan, Longbo Huang, Jian Li, and Hang Zhao.
\newblock Latent consistency models: Synthesizing high-resolution images with few-step inference.
\newblock \emph{arXiv preprint arXiv:2310.04378}, 2023.

\bibitem[Ma et~al.(2025)Ma, Tong, Jia, Hu, Su, Zhang, Yang, Li, Jaakkola, Jia, et~al.]{ma2025inference}
Nanye Ma, Shangyuan Tong, Haolin Jia, Hexiang Hu, Yu-Chuan Su, Mingda Zhang, Xuan Yang, Yandong Li, Tommi Jaakkola, Xuhui Jia, et~al.
\newblock Inference-time scaling for diffusion models beyond scaling denoising steps.
\newblock \emph{arXiv preprint arXiv:2501.09732}, 2025.

\bibitem[Ma et~al.(2023)Ma, Zhou, Liu, Yuan, Liu, You, and Yang]{prm1}
Qianli Ma, Haotian Zhou, Tingkai Liu, Jianbo Yuan, Pengfei Liu, Yang You, and Hongxia Yang.
\newblock Let's reward step by step: Step-level reward model as the navigators for reasoning.
\newblock \emph{arXiv preprint arXiv:2310.10080}, 2023.

\bibitem[Mañas et~al.(2024)Mañas, Astolfi, Hall, Ross, Urbanek, Williams, Agrawal, Romero-Soriano, and Drozdzal]{mañas2024improvingtexttoimageconsistencyautomatic}
Oscar Mañas, Pietro Astolfi, Melissa Hall, Candace Ross, Jack Urbanek, Adina Williams, Aishwarya Agrawal, Adriana Romero-Soriano, and Michal Drozdzal.
\newblock Improving text-to-image consistency via automatic prompt optimization, 2024.
\newblock URL \url{https://arxiv.org/abs/2403.17804}.

\bibitem[Miao et~al.(2024)Miao, Yang, Lin, Wang, Liu, Wang, and Qiu]{pso}
Zichen Miao, Zhengyuan Yang, Kevin Lin, Ze~Wang, Zicheng Liu, Lijuan Wang, and Qiang Qiu.
\newblock Tuning timestep-distilled diffusion model using pairwise sample optimization.
\newblock \emph{arXiv preprint arXiv:2410.03190}, 2024.

\bibitem[Mu et~al.(2023)Mu, Li, and Goodman]{hyperlm1}
Jesse Mu, Xiang Li, and Noah Goodman.
\newblock Learning to compress prompts with gist tokens.
\newblock \emph{Advances in Neural Information Processing Systems}, 36:\penalty0 19327--19352, 2023.

\bibitem[Muennighoff et~al.(2025)Muennighoff, Yang, Shi, Li, Fei-Fei, Hajishirzi, Zettlemoyer, Liang, Cand{\`e}s, and Hashimoto]{s1}
Niklas Muennighoff, Zitong Yang, Weijia Shi, Xiang~Lisa Li, Li~Fei-Fei, Hannaneh Hajishirzi, Luke Zettlemoyer, Percy Liang, Emmanuel Cand{\`e}s, and Tatsunori Hashimoto.
\newblock s1: Simple test-time scaling.
\newblock \emph{arXiv preprint arXiv:2501.19393}, 2025.

\bibitem[Novack et~al.(2024{\natexlab{a}})Novack, McAuley, Berg-Kirkpatrick, and Bryan]{ditto2}
Zachary Novack, Julian McAuley, Taylor Berg-Kirkpatrick, and Nicholas Bryan.
\newblock Ditto-2: Distilled diffusion inference-time t-optimization for music generation.
\newblock \emph{arXiv preprint arXiv:2405.20289}, 2024{\natexlab{a}}.

\bibitem[Novack et~al.(2024{\natexlab{b}})Novack, McAuley, Berg-Kirkpatrick, and Bryan]{novack2024dittodiffusioninferencetimetoptimization}
Zachary Novack, Julian McAuley, Taylor Berg-Kirkpatrick, and Nicholas~J. Bryan.
\newblock Ditto: Diffusion inference-time t-optimization for music generation, 2024{\natexlab{b}}.
\newblock URL \url{https://arxiv.org/abs/2401.12179}.

\bibitem[Oertell et~al.(2024)Oertell, Chang, Zhang, Brantley, and Sun]{rlcm}
Owen Oertell, Jonathan~D Chang, Yiyi Zhang, Kiant{\'e} Brantley, and Wen Sun.
\newblock Rl for consistency models: Faster reward guided text-to-image generation.
\newblock \emph{arXiv preprint arXiv:2404.03673}, 2024.

\bibitem[Oquab et~al.(2023)Oquab, Darcet, Moutakanni, Vo, Szafraniec, Khalidov, Fernandez, Haziza, Massa, El-Nouby, et~al.]{dinov2}
Maxime Oquab, Timoth{\'e}e Darcet, Th{\'e}o Moutakanni, Huy Vo, Marc Szafraniec, Vasil Khalidov, Pierre Fernandez, Daniel Haziza, Francisco Massa, Alaaeldin El-Nouby, et~al.
\newblock Dinov2: Learning robust visual features without supervision.
\newblock \emph{arXiv preprint arXiv:2304.07193}, 2023.

\bibitem[Papamakarios et~al.(2021)Papamakarios, Nalisnick, Rezende, Mohamed, and Lakshminarayanan]{papamakarios2021normalizingflowsprobabilisticmodeling}
George Papamakarios, Eric Nalisnick, Danilo~Jimenez Rezende, Shakir Mohamed, and Balaji Lakshminarayanan.
\newblock Normalizing flows for probabilistic modeling and inference, 2021.
\newblock URL \url{https://arxiv.org/abs/1912.02762}.

\bibitem[Park et~al.(2021)Park, Azadi, Liu, Darrell, and Rohrbach]{compt2i}
Dong~Huk Park, Samaneh Azadi, Xihui Liu, Trevor Darrell, and Anna Rohrbach.
\newblock Benchmark for compositional text-to-image synthesis.
\newblock In \emph{NeurIPS Datasets and Benchmarks Track}, 2021.

\bibitem[Phang et~al.(2023)Phang, Mao, He, and Chen]{hyperlm3}
Jason Phang, Yi~Mao, Pengcheng He, and Weizhu Chen.
\newblock Hypertuning: Toward adapting large language models without back-propagation.
\newblock In \emph{ICML}, pages 27854--27875, 2023.

\bibitem[Podell et~al.(2023)Podell, English, Lacey, Blattmann, Dockhorn, Müller, Penna, and Rombach]{podell2023sdxl}
Dustin Podell, Zion English, Kyle Lacey, Andreas Blattmann, Tim Dockhorn, Jonas Müller, Joe Penna, and Robin Rombach.
\newblock Sdxl: Improving latent diffusion models for high-resolution image synthesis, 2023.

\bibitem[Prabhudesai et~al.(2023)Prabhudesai, Goyal, Pathak, and Fragkiadaki]{alignprop}
Mihir Prabhudesai, Anirudh Goyal, Deepak Pathak, and Katerina Fragkiadaki.
\newblock Aligning text-to-image diffusion models with reward backpropagation.
\newblock \emph{arXiv preprint arXiv:2310.03739}, 2023.

\bibitem[Prabhudesai et~al.(2024)Prabhudesai, Mendonca, Qin, Fragkiadaki, and Pathak]{prabhudesai2024video}
Mihir Prabhudesai, Russell Mendonca, Zheyang Qin, Katerina Fragkiadaki, and Deepak Pathak.
\newblock Video diffusion alignment via reward gradients.
\newblock \emph{arXiv preprint arXiv:2407.08737}, 2024.

\bibitem[Radford et~al.(2021)Radford, Kim, Hallacy, Ramesh, Goh, Agarwal, Sastry, Askell, Mishkin, Clark, et~al.]{clip}
Alec Radford, Jong~Wook Kim, Chris Hallacy, Aditya Ramesh, Gabriel Goh, Sandhini Agarwal, Girish Sastry, Amanda Askell, Pamela Mishkin, Jack Clark, et~al.
\newblock Learning transferable visual models from natural language supervision.
\newblock In \emph{ICML}, 2021.

\bibitem[Rafailov et~al.(2023)Rafailov, Sharma, Mitchell, Manning, Ermon, and Finn]{dpo}
Rafael Rafailov, Archit Sharma, Eric Mitchell, Christopher~D Manning, Stefano Ermon, and Chelsea Finn.
\newblock Direct preference optimization: Your language model is secretly a reward model.
\newblock \emph{NeurIPS}, 2023.

\bibitem[Ren et~al.(2024)Ren, Xia, Lu, Zhang, Wu, Xie, Wang, and Xiao]{ren2024hypersd}
Yuxi Ren, Xin Xia, Yanzuo Lu, Jiacheng Zhang, Jie Wu, Pan Xie, Xing Wang, and Xuefeng Xiao.
\newblock Hyper-sd: Trajectory segmented consistency model for efficient image synthesis, 2024.

\bibitem[Rombach et~al.(2022)Rombach, Blattmann, Lorenz, Esser, and Ommer]{rombach22ldm}
Robin Rombach, Andreas Blattmann, Dominik Lorenz, Patrick Esser, and Bj\"orn Ommer.
\newblock High-resolution image synthesis with latent diffusion models.
\newblock In \emph{CVPR}, 2022.

\bibitem[Rout et~al.(2024)Rout, Chen, Ruiz, Kumar, Caramanis, Shakkottai, and Chu]{rbmodulation}
Litu Rout, Yujia Chen, Nataniel Ruiz, Abhishek Kumar, Constantine Caramanis, Sanjay Shakkottai, and Wen-Sheng Chu.
\newblock Rb-modulation: Training-free personalization of diffusion models using stochastic optimal control.
\newblock \emph{arXiv preprint arXiv:2405.17401}, 2024.

\bibitem[Ruiz et~al.(2024)Ruiz, Li, Jampani, Wei, Hou, Pritch, Wadhwa, Rubinstein, and Aberman]{hyperdreambooth}
Nataniel Ruiz, Yuanzhen Li, Varun Jampani, Wei Wei, Tingbo Hou, Yael Pritch, Neal Wadhwa, Michael Rubinstein, and Kfir Aberman.
\newblock Hyperdreambooth: Hypernetworks for fast personalization of text-to-image models.
\newblock In \emph{CVPR}, 2024.

\bibitem[Sauer et~al.(2023)Sauer, Lorenz, Blattmann, and Rombach]{sdxlturbo}
Axel Sauer, Dominik Lorenz, Andreas Blattmann, and Robin Rombach.
\newblock Adversarial diffusion distillation.
\newblock \emph{arXiv preprint arXiv:2311.17042}, 2023.

\bibitem[Snell et~al.(2024)Snell, Lee, Xu, and Kumar]{snell2024scaling}
Charlie Snell, Jaehoon Lee, Kelvin Xu, and Aviral Kumar.
\newblock Scaling llm test-time compute optimally can be more effective than scaling model parameters.
\newblock \emph{arXiv preprint arXiv:2408.03314}, 2024.

\bibitem[Song et~al.(2021{\natexlab{a}})Song, Meng, and Ermon]{ddim}
Jiaming Song, Chenlin Meng, and Stefano Ermon.
\newblock Denoising diffusion implicit models.
\newblock In \emph{ICLR}, 2021{\natexlab{a}}.

\bibitem[Song et~al.(2021{\natexlab{b}})Song, Sohl-Dickstein, Kingma, Kumar, Ermon, and Poole]{song2021scorebased}
Yang Song, Jascha Sohl-Dickstein, Diederik~P. Kingma, Abhishek Kumar, Stefano Ermon, and Ben Poole.
\newblock Score-based generative modeling through stochastic differential equations.
\newblock In \emph{ICLR}, 2021{\natexlab{b}}.

\bibitem[Song et~al.(2023)Song, Dhariwal, Chen, and Sutskever]{consistency1}
Yang Song, Prafulla Dhariwal, Mark Chen, and Ilya Sutskever.
\newblock Consistency models.
\newblock In \emph{ICML}, 2023.

\bibitem[Sundaram et~al.(2024)Sundaram, Pal, Chauhan, Agarwal, and Karanam]{sundaram2024cocono}
Aravindan Sundaram, Ujjayan Pal, Abhimanyu Chauhan, Aishwarya Agarwal, and Srikrishna Karanam.
\newblock Cocono: Attention contrast-and-complete for initial noise optimization in text-to-image synthesis.
\newblock \emph{arXiv preprint arXiv:2411.16783}, 2024.

\bibitem[Tang(2024)]{tang2024finetuningdiffusionmodelsstochastic}
Wenpin Tang.
\newblock Fine-tuning of diffusion models via stochastic control: entropy regularization and beyond, 2024.
\newblock URL \url{https://arxiv.org/abs/2403.06279}.

\bibitem[Tang et~al.(2024)Tang, Peng, Tang, Hong, Wang, and Chang]{tang2024inference}
Zhiwei Tang, Jiangweizhi Peng, Jiasheng Tang, Mingyi Hong, Fan Wang, and Tsung-Hui Chang.
\newblock Inference-time alignment of diffusion models with direct noise optimization.
\newblock \emph{arXiv preprint arXiv:2405.18881}, 2024.

\bibitem[Uehara et~al.(2024)Uehara, Zhao, Black, Hajiramezanali, Scalia, Diamant, Tseng, Biancalani, and Levine]{uehara2024finetuningcontinuoustimediffusionmodels}
Masatoshi Uehara, Yulai Zhao, Kevin Black, Ehsan Hajiramezanali, Gabriele Scalia, Nathaniel~Lee Diamant, Alex~M Tseng, Tommaso Biancalani, and Sergey Levine.
\newblock Fine-tuning of continuous-time diffusion models as entropy-regularized control, 2024.
\newblock URL \url{https://arxiv.org/abs/2402.15194}.

\bibitem[Uehara et~al.(2025{\natexlab{a}})Uehara, Su, Zhao, Li, Regev, Ji, Levine, and Biancalani]{uehara2025rewardguidediterativerefinementdiffusion}
Masatoshi Uehara, Xingyu Su, Yulai Zhao, Xiner Li, Aviv Regev, Shuiwang Ji, Sergey Levine, and Tommaso Biancalani.
\newblock Reward-guided iterative refinement in diffusion models at test-time with applications to protein and dna design, 2025{\natexlab{a}}.
\newblock URL \url{https://arxiv.org/abs/2502.14944}.

\bibitem[Uehara et~al.(2025{\natexlab{b}})Uehara, Zhao, Wang, Li, Regev, Levine, and Biancalani]{uehara2025inferencetimealignmentdiffusionmodels}
Masatoshi Uehara, Yulai Zhao, Chenyu Wang, Xiner Li, Aviv Regev, Sergey Levine, and Tommaso Biancalani.
\newblock Inference-time alignment in diffusion models with reward-guided generation: Tutorial and review, 2025{\natexlab{b}}.
\newblock URL \url{https://arxiv.org/abs/2501.09685}.

\bibitem[Venkatraman et~al.(2025)Venkatraman, Hasan, Kim, Scimeca, Sendera, Bengio, Berseth, and Malkin]{venkatraman2025outsourced}
Siddarth Venkatraman, Mohsin Hasan, Minsu Kim, Luca Scimeca, Marcin Sendera, Yoshua Bengio, Glen Berseth, and Nikolay Malkin.
\newblock Outsourced diffusion sampling: Efficient posterior inference in latent spaces of generative models.
\newblock \emph{arXiv preprint arXiv:2502.06999}, 2025.

\bibitem[Von~Oswald et~al.(2019)Von~Oswald, Henning, Grewe, and Sacramento]{hypernetcl}
Johannes Von~Oswald, Christian Henning, Benjamin~F Grewe, and Jo{\~a}o Sacramento.
\newblock Continual learning with hypernetworks.
\newblock \emph{arXiv preprint arXiv:1906.00695}, 2019.

\bibitem[Wagenmaker et~al.(2025)Wagenmaker, Nakamoto, Zhang, Park, Yagoub, Nagabandi, Gupta, and Levine]{wagenmaker2025steeringdiffusionpolicylatent}
Andrew Wagenmaker, Mitsuhiko Nakamoto, Yunchu Zhang, Seohong Park, Waleed Yagoub, Anusha Nagabandi, Abhishek Gupta, and Sergey Levine.
\newblock Steering your diffusion policy with latent space reinforcement learning, 2025.
\newblock URL \url{https://arxiv.org/abs/2506.15799}.

\bibitem[Wallace et~al.(2023)Wallace, Gokul, Ermon, and Naik]{doodl}
Bram Wallace, Akash Gokul, Stefano Ermon, and Nikhil Naik.
\newblock End-to-end diffusion latent optimization improves classifier guidance.
\newblock In \emph{ICCV}, 2023.

\bibitem[Wallace et~al.(2024)Wallace, Dang, Rafailov, Zhou, Lou, Purushwalkam, Ermon, Xiong, Joty, and Naik]{diffusiondpo}
Bram Wallace, Meihua Dang, Rafael Rafailov, Linqi Zhou, Aaron Lou, Senthil Purushwalkam, Stefano Ermon, Caiming Xiong, Shafiq Joty, and Nikhil Naik.
\newblock Diffusion model alignment using direct preference optimization.
\newblock In \emph{CVPR}, 2024.

\bibitem[Wang et~al.(2024)Wang, Xu, Zhou, Zang, Darrell, Liu, and You]{nndiffusion}
Kai Wang, Zhaopan Xu, Yukun Zhou, Zelin Zang, Trevor Darrell, Zhuang Liu, and Yang You.
\newblock Neural network diffusion.
\newblock \emph{arXiv preprint arXiv:2402.13144}, 2024.

\bibitem[Wang et~al.(2022)Wang, Montoya, Munechika, Yang, Hoover, and Chau]{diffusiondb}
Zijie~J Wang, Evan Montoya, David Munechika, Haoyang Yang, Benjamin Hoover, and Duen~Horng Chau.
\newblock Diffusiondb: A large-scale prompt gallery dataset for text-to-image generative models.
\newblock \emph{arXiv preprint arXiv:2210.14896}, 2022.

\bibitem[Wu et~al.(2023{\natexlab{a}})Wu, Hao, Sun, Chen, Zhu, Zhao, and Li]{hpsv2}
Xiaoshi Wu, Yiming Hao, Keqiang Sun, Yixiong Chen, Feng Zhu, Rui Zhao, and Hongsheng Li.
\newblock Human preference score v2: A solid benchmark for evaluating human preferences of text-to-image synthesis.
\newblock \emph{arXiv preprint arXiv:2306.09341}, 2023{\natexlab{a}}.

\bibitem[Wu et~al.(2023{\natexlab{b}})Wu, Sun, Zhu, Zhao, and Li]{hps}
Xiaoshi Wu, Keqiang Sun, Feng Zhu, Rui Zhao, and Hongsheng Li.
\newblock Better aligning text-to-image models with human preference.
\newblock In \emph{ICCV}, 2023{\natexlab{b}}.

\bibitem[Xu et~al.(2023)Xu, Liu, Wu, Tong, Li, Ding, Tang, and Dong]{imagereward}
Jiazheng Xu, Xiao Liu, Yuchen Wu, Yuxuan Tong, Qinkai Li, Ming Ding, Jie Tang, and Yuxiao Dong.
\newblock Imagereward: Learning and evaluating human preferences for text-to-image generation.
\newblock In \emph{NeurIPS}, 2023.

\bibitem[Zhang et~al.(2024{\natexlab{a}})Zhang, Zhang, Dong, Zang, and Wang]{longclip}
Beichen Zhang, Pan Zhang, Xiaoyi Dong, Yuhang Zang, and Jiaqi Wang.
\newblock Long-clip: Unlocking the long-text capability of clip.
\newblock In \emph{European Conference on Computer Vision}, pages 310--325. Springer, 2024{\natexlab{a}}.

\bibitem[Zhang et~al.(2018{\natexlab{a}})Zhang, Ren, and Urtasun]{hypernetnas}
Chris Zhang, Mengye Ren, and Raquel Urtasun.
\newblock Graph hypernetworks for neural architecture search.
\newblock \emph{arXiv preprint arXiv:1810.05749}, 2018{\natexlab{a}}.

\bibitem[Zhang et~al.(2018{\natexlab{b}})Zhang, Isola, Efros, Shechtman, and Wang]{lpips}
Richard Zhang, Phillip Isola, Alexei~A Efros, Eli Shechtman, and Oliver Wang.
\newblock The unreasonable effectiveness of deep features as a perceptual metric.
\newblock In \emph{CVPR}, 2018{\natexlab{b}}.

\bibitem[Zhang et~al.(2024{\natexlab{b}})Zhang, Wang, Wu, Li, Gao, Zhang, and Wang]{mps}
Sixian Zhang, Bohan Wang, Junqiang Wu, Yan Li, Tingting Gao, Di~Zhang, and Zhongyuan Wang.
\newblock Learning multi-dimensional human preference for text-to-image generation.
\newblock In \emph{CVPR}, 2024{\natexlab{b}}.

\bibitem[Zhang et~al.(2024{\natexlab{c}})Zhang, Tzeng, Du, and Kislyuk]{rldiffusion1}
Yinan Zhang, Eric Tzeng, Yilun Du, and Dmitry Kislyuk.
\newblock Large-scale reinforcement learning for diffusion models.
\newblock In \emph{ECCV}, 2024{\natexlab{c}}.

\bibitem[Zhang et~al.(2025)Zhang, Zheng, Wu, Zhang, Lin, Yu, Liu, Zhou, and Lin]{prm2}
Zhenru Zhang, Chujie Zheng, Yangzhen Wu, Beichen Zhang, Runji Lin, Bowen Yu, Dayiheng Liu, Jingren Zhou, and Junyang Lin.
\newblock The lessons of developing process reward models in mathematical reasoning.
\newblock \emph{arXiv preprint arXiv:2501.07301}, 2025.

\bibitem[Zhou et~al.(2024)Zhou, Shao, Bai, Xu, Han, and Xie]{goldennoise}
Zikai Zhou, Shitong Shao, Lichen Bai, Zhiqiang Xu, Bo~Han, and Zeke Xie.
\newblock Golden noise for diffusion models: A learning framework.
\newblock \emph{arXiv preprint arXiv:2411.09502}, 2024.

\end{thebibliography}
}

\newpage
\appendix
\section*{Appendix}
The Appendix is organized as follows:

\begin{itemize}
\item Section \ref{app:theory} provides all of our theoretical derivations.
\item Section \ref{app:exp-details} outlines the implementation details.
\item Section \ref{app:additional-results} presents further quantitative and qualitative analysis.
\end{itemize}
\section{Theoretical Derivations}
\label{app:theory}

This section provides rigorous derivations for the reward-tilted noise distribution and our tractable training objective. We include a temperature parameter $\alpha > 0$ for completeness, though the main paper uses $\alpha = 1$.

\subsection{Setup and Standing Assumptions}
\label{app:setup}

Let $p_0(\bx_0)$ denote the standard Gaussian density on $\mathbb{R}^d$:
\begin{equation} \label{eq:app_p0_def}
    p_0(\bx_0) = \frac{1}{(2\pi)^{d/2}} \exp\left(-\frac{1}{2} \|\bx_0\|^2\right).
\end{equation}

Let $g_\theta: \mathbb{R}^d \to \mathbb{R}^d$ be the pre-trained distilled generator and $r: \mathbb{R}^d \to \mathbb{R}$ be the reward function.

\paragraph{Standing Assumptions.}
Throughout this section, we assume:
\begin{enumerate}
    \item The generator $g_\theta: \mathbb{R}^d \to \mathbb{R}^d$ is measurable.
    \item The reward function $r: \mathbb{R}^d \to \mathbb{R}$ is measurable and $\mathbb{E}_{\bx_0 \sim p_0}[e^{r(g_\theta(\bx_0))/\alpha}] < \infty$ for our chosen temperature $\alpha > 0$.
    \item For any $\bx \in \text{Range}(g_\theta)$, the preimage set $g_\theta^{-1}(\{\bx\})$ has a well-defined measure structure.
\end{enumerate}

These assumptions are mild and realistic for neural network generators.

\paragraph{Pushforward Measure and Base Distribution.}
The base generator density $p^{\text{base}}(\bx)$ is the density of the pushforward measure $(g_{\theta})_\sharp P_0$, where $P_0$ is the probability measure corresponding to $p_0(\bx_0)$. Formally, $(g_{\theta})_\sharp P_0$ is defined such that for any Borel set $A \subset \mathbb{R}^d$:
\begin{equation}
    ((g_{\theta})_\sharp P_0)(A) = P_0(g_{\theta}^{-1}(A)) = \int_{g_\theta^{-1}(A)} p_0(\bx_0) d\bx_0.
\end{equation}

Under our standing assumptions, the density $p^{\text{base}}(\bx)$ can be written using the Dirac delta as:
\begin{equation} \label{eq:app_p_base_def}
    p^{\text{base}}(\bx) = \int_{\mathbb{R}^d} \delta(\bx - g_{\theta}(\bx_0)) p_0(\bx_0) d\bx_0.
\end{equation}

Note that in the main text, with slight abuse of notation, we write $(g_{\theta})_\sharp p_0$ instead of $(g_{\theta})_\sharp P_0$.

\paragraph{KL Divergence.}
The Kullback-Leibler (KL) divergence between two probability densities $q(\mathbf{v})$ and $p(\mathbf{v})$ is defined as:
\begin{equation}
    D_{\mathrm{KL}}(q \| p) := \int_{\mathbb{R}^d} q(\mathbf{v}) \log \frac{q(\mathbf{v})}{p(\mathbf{v})} d\mathbf{v},
\end{equation}
provided the integral exists and is finite.

\subsection{The Reward-Tilted Output Distribution}
\label{app:output_tilted_dist}

The primary goal is to align the generator with the reward function $r(\bx)$ by targeting a \emph{reward-tilted output distribution} $p^{\star}(\bx)$ that upweights high-reward samples while maintaining similarity to the base distribution.

\begin{definition}[Reward-Tilted Output Distribution]
\label{def:tilted_output}
The target reward-tilted output density $p^{\star}(\bx)$ is defined by upweighting samples from the base generator density $p^{\text{base}}(\bx)$ according to the reward $r(\bx)$:
\begin{equation} \label{eq:app_p_star_def_explicit}
    p^{\star}(\bx) := \frac{1}{Z^{\star}} p^{\text{base}}(\bx) \exp\left(\frac{r(\bx)}{\alpha}\right),
\end{equation}
where $Z^{\star}$ is the normalization constant ensuring $p^{\star}(\bx)$ integrates to one:
\begin{equation} \label{eq:app_Z_star_def_output_space}
    Z^{\star} := \int_{\mathbb{R}^d} p^{\text{base}}(\bx) \exp\left(\frac{r(\bx)}{\alpha}\right) d\bx.
\end{equation}
Under our standing assumptions, we have $Z^{\star} < \infty$. We denote $P^\star$ as the probability measure corresponding to $p^\star$.
\end{definition}

\paragraph{Interpretation.}
The temperature parameter $\alpha > 0$ controls the strength of the reward signal:
\begin{itemize}
    \item When $\alpha \to \infty$, we have $p^{\star}(\bx) \to p^{\text{base}}(\bx)$ (no reward influence)
    \item When $\alpha \to 0$, the distribution concentrates on high-reward regions
    \item $\alpha = 1$ provides a natural balance between reward optimization and staying close to the base distribution
\end{itemize}

\paragraph{Objective for Fine-Tuning Generator Parameters.}
If we aim to fine-tune the generator parameters from $\theta$ to $\phi$, leading to a new output density $p^{\phi}(\bx)$ (when input is from $p_0(\bx_0)$), a principled approach is to minimize the KL divergence $D_{\mathrm{KL}}(p^{\phi} \| p^{\star})$.

\begin{proposition}[KL Objective for Generator Fine-tuning]
\label{prop:gen_objective}
Minimizing $D_{\mathrm{KL}}(p^{\phi} \| p^{\star})$ with respect to the generator parameters $\phi$ is equivalent to minimizing:
\begin{equation} \label{eq:app_gen_kl_objective_final}
    J_{\text{gen}}(\phi) = D_{\mathrm{KL}}(p^{\phi} \| p^{\text{base}}) - \frac{1}{\alpha} \mathbb{E}_{\bx \sim p^{\phi}}[r(\bx)].
\end{equation}
\end{proposition}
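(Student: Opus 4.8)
The plan is to expand the KL divergence $D_{\mathrm{KL}}(p^{\phi} \| p^{\star})$ directly using its integral definition and the explicit form of $p^{\star}$ from Definition~\ref{def:tilted_output}. Writing out the logarithm of the ratio, I would substitute $p^{\star}(\bx) = \frac{1}{Z^{\star}} p^{\text{base}}(\bx) \exp(r(\bx)/\alpha)$ and use the additive property of the logarithm to split the integrand into three pieces. Concretely,
\begin{equation}
    D_{\mathrm{KL}}(p^{\phi} \| p^{\star}) = \int p^{\phi}(\bx) \log \frac{p^{\phi}(\bx)}{p^{\star}(\bx)} \, d\bx = \int p^{\phi}(\bx) \left[ \log \frac{p^{\phi}(\bx)}{p^{\text{base}}(\bx)} - \frac{r(\bx)}{\alpha} + \log Z^{\star} \right] d\bx.
\end{equation}

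Next I would evaluate each of the three resulting integrals. The first integrates to $D_{\mathrm{KL}}(p^{\phi} \| p^{\text{base}})$ by definition. The second, by pulling out the constant $-1/\alpha$ and recognizing $\int p^{\phi}(\bx) r(\bx)\, d\bx$ as an expectation, becomes $-\frac{1}{\alpha}\mathbb{E}_{\bx \sim p^{\phi}}[r(\bx)]$. The third integral, $\log Z^{\star} \int p^{\phi}(\bx)\, d\bx$, equals $\log Z^{\star}$ since $p^{\phi}$ is a normalized density. Collecting terms gives
\begin{equation}
    D_{\mathrm{KL}}(p^{\phi} \| p^{\star}) = D_{\mathrm{KL}}(p^{\phi} \| p^{\text{base}}) - \frac{1}{\alpha}\mathbb{E}_{\bx \sim p^{\phi}}[r(\bx)] + \log Z^{\star} = J_{\text{gen}}(\phi) + \log Z^{\star}.
\end{equation}

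The final step is to observe that $\log Z^{\star}$ is independent of $\phi$: the normalization constant $Z^{\star}$ from Equation~\ref{eq:app_Z_star_def_output_space} depends only on $p^{\text{base}}$, $r$, and $\alpha$, none of which involve the fine-tuning parameters $\phi$. Hence adding or dropping this constant does not change the minimizer, and minimizing $D_{\mathrm{KL}}(p^{\phi} \| p^{\star})$ over $\phi$ is equivalent to minimizing $J_{\text{gen}}(\phi)$, as claimed.

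This argument is essentially a routine decomposition, so I do not anticipate a deep obstacle. The main point requiring care is justifying that the split into three separate integrals is valid — that each piece is finite so the manipulation is not merely formal. The standing assumption $\mathbb{E}_{\bx_0 \sim p_0}[e^{r(g_\theta(\bx_0))/\alpha}] < \infty$ guarantees $Z^{\star} < \infty$ (so $\log Z^{\star}$ is well-defined and finite), and one implicitly restricts attention to those $\phi$ for which $D_{\mathrm{KL}}(p^{\phi} \| p^{\text{base}})$ and $\mathbb{E}_{\bx \sim p^{\phi}}[r(\bx)]$ are finite, since otherwise the objective is vacuous. Under these finiteness conditions the term-by-term integration is fully justified and the equivalence holds.
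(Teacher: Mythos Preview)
Your proposal is correct and essentially identical to the paper's own proof: both expand the KL divergence, substitute the explicit form of $p^{\star}$, split the logarithm into three terms, and observe that $\log Z^{\star}$ is a $\phi$-independent constant. Your additional remarks on finiteness are a nice touch that the paper omits, but the core argument is the same routine decomposition.
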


\begin{proof}
Using the definition of $p^{\star}(\bx)$ from Equation~\eqref{eq:app_p_star_def_explicit}:
\begin{align}
    D_{\mathrm{KL}}(p^{\phi} \| p^{\star}) &= \int_{\mathbb{R}^d} p^{\phi}(\bx) \log \frac{p^{\phi}(\bx)}{p^{\star}(\bx)} d\bx \nonumber \\
    &= \int_{\mathbb{R}^d} p^{\phi}(\bx) \log \frac{p^{\phi}(\bx) Z^{\star}}{p^{\text{base}}(\bx) \exp\left(\frac{r(\bx)}{\alpha}\right)} d\bx \nonumber \\
    &= \int_{\mathbb{R}^d} p^{\phi}(\bx) \left( \log \frac{p^{\phi}(\bx)}{p^{\text{base}}(\bx)} - \frac{r(\bx)}{\alpha} + \log Z^{\star}\right) d\bx \nonumber \\
    &= D_{\mathrm{KL}}(p^{\phi} \| p^{\text{base}}) - \frac{1}{\alpha} \mathbb{E}_{\bx \sim p^{\phi}}[r(\bx)] + \log Z^{\star}.
\end{align}
Since $\log Z^{\star}$ is constant with respect to $\phi$, minimizing $D_{\mathrm{KL}}(p^{\phi} \| p^{\star})$ is equivalent to minimizing $J_{\text{gen}}(\phi)$.
\end{proof}

\paragraph{Challenges with Direct Generator Fine-tuning.}
While Proposition~\ref{prop:gen_objective} provides a theoretically sound objective, directly optimizing it for distilled models poses significant challenges:

\begin{enumerate}
    \item \textbf{Intractable KL term:} Computing $D_{\mathrm{KL}}(p^{\phi} \| p^{\text{base}})$ requires evaluating densities of high-dimensional neural network generators, which involves intractable Jacobian determinants
    \item \textbf{No continuous-time structure:} Unlike full diffusion models, distilled generators often lack explicit SDE/ODE structure that would enable techniques from stochastic optimal control
    \item \textbf{Reward hacking:} Without proper regularization, optimization can lead to adversarial exploitation of the reward model, generating unrealistic samples that achieve high reward scores
\end{enumerate}

These challenges motivate our alternative approach of modifying the input noise distribution while keeping the generator fixed, which we develop in the next section.

\subsection{The Reward-Tilted Noise Distribution}
\label{app:noise_tilted_dist}

An alternative to modifying the generator $g_\theta$ is to modify the input noise density $p_0(\bx_0)$ while keeping $g_\theta$ fixed. We seek an optimal \emph{tilted noise density} $p_0^{\star}(\bx_0)$ such that its pushforward through $g_\theta$ results in the target output density $p^\star(\bx)$.

\paragraph{Normalization Constant in Noise Space.}
First, we show that the normalization constant $Z^{\star}$ from Equation~\eqref{eq:app_Z_star_def_output_space} can be expressed as an integral over the noise space. Using Equation~\eqref{eq:app_p_base_def} for $p^{\text{base}}(\bx)$ in the definition of $Z^{\star}$:
\begin{align}
    Z^{\star} &= \int_{\mathbb{R}^d} \exp\left(\frac{r(\bx)}{\alpha}\right) \left( \int_{\mathbb{R}^d} \delta(\bx - g_{\theta}(\bx_0')) p_0(\bx_0') d\bx_0' \right) d\bx \nonumber \\
    &= \int_{\mathbb{R}^d} \left( \int_{\mathbb{R}^d} \exp\left(\frac{r(\bx)}{\alpha}\right) \delta(\bx - g_{\theta}(\bx_0')) d\bx \right) p_0(\bx_0') d\bx_0' \quad \text{(Fubini's theorem)} \nonumber \\
    &= \int_{\mathbb{R}^d} \exp\left(\frac{r(g_{\theta}(\bx_0'))}{\alpha}\right) p_0(\bx_0') d\bx_0' \quad \text{(sifting property of Dirac delta)} \nonumber \\
    &= \int_{\mathbb{R}^d} \exp\left(\frac{r(g_{\theta}(\bx_0))}{\alpha}\right) p_0(\bx_0) d\bx_0. \label{eq:app_Z_star_noise_space_final}
\end{align}

\begin{definition}[Tilted Noise Distribution]
\label{def:tilted_noise}
The \emph{tilted noise density} $p_0^{\star}(\bx_0)$ is defined as:
\begin{equation} \label{eq:app_p0_star_def}
    p_0^{\star}(\bx_0) := \frac{1}{Z^{\star}} p_0(\bx_0) \exp\left(\frac{r(g_{\theta}(\bx_0))}{\alpha}\right),
\end{equation}
where $Z^{\star}$ is the normalization constant from Equation~\eqref{eq:app_Z_star_def_output_space}, which by Equation~\eqref{eq:app_Z_star_noise_space_final} can be computed in noise space.
\end{definition}

\begin{theorem}[Properties of the Tilted Noise Distribution]
\label{thm:tilted_noise_properties}
Let $p_0^{\star}(\bx_0)$ be the tilted noise density defined in Definition~\ref{def:tilted_noise} and $P_0^{\star}$ be the corresponding probability measure. Under our standing assumptions:
\begin{enumerate}
    \item \textbf{Pushforward Identity:} The density of the pushforward measure $(g_{\theta})_\sharp P_0^{\star}$ is $p^{\star}(\bx)$.
    \item \textbf{KL Projection:} The density $p_0^{\star}(\bx_0)$ uniquely minimizes $D_{\mathrm{KL}}(q_0 \| p_0)$ among all noise densities $q_0(\bx_0)$ such that the density of $(g_{\theta})_\sharp Q_0$ (where $Q_0$ is the measure for $q_0$) equals $p^{\star}(\bx)$.
\end{enumerate}
\end{theorem}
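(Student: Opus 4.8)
The plan is to prove the two claims in sequence, with the pushforward identity of Part~1 feeding directly into the KL-projection argument of Part~2. For the pushforward identity, I would avoid manipulating Dirac deltas and instead test the measure $(g_\theta)_\sharp P_0^\star$ against an arbitrary bounded measurable function $\psi$. By the defining change-of-variables property of pushforwards, $\int \psi\, d\big((g_\theta)_\sharp P_0^\star\big) = \mathbb{E}_{\bx_0 \sim p_0^\star}[\psi(g_\theta(\bx_0))]$, and substituting Definition~\ref{def:tilted_noise} turns this into $\tfrac{1}{Z^\star}\,\mathbb{E}_{\bx_0 \sim p_0}\!\big[\psi(g_\theta(\bx_0))\exp(r(g_\theta(\bx_0))/\alpha)\big]$. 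The crucial observation is that the integrand is a function of $g_\theta(\bx_0)$ alone, namely $h(\bx) := \psi(\bx)\exp(r(\bx)/\alpha)$ evaluated at $g_\theta(\bx_0)$, so I can apply the pushforward change of variables in reverse (via Equation~\eqref{eq:app_p_base_def}) to rewrite it as $\tfrac{1}{Z^\star}\int h(\bx)\, p^{\text{base}}(\bx)\, d\bx = \int \psi(\bx)\, p^\star(\bx)\, d\bx$, where the last step is Definition~\ref{def:tilted_output}. Since this holds for every test function $\psi$, the pushforward density equals $p^\star$.

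For the KL projection, I would exhibit an \emph{additive-constant} decomposition of the objective over the feasible set. Writing $\log(q_0/p_0) = \log(q_0/p_0^\star) + \log(p_0^\star/p_0)$ and noting from Definition~\ref{def:tilted_noise} that $\log(p_0^\star/p_0) = r(g_\theta(\bx_0))/\alpha - \log Z^\star$, I obtain
\begin{equation}
D_{\mathrm{KL}}(q_0 \| p_0) = D_{\mathrm{KL}}(q_0 \| p_0^\star) + \tfrac{1}{\alpha}\,\mathbb{E}_{\bx_0 \sim q_0}[r(g_\theta(\bx_0))] - \log Z^\star .
\end{equation}
The key point is that the reward term depends on $q_0$ only through its pushforward, since $\mathbb{E}_{\bx_0 \sim q_0}[r(g_\theta(\bx_0))] = \mathbb{E}_{\bx \sim (g_\theta)_\sharp Q_0}[r(\bx)]$, and every feasible $q_0$ satisfies $(g_\theta)_\sharp Q_0 = P^\star$ by the constraint, so this expectation equals the $q_0$-independent constant $\mathbb{E}_{\bx \sim p^\star}[r(\bx)]$. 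Hence over the feasible set the objective equals $D_{\mathrm{KL}}(q_0\|p_0^\star)$ plus a fixed constant, so by Gibbs' inequality it is minimized precisely, and uniquely up to a $p_0$-null set, when $q_0 = p_0^\star$. Part~1 guarantees that $p_0^\star$ is itself feasible, so the infimum is actually attained and $p_0^\star$ is the unique minimizer.

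The main obstacle I anticipate is not the algebra but the measure-theoretic bookkeeping that makes the decomposition legitimate: I need each of the three integrals in the split of $D_{\mathrm{KL}}(q_0\|p_0)$ to be well-defined and finite so that I am not rearranging an $\infty-\infty$. This requires confirming that $\mathbb{E}_{p^\star}[r]$ is finite, which should follow from the standing integrability assumption $\mathbb{E}_{p_0}[\exp(r(g_\theta)/\alpha)] < \infty$, though possibly with a slightly sharpened upper-tail bound since $t\,e^{t/\alpha}$ is not dominated by $e^{t/\alpha}$, and it requires restricting attention to feasible $q_0$ with $D_{\mathrm{KL}}(q_0\|p_0) < \infty$. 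A secondary care point is the rigor of the pushforward identity when $g_\theta$ is merely measurable and possibly non-injective; the test-function argument handles this cleanly without ever invoking a change-of-variables Jacobian, which is why I favor it over a density-level computation. The uniqueness claim then rests solely on the strict form of Gibbs' inequality for the finite-valued KL divergence.
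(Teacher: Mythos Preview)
Your approach is correct. For Part~1, you and the paper follow essentially the same argument: both use the change-of-variables identity for pushforward integrals, with you testing against arbitrary bounded measurable $\psi$ and the paper testing against indicators $\mathbf{1}_A$ of measurable sets; these are equivalent since measures are determined by their values on sets.

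For Part~2, you take a genuinely different and more elementary route. The paper proceeds via formal Lagrange multipliers: it introduces a multiplier function $\lambda(\bx)$ enforcing the pushforward constraint, takes the functional derivative of the Lagrangian, obtains the stationary form $q_0(\bx_0)=p_0(\bx_0)\exp[\lambda(g_\theta(\bx_0))-1]$, and then identifies $\lambda$ by matching the constraint, recovering $p_0^\star$; uniqueness is argued from strict convexity of the KL. Your decomposition $D_{\mathrm{KL}}(q_0\|p_0)=D_{\mathrm{KL}}(q_0\|p_0^\star)+\text{(constant on the feasible set)}$ sidesteps the calculus of variations entirely: once you observe that the reward term depends on $q_0$ only through its pushforward, which is pinned to $p^\star$ by the constraint, the minimization reduces directly to Gibbs' inequality, and Part~1 supplies feasibility. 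Your argument is shorter and arguably more rigorous, since functional derivatives and Lagrange multipliers in infinite-dimensional density spaces require justification the paper does not supply; the paper's approach, on the other hand, is more \emph{constructive} in that it derives the exponential-family form of the optimizer rather than verifying a candidate already in hand. The measure-theoretic cautions you flag (finiteness of $\mathbb{E}_{p^\star}[r]$, avoiding $\infty-\infty$ in the split) are real but are not addressed in the paper's proof either.
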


\begin{proof}
\textbf{Part 1: Pushforward Identity.}
We need to show that $(g_{\theta})_\sharp P_0^{\star}$ has density $p^{\star}(\bx)$. For any bounded measurable set $A \subset \mathbb{R}^d$, we have:
\begin{align}
    ((g_{\theta})_\sharp P_0^{\star})(A) &= P_0^{\star}(g_{\theta}^{-1}(A)) = \int_{g_{\theta}^{-1}(A)} p_0^{\star}(\bx_0) d\bx_0 \nonumber \\
    &= \int_{g_{\theta}^{-1}(A)} \frac{1}{Z^{\star}} p_0(\bx_0) \exp\left(\frac{r(g_{\theta}(\bx_0))}{\alpha}\right) d\bx_0. \label{eq:pushforward_step1}
\end{align}

To evaluate this integral, we use the fundamental property of pushforward measures. For any measurable function $h: \mathbb{R}^d \to \mathbb{R}$:
\begin{equation}
    \int_{\mathbb{R}^d} h(\bx) ((g_{\theta})_\sharp P_0)(d\bx) = \int_{\mathbb{R}^d} h(g_{\theta}(\bx_0)) P_0(d\bx_0).
\end{equation}

Applying this with $h(\bx) = \mathbf{1}_A(\bx) \exp\left(\frac{r(\bx)}{\alpha}\right)$:
\begin{align}
    \int_{g_{\theta}^{-1}(A)} \exp\left(\frac{r(g_{\theta}(\bx_0))}{\alpha}\right) p_0(\bx_0) d\bx_0 &= \int_A \exp\left(\frac{r(\bx)}{\alpha}\right) p^{\text{base}}(\bx) d\bx.
\end{align}

Substituting back into Equation~\eqref{eq:pushforward_step1}:
\begin{align}
    ((g_{\theta})_\sharp P_0^{\star})(A) &= \frac{1}{Z^{\star}} \int_A \exp\left(\frac{r(\bx)}{\alpha}\right) p^{\text{base}}(\bx) d\bx \nonumber \\
    &= \int_A \frac{1}{Z^{\star}} p^{\text{base}}(\bx) \exp\left(\frac{r(\bx)}{\alpha}\right) d\bx \nonumber \\
    &= \int_A p^{\star}(\bx) d\bx.
\end{align}

Since this holds for all measurable sets $A$, the pushforward $(g_{\theta})_\sharp P_0^{\star}$ has density $p^{\star}(\bx)$.

\textbf{Part 2: KL Projection Characterization.}
Consider the constrained optimization problem:
\begin{equation}
    \min_{q_0} D_{\mathrm{KL}}(q_0 \| p_0) \quad \text{subject to} \quad (g_{\theta})_\sharp Q_0 \text{ has density } p^{\star}.
\end{equation}

We use the method of Lagrange multipliers. Introduce a multiplier function $\lambda: \mathbb{R}^d \to \mathbb{R}$ and consider the functional:
\begin{equation}
    \mathcal{L}(q_0, \lambda) = \int_{\mathbb{R}^d} q_0(\bx_0) \log \frac{q_0(\bx_0)}{p_0(\bx_0)} d\bx_0 + \int_{\mathbb{R}^d} \lambda(\bx) \left( p^{\star}(\bx) - \rho_{q_0}(\bx) \right) d\bx,
\end{equation}
where $\rho_{q_0}(\bx)$ is the density of $(g_{\theta})_\sharp Q_0$.

For the constraint term, we can write:
\begin{align}
    \int_{\mathbb{R}^d} \lambda(\bx) \rho_{q_0}(\bx) d\bx &= \int_{\mathbb{R}^d} \lambda(g_{\theta}(\bx_0)) q_0(\bx_0) d\bx_0,
\end{align}
using the change of variables formula for pushforward measures.

Therefore:
\begin{equation}
    \mathcal{L}(q_0, \lambda) = \int_{\mathbb{R}^d} q_0(\bx_0) \left( \log \frac{q_0(\bx_0)}{p_0(\bx_0)} - \lambda(g_{\theta}(\bx_0)) \right) d\bx_0 + \int_{\mathbb{R}^d} \lambda(\bx) p^{\star}(\bx) d\bx.
\end{equation}

Taking the functional derivative with respect to $q_0(\bx_0)$ and setting to zero:
\begin{equation}
    \frac{\delta \mathcal{L}}{\delta q_0(\bx_0)} = \log \frac{q_0(\bx_0)}{p_0(\bx_0)} + 1 - \lambda(g_{\theta}(\bx_0)) = 0.
\end{equation}

This yields:
\begin{equation}
    q_0(\bx_0) = p_0(\bx_0) \exp[\lambda(g_{\theta}(\bx_0)) - 1].
\end{equation}

To satisfy the constraint, we need the density of $(g_{\theta})_\sharp Q_0$ to equal $p^{\star}(\bx)$. Using Part 1 in reverse, this happens when:
\begin{equation}
    q_0(\bx_0) = \frac{1}{Z^{\star}} p_0(\bx_0) \exp\left(\frac{r(g_{\theta}(\bx_0))}{\alpha}\right).
\end{equation}

Comparing with the optimality condition, we need:
\begin{equation}
    \lambda(g_{\theta}(\bx_0)) - 1 = \frac{r(g_{\theta}(\bx_0))}{\alpha} - \log Z^{\star}.
\end{equation}

Setting $\lambda(\bx) = \frac{r(\bx)}{\alpha} - \log Z^{\star} + 1$, we obtain $q_0 = p_0^{\star}$.

Uniqueness follows from the strict convexity of the KL divergence in its first argument.
\end{proof}

\paragraph{Objective for Learning the Tilted Noise Distribution.}
To learn a parameterized noise density $p_0^{\phi}(\bx_0)$ that approximates $p_0^{\star}(\bx_0)$, we minimize $D_{\mathrm{KL}}(p_0^{\phi} \| p_0^{\star})$.

\begin{proposition}[KL Objective for Learning Tilted Noise Density]
\label{prop:noise_objective}
Minimizing $D_{\mathrm{KL}}(p_0^{\phi} \| p_0^{\star})$ with respect to $\phi$ is equivalent to minimizing:
\begin{equation} \label{eq:app_noise_kl_objective_final}
    J_{\text{noise}}(\phi) = D_{\mathrm{KL}}(p_0^{\phi} \| p_0) - \frac{1}{\alpha} \mathbb{E}_{\bx_0 \sim p_0^{\phi}}[r(g_{\theta}(\bx_0))].
\end{equation}
\end{proposition}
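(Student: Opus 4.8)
The plan is to mirror the proof of Proposition~\ref{prop:gen_objective} exactly, since the present statement is its noise-space analogue: only the roles of the distributions change, with $p_0^{\star}$ and $p_0$ taking the place of $p^{\star}$ and $p^{\text{base}}$. The entire argument is a direct expansion of the KL divergence followed by discarding a term that is constant in $\phi$.

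First I would write $D_{\mathrm{KL}}(p_0^{\phi} \| p_0^{\star}) = \int_{\mathbb{R}^d} p_0^{\phi}(\bx_0) \log\frac{p_0^{\phi}(\bx_0)}{p_0^{\star}(\bx_0)}\,d\bx_0$ and substitute the closed form of $p_0^{\star}$ from Definition~\ref{def:tilted_noise}, namely $p_0^{\star}(\bx_0) = \frac{1}{Z^{\star}} p_0(\bx_0)\exp(r(g_{\theta}(\bx_0))/\alpha)$. The logarithm of the ratio then splits additively into three pieces: $\log\frac{p_0^{\phi}(\bx_0)}{p_0(\bx_0)}$, the term $-\frac{1}{\alpha} r(g_{\theta}(\bx_0))$, and the constant $+\log Z^{\star}$. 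Integrating against $p_0^{\phi}$ identifies the first piece with $D_{\mathrm{KL}}(p_0^{\phi} \| p_0)$ and the second with $-\frac{1}{\alpha}\mathbb{E}_{\bx_0 \sim p_0^{\phi}}[r(g_{\theta}(\bx_0))]$, while the third integrates to $\log Z^{\star}$ because $p_0^{\phi}$ is normalized. This yields $D_{\mathrm{KL}}(p_0^{\phi} \| p_0^{\star}) = J_{\text{noise}}(\phi) + \log Z^{\star}$.

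To close the argument I would observe that $Z^{\star}$ was shown in Equation~\eqref{eq:app_Z_star_noise_space_final} to depend only on $g_{\theta}$, $r$, and $\alpha$, and not on $\phi$. Hence the additive constant $\log Z^{\star}$ is irrelevant to the minimization, so minimizing $D_{\mathrm{KL}}(p_0^{\phi} \| p_0^{\star})$ over $\phi$ is equivalent to minimizing $J_{\text{noise}}(\phi)$, as claimed.

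There is essentially no hard step here; the only care needed is to confirm finiteness so that the manipulations are legitimate. I would invoke the standing assumption $\mathbb{E}_{\bx_0 \sim p_0}[e^{r(g_{\theta}(\bx_0))/\alpha}] < \infty$ to guarantee $Z^{\star} < \infty$, so that $\log Z^{\star}$ is a genuine finite constant, and I would rely on the fact that in our construction $p_0^{\phi} = (T_\phi)_\sharp p_0$ with $T_\phi$ a diffeomorphism, which ensures $p_0^{\phi}$ is absolutely continuous and shares the same support as $p_0$ and $p_0^{\star}$, making each integral well defined. With these regularity conditions in hand, the splitting of the integral is justified by linearity and the proof concludes.
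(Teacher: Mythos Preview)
Your proposal is correct and follows essentially the same approach as the paper: both expand $D_{\mathrm{KL}}(p_0^{\phi}\|p_0^{\star})$ by substituting the closed form of $p_0^{\star}$, split the logarithm into the three additive pieces, and drop the $\phi$-independent constant $\log Z^{\star}$. Your added remarks on finiteness and absolute continuity go slightly beyond what the paper states explicitly, but the core argument is identical.
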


\begin{proof}
Using the definition of $p_0^{\star}(\bx_0)$ from Equation~\eqref{eq:app_p0_star_def}:
\begin{align}
    D_{\mathrm{KL}}(p_0^{\phi} \| p_0^{\star}) &= \int_{\mathbb{R}^d} p_0^{\phi}(\bx_0) \log \frac{p_0^{\phi}(\bx_0)}{p_0^{\star}(\bx_0)} d\bx_0 \nonumber \\
    &= \int_{\mathbb{R}^d} p_0^{\phi}(\bx_0) \log \frac{p_0^{\phi}(\bx_0) Z^{\star}}{p_0(\bx_0) \exp\left(\frac{r(g_{\theta}(\bx_0))}{\alpha}\right)} d\bx_0 \nonumber \\
    &= \int_{\mathbb{R}^d} p_0^{\phi}(\bx_0) \left( \log \frac{p_0^{\phi}(\bx_0)}{p_0(\bx_0)} - \frac{r(g_{\theta}(\bx_0))}{\alpha} + \log Z^{\star} \right) d\bx_0 \nonumber \\
    &= D_{\mathrm{KL}}(p_0^{\phi} \| p_0) - \frac{1}{\alpha} \mathbb{E}_{\bx_0 \sim p_0^{\phi}}[r(g_{\theta}(\bx_0))] + \log Z^{\star}.
\end{align}
Since $\log Z^{\star}$ is constant with respect to $\phi$, minimizing $D_{\mathrm{KL}}(p_0^{\phi} \| p_0^{\star})$ is equivalent to minimizing $J_{\text{noise}}(\phi)$.
\end{proof}

\subsubsection{Connection to Stochastic Optimal Control}
\label{app:control_connection}

We now show how our result connects to the sophisticated stochastic optimal control framework of \citet{uehara2024finetuningcontinuoustimediffusionmodels} for fine-tuning continuous-time diffusion models, demonstrating that their approach naturally reduces to our simpler result for one-step generators.

\paragraph{Continuous-Time Framework.}
\citet{uehara2024finetuningcontinuoustimediffusionmodels} consider the entropy-regularized control problem:
\begin{equation}
    \max_{u,\nu} \mathbb{E}_{P^{u,\nu}}[r(\bx_T)] - \alpha \mathbb{E}_{P^{u,\nu}}\left[\int_0^T \frac{\|u(t,\bx_t)\|^2}{2\sigma^2(t)} dt + \log\frac{\nu(\bx_0)}{p_0(\bx_0)}\right]
\end{equation}
where $P^{u,\nu}$ is the path measure induced by the SDE with drift $f(t,\bx) + u(t,\bx)$ and $\nu$ the initial distribution to optimize.

\paragraph{Reduction to One-Step Generators.}
For a one-step generator $\bx = g_\theta(\bx_0)$, the stochastic process degenerates:
\begin{itemize}
    \item The evolution is deterministic: $\bx_T = g_\theta(\bx_0)$
    \item No drift control is needed: optimal $u \equiv 0$
    \item Only the initial distribution $\nu$ requires optimization
\end{itemize}

The objective reduces to:
\begin{equation}
    \max_{\nu} \mathbb{E}_{\bx_0 \sim \nu}[r(g_\theta(\bx_0))] - \alpha \cdot D_{\mathrm{KL}}(\nu \| p_0)
\end{equation}

\paragraph{Optimal Initial Distribution.}
According to their Corollary 2, the optimal initial distribution is:
\begin{equation}
    \nu^*(\bx_0) = \frac{\exp(v_0^*(\bx_0)/\alpha) \cdot p_0(\bx_0)}{Z^\star}
\end{equation}
where $v_0^*(\bx_0)$ is the value function at time $t=0$.

\paragraph{Value Function for Deterministic Generators.}
From their Lemma 1 (Feynman-Kac formulation), the value function satisfies:

\begin{equation}
    \exp\!\big(v_0^*(\bx_0)/\alpha\big)
    =
    \mathbb{E}_{P^{0,\nu}}\!\left[\exp\!\left(\frac{r(\bx_T)}{\alpha}\right) \,\middle|\, \bx_0\right]
\end{equation}

For the deterministic generator $g_\theta$:
\begin{align}
    \mathbb{E}[\exp(r(\bx_T)/\alpha) | \bx_0] &= \mathbb{E}[\exp(r(g_\theta(\bx_0))/\alpha) | \bx_0] \nonumber \\
    &= \exp(r(g_\theta(\bx_0))/\alpha) \quad \text{(deterministic given $\bx_0$)}
\end{align}

Therefore: $v_0^*(\bx_0) = r(g_\theta(\bx_0))$.

\paragraph{Final Result and Validation.}
Substituting back into the optimal distribution formula:
\begin{equation}
    \nu^*(\bx_0) = \frac{\exp(r(g_\theta(\bx_0))/\alpha) \cdot p_0(\bx_0)}{Z^*}
\end{equation}

where $Z^* = \int \exp(r(g_\theta(\bx_0))/\alpha) \cdot p_0(\bx_0) d\bx_0$.

This is precisely our  $p_0^\star$ in Definition~\ref{def:tilted_noise}. This alignment between the two frameworks is significant, as it confirms that:
\begin{enumerate}
\item Our direct variational approach and the general stochastic control theory yield the same optimal noise distribution.
\item This equivalence arises because for one-step generators, the continuous-time framework naturally collapses to our setting, with their value function $v_0^\star$ simplifying to the composed reward $r \circ g_\theta$.
 .
\item While both approaches are mathematically equivalent here, our proof provides a more elementary and direct path to the solution, sidestepping the complex machinery of stochastic control.
\end{enumerate}
This connection not only validates our result but also situates it as an important special case within the broader theory of entropy-regularized control, highlighting our method's efficiency for distilled models.

\subsection{Tractable KL Divergence for Noise Modification}
\label{app:tractable_kl}

We derive a tractable expression for $D_{\mathrm{KL}}(p_0^\phi \| p_0)$ where $p_0^\phi$ is the density of modified noise $\hat{\bx}_0 = T_\phi(\bx_0)$ with $T_\phi(\bx_0) = \bx_0 + f_\phi(\bx_0)$. This derivation involves the change of variables formula, simplification of Gaussian log-PDF terms, and an application of Stein's Lemma.

\paragraph{Setup and Minimal Assumptions.}
Let $T_\phi: \mathbb{R}^d \to \mathbb{R}^d$ be the residual transformation:
\begin{equation}
    T_\phi(\bx_0) = \bx_0 + f_\phi(\bx_0)
\end{equation}
where $f_\phi: \mathbb{R}^d \to \mathbb{R}^d$ is a learned perturbation function with Jacobian $J_{f_\phi}(\bx_0) = \frac{\partial f_\phi(\bx_0)}{\partial \bx_0^T}$.

\begin{assumption}[Regularity Conditions]
\label{ass:noise_regularity}
We assume:
\begin{enumerate}
    \item $f_\phi$ is continuously differentiable
    \item $T_\phi$ is a global diffeomorphism (invertible with continuous derivatives)
    \item $f_\phi$ satisfies the regularity conditions for Stein's lemma: $\mathbb{E}[\|f_\phi(\bx_0)\|^2] < \infty$ and $\mathbb{E}[\|\bx_0\| \|f_\phi(\bx_0)\|] < \infty$ for $\bx_0 \sim \mathcal{N}(\mathbf{0}, I)$
\end{enumerate}
\end{assumption}

\paragraph{Sufficient Condition for Global Diffeomorphism.}
While Assumption~\ref{ass:noise_regularity} requires $T_\phi$ to be a global diffeomorphism, we provide a practical sufficient condition:

\begin{lemma}[Lipschitz Condition for Invertibility]
\label{lem:lipschitz_invertibility}
If $f_\phi$ is $L$-Lipschitz continuous with $L < 1$, then $T_\phi$ is a global diffeomorphism.
\end{lemma}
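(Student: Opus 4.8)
The plan is to reduce the claim ``$T_\phi$ is a global diffeomorphism'' to three sub-statements and verify each using the strict contraction bound $L<1$: that $T_\phi = \mathrm{Id} + f_\phi$ is (i) injective, (ii) surjective, and (iii) a local $C^1$ diffeomorphism at every point, so that the resulting global inverse inherits continuous differentiability. Injectivity is immediate: if $T_\phi(\bx) = T_\phi(\by)$, then $\bx - \by = -(f_\phi(\bx) - f_\phi(\by))$, and taking norms with the Lipschitz bound gives $\|\bx - \by\| \le L\,\|\bx - \by\|$, which forces $\bx = \by$ since $L < 1$.

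For surjectivity, which I expect to be the heart of the argument, I would fix an arbitrary target $\by \in \mathbb{R}^d$ and recast the equation $T_\phi(\bx) = \by$ as a fixed-point problem for the map $\Phi_\by(\bx) := \by - f_\phi(\bx)$. Since $f_\phi$ is $L$-Lipschitz with $L < 1$, the map $\Phi_\by$ is a strict contraction on the complete space $\mathbb{R}^d$, so the Banach fixed-point theorem yields a unique $\bx^\star$ with $\Phi_\by(\bx^\star) = \bx^\star$, i.e. $T_\phi(\bx^\star) = \by$. This is precisely the step where strictness $L<1$ (rather than $L \le 1$) is essential and cannot be relaxed, so I regard it as the main obstacle—though a standard one.

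Finally, for smoothness I would pass to the Jacobian. Continuous differentiability of $f_\phi$ together with the $L$-Lipschitz property gives the operator-norm bound $\|J_{f_\phi}(\bx)\| \le L < 1$ at every $\bx$, so for any unit vector $\bv$ one has $\|(I + J_{f_\phi}(\bx))\bv\| \ge 1 - L > 0$; hence $J_{T_\phi}(\bx) = I + J_{f_\phi}(\bx)$ is invertible and $\det J_{T_\phi}(\bx) \neq 0$ everywhere. The inverse function theorem then makes $T_\phi$ a local $C^1$ diffeomorphism at each point, and because $T_\phi$ is already a global bijection by (i)--(ii), these local inverses glue into a single globally defined $C^1$ map $T_\phi^{-1}$, establishing that $T_\phi$ is a global diffeomorphism. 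Injectivity and local invertibility are routine consequences of the same norm estimate that drives the contraction, so the argument is essentially self-contained once the fixed-point step is in place.
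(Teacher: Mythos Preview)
Your proposal is correct and matches the paper's proof essentially step for step: injectivity from the contraction estimate, surjectivity via the Banach fixed-point map $\bx \mapsto \by - f_\phi(\bx)$, and global $C^1$ invertibility from the bound $\|J_{f_\phi}\|\le L<1$ together with the inverse function theorem. The only cosmetic difference is that the paper phrases the injectivity step via explicit bi-Lipschitz inequalities, whereas you argue it directly from $T_\phi(\bx)=T_\phi(\by)$.
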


\begin{proof}
Bi-Lipschitz bounds: for any $\bx_0,\bx_0'$,
\begin{align}
    \|T_\phi(\bx_0) - T_\phi(\bx_0')\|
    &\le
    \|\bx_0 - \bx_0'\| + \|f_\phi(\bx_0) - f_\phi(\bx_0')\|
    \le
    (1+L)\,\|\bx_0 - \bx_0'\|, \\
    \|T_\phi(\bx_0) - T_\phi(\bx_0')\|
    &\ge
    \|\bx_0 - \bx_0'\| - \|f_\phi(\bx_0) - f_\phi(\bx_0')\|
    \ge
    (1-L)\,\|\bx_0 - \bx_0'\|.
\end{align}
Hence $T_\phi$ is injective. For surjectivity, fix any target $\by$ and define $G_\by(\bz)=\by - f_\phi(\bz)$, a contraction with constant $L<1$. By Banach’s fixed-point theorem there exists a unique $\bz^\star$ with $\bz^\star=G_\by(\bz^\star)$, i.e., $T_\phi(\bz^\star)=\by$. Finally, $J_{T_\phi}(\bx_0)=I+J_{f_\phi}(\bx_0)$ is invertible for all $\bx_0$ (its smallest singular value is at least $1-L>0$), and the inverse is $C^1$ by the inverse function theorem. Thus $T_\phi$ is a global $C^1$ diffeomorphism.
\end{proof}

\paragraph{KL Divergence via Change of Variables.}
Under Assumption~\ref{ass:noise_regularity}, we can apply the change of variables formula. The KL divergence is:
\begin{align}
    D_{\mathrm{KL}}(p_0^\phi \| p_0) &= \mathbb{E}_{\hat{\bx}_0 \sim p_0^\phi} \left[ \log \frac{p_0^\phi(\hat{\bx}_0)}{p_0(\hat{\bx}_0)} \right] \\
    &= \mathbb{E}_{\bx_0 \sim p_0} \left[ \log \frac{p_0^\phi(T_\phi(\bx_0))}{p_0(T_\phi(\bx_0))} \right] \label{eq:kl_change_var}
\end{align}

By the change of variables formula:
\begin{equation}
    p_0^\phi(T_\phi(\bx_0)) = p_0(\bx_0) |\det(J_{T_\phi}(\bx_0))|^{-1}
\end{equation}

Since $J_{T_\phi}(\bx_0) = I + J_{f_\phi}(\bx_0)$, substituting into Equation~\eqref{eq:kl_change_var}:
\begin{equation}
    D_{\mathrm{KL}}(p_0^\phi \| p_0) = \mathbb{E}_{\bx_0 \sim p_0} \left[ \log p_0(\bx_0) - \log p_0(T_\phi(\bx_0)) - \log |\det(I + J_{f_\phi}(\bx_0))| \right] \label{eq:kl_general}
\end{equation}

\paragraph{Specialization to Gaussian Base Distribution.}
For $p_0(\bx_0) = \mathcal{N}(\mathbf{0}, I)$, the log-density difference simplifies:
\begin{align}
    \log p_0(\bx_0) - \log p_0(T_\phi(\bx_0)) &= -\frac{1}{2}\|\bx_0\|^2 + \frac{1}{2}\|T_\phi(\bx_0)\|^2 \\
    &= -\frac{1}{2}\|\bx_0\|^2 + \frac{1}{2}\|\bx_0 + f_\phi(\bx_0)\|^2 \\
    &= \bx_0^T f_\phi(\bx_0) + \frac{1}{2}\|f_\phi(\bx_0)\|^2 \label{eq:gaussian_log_diff}
\end{align}

Substituting Equation~\eqref{eq:gaussian_log_diff} into Equation~\eqref{eq:kl_general}:
\begin{equation}
    D_{\mathrm{KL}}(p_0^\phi \| p_0) = \mathbb{E}_{\bx_0 \sim \mathcal{N}(\mathbf{0},I)} \left[ \bx_0^T f_\phi(\bx_0) + \frac{1}{2} \|f_\phi(\bx_0)\|^2 - \log |\det(I + J_{f_\phi}(\bx_0))| \right] \label{eq:kl_pre_stein}
\end{equation}

\paragraph{Application of Stein's Lemma.}
Under the regularity conditions in Assumption~\ref{ass:noise_regularity}, Stein's lemma applies:

\begin{lemma}[Stein's Lemma for Vector Fields]
\label{lem:steins_lemma}
Let $\bx \sim \mathcal{N}(\mathbf{0}, I)$ and $h: \mathbb{R}^d \to \mathbb{R}^d$ satisfy $\mathbb{E}[\|h(\bx)\|^2] < \infty$ and $\mathbb{E}[\|\bx\| \|h(\bx)\|] < \infty$. Then:
\begin{equation}
    \mathbb{E}[\bx^T h(\bx)] = \mathbb{E}[\text{Tr}(J_h(\bx))]
\end{equation}
\end{lemma}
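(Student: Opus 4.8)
The plan is to prove the identity by reducing it to a one-dimensional Gaussian integration-by-parts argument, exploiting the defining property of the standard Gaussian density that its score is linear. Concretely, writing $p_0(\bx) = (2\pi)^{-d/2}\exp(-\tfrac{1}{2}\|\bx\|^2)$, the key structural fact is $\nabla_{\bx} p_0(\bx) = -\bx\, p_0(\bx)$, equivalently $\partial_{x_i} p_0(\bx) = -x_i\, p_0(\bx)$ coordinate-wise. I would first decompose both sides of the claim into sums over coordinates, using $\bx^T h(\bx) = \sum_{i=1}^d x_i h_i(\bx)$ and $\operatorname{Tr}(J_h(\bx)) = \sum_{i=1}^d \partial_{x_i} h_i(\bx)$. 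It then suffices to establish the single-index identity $\mathbb{E}[x_i h_i(\bx)] = \mathbb{E}[\partial_{x_i} h_i(\bx)]$ for each $i$ and sum over $i$. The integrability hypothesis $\mathbb{E}[\|\bx\|\,\|h(\bx)\|] < \infty$ guarantees that the left-hand side is absolutely integrable, so each term $\mathbb{E}[x_i h_i(\bx)]$ is well-defined and finite.

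For the single-index identity I would invoke Fubini (justified by the same integrability bound) to integrate first in the variable $x_i$, holding the remaining coordinates $\bx_{-i}$ fixed. Since the density factorizes as $p_0(\bx) = \varphi(x_i)\prod_{j\neq i}\varphi(x_j)$ with $\varphi$ the one-dimensional standard Gaussian density, the score identity reads $x_i\varphi(x_i) = -\varphi'(x_i)$. Substituting this and integrating by parts in the single variable $x_i$ gives
\begin{equation}
\int_{\mathbb{R}} x_i h_i(\bx)\,\varphi(x_i)\,dx_i = \bigl[-h_i(\bx)\varphi(x_i)\bigr]_{x_i=-\infty}^{x_i=+\infty} + \int_{\mathbb{R}} \partial_{x_i} h_i(\bx)\,\varphi(x_i)\,dx_i,
\end{equation}
where continuous differentiability of $h$ (Assumption~\ref{ass:noise_regularity}, part 1) ensures $h_i$ is absolutely continuous in $x_i$ so the integration by parts is valid. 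Integrating the surviving term back over $\bx_{-i}$ and summing over $i$ yields exactly $\mathbb{E}[\operatorname{Tr}(J_h(\bx))]$.

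The main obstacle will be showing that the boundary term $\bigl[-h_i(\bx)\varphi(x_i)\bigr]_{-\infty}^{+\infty}$ vanishes, since $h$ is not assumed bounded and could grow. The strategy here is to exploit the super-exponential decay of $\varphi$ against only polynomial-in-expectation growth of $h$: from $\mathbb{E}[\|\bx\|\,\|h(\bx)\|]<\infty$ and $\mathbb{E}[\|h(\bx)\|^2]<\infty$, Fubini implies that for almost every fixed $\bx_{-i}$ the one-dimensional integrals $\int |x_i h_i|\varphi\,dx_i$ and $\int |h_i|^2\varphi\,dx_i$ are finite, which forces $\liminf_{x_i\to\pm\infty} |h_i(\bx)|\varphi(x_i) = 0$; continuity of $h_i$ then upgrades this to $\lim_{x_i\to\pm\infty} h_i(\bx)\varphi(x_i) = 0$, killing the boundary term for almost every $\bx_{-i}$. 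A dominated-convergence argument, using the finite integrability bounds as the dominating envelope, justifies passing the resulting almost-everywhere identity through the outer integral over $\bx_{-i}$. I expect this boundary-term control to be the only genuinely delicate point; the algebraic steps (score substitution, summation to recover the trace) are routine once the integration by parts is licensed.
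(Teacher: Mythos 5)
The paper never actually proves this lemma --- it states it as the classical Stein identity and immediately applies it --- so your attempt can only be judged on its own merits, and it contains one genuine gap: the boundary-term step. Your reduction to the one-dimensional identity, the Gaussian score substitution, and the observation that finiteness of $\int |x_i h_i(\bx)|\varphi(x_i)\,dx_i$ forces $\liminf_{x_i\to\pm\infty}|h_i(\bx)|\varphi(x_i)=0$ are all fine, but the claim that ``continuity of $h_i$ then upgrades this to $\lim_{x_i\to\pm\infty}h_i(\bx)\varphi(x_i)=0$'' is false. Continuity provides no such upgrade, because a continuous function may have tall, thin spikes: already in $d=1$, take $h$ continuous, vanishing off small neighborhoods of the integers, with $h(n)=e^{n^2}$ on a spike of width $e^{-2n^2}$ around each $n$. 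Then $\E[h(\bx)^2]$ and $\E[|\bx|\,|h(\bx)|]$ are both finite (the spike contributions are $O(e^{-n^2/2})$ and $O(n e^{-3n^2/2})$ respectively), yet $h(n)\varphi(n)=e^{n^2/2}/\sqrt{2\pi}\to\infty$, so $h\varphi$ has $\liminf 0$ but $\limsup\infty$ and the boundary term cannot be dismissed pointwise. In such examples $\int h'\varphi\,dx$ also fails to converge, which exposes the underlying issue: the stated hypotheses never guarantee that the right-hand side $\E[\Tr(J_h(\bx))]$ is even well defined. Any correct proof must use the (tacit) assumption $\E[|\partial_{x_i}h_i(\bx)|]<\infty$ for each $i$ --- your dominated-convergence step at the end quietly needs it too.

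The repair is standard and preserves your coordinatewise structure. Once $\E[|\partial_{x_i}h_i(\bx)|]<\infty$ is made explicit, two routes close the gap. First, keep your integration by parts but perform it on a finite window: for a.e.\ $\bx_{-i}$, $\int_{-M}^{N} x_i h_i\varphi\,dx_i = \bigl[-h_i\varphi\bigr]_{-M}^{N}+\int_{-M}^{N}\partial_{x_i}h_i\,\varphi\,dx_i$, where now \emph{both} integrals converge absolutely as $M,N\to\infty$, hence $h_i(N)\varphi(N)$ and $h_i(-M)\varphi(-M)$ each converge; if $h_i(N)\varphi(N)\to a\neq 0$ then $|h_i(x_i)|\gtrsim |a|\,e^{x_i^2/2}$ eventually, making $\int|x_i h_i|\varphi\,dx_i$ diverge --- contradiction --- so both limits are zero. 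This replaces your liminf-plus-continuity step with an argument requiring no pointwise decay of $h$. Second, one can avoid boundary terms altogether via the classical Stein device: write $\varphi(t)=\int_t^{\infty}s\varphi(s)\,ds$ for $t\ge 0$ (and its mirror for $t<0$), substitute into $\int \partial_{x_i}h_i\,\varphi\,dx_i$, and swap the order of integration by Fubini--Tonelli, which is licensed exactly by $\E[|\partial_{x_i}h_i|]<\infty$; the identity $\E[x_i h_i(\bx)]=\E[\partial_{x_i}h_i(\bx)]$ drops out with no limits to control. Everything else in your plan --- the trace decomposition, Fubini to reduce to a.e.\ slices, and the final reassembly over $i$ --- is sound.
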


Applying Lemma~\ref{lem:steins_lemma} to Equation~\eqref{eq:kl_pre_stein}, we obtain:
\begin{equation}
    D_{\mathrm{KL}}(p_0^\phi \| p_0) = \mathbb{E}_{\bx_0 \sim p_0} \left[ \frac{1}{2} \|f_\phi(\bx_0)\|^2 + \text{Tr}(J_{f_\phi}(\bx_0)) - \log |\det(I + J_{f_\phi}(\bx_0))| \right] \label{app:kl_after_steins_lemma_noise}
\end{equation}

This is exactly the expression referenced in the main text.

\paragraph{Log-Determinant Approximation Analysis.}
Let $\mathcal{E}(A) \coloneqq \text{Tr}(A) - \log |\det(I + A)|$. Then Equation~\eqref{app:kl_after_steins_lemma_noise} can be rewritten as:
\begin{equation}
    D_{\mathrm{KL}}(p_0^\phi \| p_0) = \mathbb{E}_{\bx_0 \sim p_0} \left[ \frac{1}{2} \|f_\phi(\bx_0)\|^2 + \mathcal{E}(J_{f_\phi}(\bx_0)) \right]
\end{equation}

To simplify this expression, we analyze the error term $\mathcal{E}(J_{f_\phi}(\bx_0))$. The following theorem provides a bound on this term under a Lipschitz assumption on $f_\phi$.

\begin{theorem}[Bound on Log-Determinant Approximation Error]
\label{lemma:log_det_error}
Let $A = J_{f_\phi}(\bx_0)$ be the $d \times d$ Jacobian matrix of $f_\phi(\bx_0)$. Assume $f_\phi$ is $L$-Lipschitz continuous, such that its Lipschitz constant $L < 1$. This implies that the spectral radius $\rho(A) \leq L < 1$. Then, the error term $\mathcal{E}(A) = \text{Tr}(A) - \log |\det(I + A)|$ is bounded by:
\begin{equation}
    |\mathcal{E}(A)| \leq d(-\log(1-L) - L)
\end{equation}
\end{theorem}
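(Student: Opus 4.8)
The plan is to reduce the matrix statement to a scalar estimate over the eigenvalues of $A$. Since $A = J_{f_\phi}(\bx_0)$ is a real $d \times d$ matrix, let $\lambda_1, \dots, \lambda_d$ denote its eigenvalues counted with algebraic multiplicity, so that $\Tr(A) = \sum_i \lambda_i$ and $\det(I+A) = \prod_i (1+\lambda_i)$. These identities hold for any square matrix via the characteristic polynomial, so no diagonalizability assumption is needed. The hypothesis $\rho(A) \le L < 1$ gives $|\lambda_i| \le L < 1$ for every $i$, so each $1+\lambda_i$ lies in the open right half-plane.

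First I would dispose of the absolute value around the determinant. Real eigenvalues satisfy $1+\lambda_i > 0$, and complex eigenvalues occur in conjugate pairs contributing $(1+\lambda)(1+\bar\lambda) = |1+\lambda|^2 > 0$, so $\det(I+A) > 0$. Hence $\log|\det(I+A)| = \log\det(I+A) = \sum_i \log|1+\lambda_i|$, using the principal branch of the logarithm (legitimate since $|\lambda_i| < 1$), and therefore $\mathcal{E}(A) = \sum_i \big( \Re\lambda_i - \log|1+\lambda_i| \big)$, which is real.

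Next comes the key step, a per-eigenvalue Taylor estimate. For $|\lambda| < 1$ the expansion $\log(1+\lambda) = \sum_{k \ge 1} \frac{(-1)^{k-1}}{k}\lambda^k$ converges absolutely, so $\lambda - \log(1+\lambda) = \sum_{k \ge 2} \frac{(-1)^k}{k}\lambda^k$. Taking real parts and applying $|\Re z| \le |z|$ together with the triangle inequality gives $\big| \Re\lambda - \log|1+\lambda| \big| \le \sum_{k \ge 2} \frac{|\lambda|^k}{k} \le \sum_{k \ge 2} \frac{L^k}{k}$, where the final step uses $|\lambda| \le L$. Recognizing $\sum_{k \ge 2} \frac{L^k}{k} = -\log(1-L) - L$ (subtract the $k=1$ term from the series for $-\log(1-L)$) yields the scalar bound $\big| \Re\lambda - \log|1+\lambda| \big| \le -\log(1-L) - L$.

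Finally I would sum over the $d$ eigenvalues: by the triangle inequality $|\mathcal{E}(A)| \le \sum_i \big| \Re\lambda_i - \log|1+\lambda_i| \big| \le d(-\log(1-L) - L)$, which is the claim. I expect the main subtlety to be the bookkeeping around complex eigenvalues, specifically justifying $\det(I+A) > 0$ so the modulus disappears and correctly passing to real parts so that the complex Taylor terms are controlled by $|\lambda| \le L$ rather than by the individual real or imaginary parts. Everything else reduces to routine power-series manipulation.
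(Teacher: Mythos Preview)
Your proof is correct and takes essentially the same approach as the paper: both expand $\log\det(I+A)$ as a power series and bound the $k\ge 2$ tail by $\sum_{k\ge 2} dL^k/k = d(-\log(1-L)-L)$. The paper organizes the computation at the trace level, writing $\mathcal{E}(A)=\sum_{k\ge 2}\tfrac{(-1)^k}{k}\Tr(A^k)$ and invoking $|\Tr(A^k)|\le d\,\rho(A)^k$, whereas you work per eigenvalue and are somewhat more careful about the complex case (conjugate pairing for $\det(I+A)>0$, passing to real parts); the underlying argument is identical.
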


\begin{proof}
Since $f_\phi$ is $L$-Lipschitz, the spectral norm of its Jacobian satisfies $\|A\|_2 \leq L$. This implies the spectral radius $\rho(A) \leq \|A\|_2 \leq L < 1$, ensuring all eigenvalues $\lambda_i(A)$ satisfy $|\lambda_i(A)| < 1$.

Since $1 + \lambda_i(A) > 0$ for all $i$, we have $\det(I + A) > 0$, so $\log |\det(I + A)| = \log \det(I + A)$.

For $\rho(A) < 1$, the matrix logarithm series converges:
\begin{equation}
    \log \det(I + A) = \sum_{k=1}^{\infty} \frac{(-1)^{k-1}}{k} \text{Tr}(A^k)
\end{equation}

Therefore:
\begin{align}
    \mathcal{E}(A) &= \text{Tr}(A) - \sum_{k=1}^{\infty} \frac{(-1)^{k-1}}{k} \text{Tr}(A^k) \\
    &= \sum_{k=2}^{\infty} \frac{(-1)^{k}}{k} \text{Tr}(A^k)
\end{align}

Taking absolute values and using $|\text{Tr}(A^k)| \leq d \cdot \rho(A)^k \leq d \cdot L^k$:
\begin{align}
    |\mathcal{E}(A)| &\leq \sum_{k=2}^{\infty} \frac{d \cdot L^k}{k} \\
    &= d \left(\sum_{k=1}^{\infty} \frac{L^k}{k} - L\right) \\
    &= d(-\log(1-L) - L)
\end{align}
\end{proof}

\paragraph{Practical Approximation and Final Objective.}
Theorem~\ref{lemma:log_det_error} shows that if the Lipschitz constant $L$ of $f_\phi$ is sufficiently small (specifically, $L < 1$), the error term $|\mathcal{E}(A)|$ is bounded. For small $L$, $-\log(1-L) - L \approx L^2/2$, making the bound approximately $dL^2/2$. Thus, the expected error $\mathbb{E}_{\bx_0 \sim p_0}[\mathcal{E}(J_{f_\phi}(\bx_0))]$ becomes negligible if $L$ is kept small. Under this condition, we can approximate the KL divergence with:
\begin{equation}
    D_{\mathrm{KL}}(p_0^\phi \| p_0) \approx \mathbb{E}_{\bx_0 \sim p_0} \left[ \frac{1}{2} \|f_\phi(\bx_0)\|^2 \right] \label{app:l2_kl_approximation_final_noise}
\end{equation}

This approximation simplifies the KL divergence term in our objective to a computationally tractable $L_2$ penalty on the magnitude of the noise modification $f_\phi(\bx_0)$.

\paragraph{Integration with Main Objective.}
Combining our approximation with Proposition~\ref{prop:noise_objective}, and substituting Equation~\eqref{app:l2_kl_approximation_final_noise} into our initial noise modulation objective, we arrive at the final loss to minimize:

\begin{equation}
    \mathcal{L}_{\mathrm{noise}}(\phi)
    =
    \mathbb{E}_{\bx_0 \sim p_0} \left[
        \frac{1}{2}\,\|f_\phi(\bx_0)\|^2
        -
        \frac{1}{\alpha}\, r\big(g_{\theta}(\bx_0 + f_\phi(\bx_0))\big)
    \right]
    \label{app:final_l2_objective_initial_noise_mod}
\end{equation}

This objective balances reward maximization against the KL regularization term, providing a principled and computationally tractable approach to learning the reward-tilted noise distribution.

\paragraph{Practical Implementation Considerations.}
The validity of our approximation depends on maintaining small Lipschitz constants. In practice, this is supported by:
\begin{enumerate}
    \item \textbf{Initialization}: Setting $f_\phi(\cdot) \equiv \mathbf{0}$ ensures $\mathcal{E}(A) = 0$ initially
    \item \textbf{Regularization}: The term $\frac{1}{2}\|f_\phi(\bx_0)\|^2$ naturally penalizes large perturbations, helping maintain small eigenvalues of $J_{f_\phi}$
\end{enumerate}

While we do not explicitly enforce $L < 1$ during training, these practical measures help maintain $f_\phi$ in a regime where our approximation remains accurate throughout the optimization process.

\section{Experimental and Implementation Details}
\label{app:exp-details}
In this Section we report the details for all of our experimental results. We mainly use the SANA-Sprint 0.6B~\cite{sanasprint} model, and train it using one-step generation. Additionally, we use the default guidance scale of $4.5$ for all experiments. After training, we evaluate our models using different amounts of NFEs with one forward pass of Noise Hypernetwork beforehand.

\paragraph{LoRA parameterization} We parameterize our noise hypernetwork $f_\phi$ with LoRA weights on top of the base distilled generative model. We found this to be important mainly to reuse the conditional pathways learned by the base model. This is especially important for complex conditioning, like text. Without this paramertization, which we also explored initially, we found it difficult for the noise hypernetwork to learn an effective conditioning with limit data. While larger-scale training could be a solution to this, we found this LoRA parameterization to be an efficient solution. For a condition independent reward, e.g. the redness one, it is less important to choose such a parametrization.

\paragraph{Initialization} As described in Section~\ref{subsec:lins_implementation_algorithm}, we initialize the noise network to output $f_\phi(\cdot)= \mathbf{0})$ at the start of training. We implement this by setting the output of the last base layer to $\mathbf{0}$ and initializing the LoRA weights of the second LoRA weight matrix (also reffered to as B) to $0$. This effectively initializes $f_\phi(\cdot)= \mathbf{0})$. For a stable training, this initialization is important as the model $g_\theta(f_\phi(\bx_0)+\bx_0))$ generates meaningful images at the start of training. In that way $f_\phi$ only needs to learn how to refine $\bx_0$.

\paragraph{Memory efficient implementation.} Section~\ref{subsec:lins_implementation_algorithm}, we train our noise hypernetwork $f_\phi$ as a special LoRA version of our base model $g_\theta$, which ignores the last layer of the base model. As visualized in Figure~\ref{fig:concept-hypernoise}, we only need to keep the base model in memory once. Thus, the GPU memory overhead is just the added LoRA weights $\phi$. Additionally, we employ Pytorch Memsave~\cite{bhatia2024lowering} to all models, which further reduces the needed GPU memory during training enabling us to use larger batch sizes. We run all experiments in \texttt{bfloat16}. Additionally, we can leverage gradient checkpointing on the first call of the model with activated LoRA parameters to further reduce memory. We use this for our FLUX-Schnell training.

\subsection{Redness Reward}
\label{app:redness-details}
For the Redness Reward, we use SANA-Sprint 0.6B~\cite{sanasprint} as the base model. We train the model with the redness reward $$r(\mathbf{x}) = \tfrac{1}{100} *(\mathbf{x}^0 - \frac{1}{2}(\mathbf{x}^1 + \mathbf{x}^2)),$$ 
where $\bx^i$ denotes the i-th color channel of $\bx$. We use the same amount of LoRA parameters for fine-tuning and noise hypernetwork training. In general, we keep the hyperparameters for our comparison between fine-tuning and noise hypernetwork training exactly the same. Due to the sake of illustration, we lower the learning rate for fine-tuning in this case as otherwise the model collapses to generating pure red images after a few training steps. We train on 30 prompts from the GenEval~\cite{ghosh2023geneval} promptset and evaluate on the four unseen prompts \texttt{["A photo of a parrot", "A photo of a dolphin", "A photo of a train", "A photo of a car"]}. After each epoch on the 30 prompts, we compute the redness reward as well as an "imageness score" for each of the 4 evaluation prompts and average. For the imageness score, we use the ImageReward~\cite{imagereward} human-preference reward model as it was shown to correctly quantify prompt-following capabilities. We provide the full hyperparameters in Table~\ref{tab:red-reward-params}. This experiment was conducted on 1 H100 GPU.

\begin{table}[h]
    \centering
    \caption{Hyperparameters for the Redness Reward setting}
    \label{tab:red-reward-params}
    \vspace{1em}
    \begin{tabular}{l|cc}
        \toprule
         & Fine-tuning & Noise 
         Hypernetwork \\
         \midrule
         Model & SANA-Sprint~\cite{sanasprint} & SANA-Sprint~\cite{sanasprint} \\
         Learning rate & $1e-4$ & $1e-3$ \\
         GradNorm Clipping & 1.0 & 1.0 \\
         LoRA rank & $128$ & $128$ \\
         LoRA alpha & $256$ & $256$ \\
         Optimizer & SGD & SGD\\
         Batch size & 3 & 3 \\
         Training epochs & 200 & 200 \\
         Number of training prompts & 30 & 30 \\
         Image size & $1024 \times 1024$ & $1024 \times 1024$ \\
         \bottomrule
    \end{tabular}
\end{table}

\subsection{Human Preference Reward Models}
\label{app:human-preference}
For our large-scale experiments, we consider SD-Turbo~\cite{sdxlturbo} and SANA-Sprint~\cite{sanasprint} as our two base models. For SD-Turbo we generate images in $512 \times 512$ while for SANA-Sprint we generate them of size $1024 \times 1024$. The training for the noise hypernetwork is done using \textasciitilde 70k prompts from Pick-a-Picv2~\cite{pickscore}, T2I-Compbench train set~\cite{huang2023t2icompbench}, and Attribute Binding (ABC-6K)~\cite{feng2023structured} prompts. As the reward we follow ReNO~\cite{reno} and use a combination of human-preference trained reward models consisting of ImageReward~\cite{imagereward}, HPSv2.1~\cite{hpsv2}, PickScore~\cite{pickscore}, and CLIP-Score~\cite{openclip}. To balance these, we weigh each reward model with the same weightings as proposed in ReNO~\cite{reno} and employ them with the following implementation details. All training runs were conducted on 6 H100 GPUs.
\paragraph{Human Preference Score v2.1 (HPSv2.1)}
HPSv2.1~\cite{hpsv2} is an improved version of the HPS~\cite{hps} model, which uses an OpenCLIP ViT-H/14 model and is trained on prompts collected from DiffusionDB~\cite{diffusiondb} and other sources.
\paragraph{PickScore}
PickScore also uses the same ViT-H/14 model, however is trained on the Pick-a-Pic dataset which consists of 500k+ preferences that are collected through crowd-sourced prompts and comparisons. 
\paragraph{ImageReward}
ImageReward~\cite{imagereward} trains a MLP over the features extracted from a  BLIP model~\cite{blip}. This is trained on a dataset of images collected from the DiffusionDB~\cite{diffusiondb} prompts.
\paragraph{CLIPScore}
Lastly, we use CLIPScore~\citep{clip,hessel2022clipscore}, which was not designed specifically as a human preference reward model. However, it measures the text-image alignment with a score between 0 and 1. Thus, it offers a way of evaluating the prompt faithfulness of the generated image that can be optimized. We use the model provided by OpenCLIP~\citep{openclip} with a ViT-H/14 backbone.

\begin{table}[h]
    \centering
    \caption{Hyperparameters for the Human-preference Reward setting}
    \label{tab:human-pref-reward-params}
    \vspace{1em}
    \resizebox{\textwidth}{!}{%
    \begin{tabular}{l|cccc}
    \toprule
          & Noise 
         Hypernetwork & Fine-tuning & Noise 
         Hypernetwork & Noise 
         Hypernetwork \\
         \midrule
         Model & SD-Turbo~\cite{sdxlturbo} & SANA-Sprint~\cite{sanasprint} & SANA-Sprint~\cite{sanasprint} & FLUX-Schnell\\
         Learning rate & $1e-3$  & $1e-3$ & $1e-3$ & $2e-5$\\
         GradNorm Clipping & 1.0  & 1.0 & 1.0 & 1.0 \\
         LoRA rank & $128$  & $128$ & $128$ & $128$ \\
         LoRA alpha & $256$ & $256$ & $256$ & $5$ \\
         Optimizer & SGD  & SGD & SGD & AdamW\\
         Batch size & 48 & 18 & 18 & 7\\
         Accumulation Steps & 1 & 3 & 3 & 4\\
         Training Epochs & $\approx 25$ & $\approx 25$ & $\approx 25$ & $\approx 25$\\
         Number of training prompts & $\approx 70k$  & $\approx 70k$ & $\approx 70k$ & $\approx 70k$ \\
         Image size & $512 \times 512$  & $1024 \times 1024$ & $1024 \times 1024$& $512 \times 512$ \\
    \bottomrule
    \end{tabular}
    }
\end{table}

\paragraph{GenEval}
Our main evaluation metric is GenEval, an object-focused framework introduced by \citet{ghosh2023geneval} for evaluating the alignment between text prompts and generated images from Text-to-Image (T2I) models. GenEval leverages existing object detection methods to perform a fine-grained, instance-level analysis of compositional capabilities. The framework assesses various aspects of image generation, including object co-occurrence, position, count, and color. By linking the object detection pipeline with other discriminative vision models, GenEval can further verify properties like object color. All the metrics on the GenEval benchmarks are evaluated using a MaskFormer object detection model with a Swin Transformer~\citep{liu2021Swin} backbone. Lastly, GenEval is evaluated over four seeds and reports the mean for each metric, which we follow. Note that our FLUX-Schnell differ from the ones in \citet{reno} as we use \texttt{bfloat16} instead of \texttt{float16}.

\subsection{Test-time techniques}
For ReNO~\cite{reno}, we use the default parameters as described in their paper with $50$ forward passes for one image generation. For Best-of-N~\cite{imageselect} we use $N=50$ with the same reward ensemble for a fair comparison. For LLM-based prompt optimization~\cite{mañas2024improvingtexttoimageconsistencyautomatic,ashutosh2025llmsheartraining}, we use the default setup from the MILS~\cite{ashutosh2025llmsheartraining} repository (\url{https://github.com/facebookresearch/MILS/blob/main/main_image_generation_enhancement.py}) with local Llama 3.1 8B Instruct as the LLM. The time reflected in Table~\ref{tab:geneval} reflects these local LLM calls. Note that we left the GPU memory to just the base image generation model. We modify the hyperparameters to 5 prompt proposals for each LLM call and 10 iterations, such that we also end up with 50 image evaluations for a fair comparison.

\section{Additional results}
\label{app:additional-results}

In this section we report additional quantiative ablation results and further qualitative results.

\subsection{Additional Benchmarks}
\label{app:benchmarks}
Here, we report further results on two more benchmarks commonly employed in the evaluation of T2I generation. Note that again, none of the prompts in the used benchmarks are part of the training data, showcasing the generalizability of the Noise Hypernetwork to unseen prompts and also that our optimization objective through human-preference reward mdoels is disentangled from these benchmarks.

\begin{wraptable}{r}{0.6\textwidth}
\vspace{-1.5em}
\caption{\textbf{Quantitative Results on T2I-CompBench}. The Noise Hypernetwork consistently improves performance.} 
\label{tab:t2icompbench}
\begin{adjustbox}{width=0.6\textwidth}
\begin{tabular}
{lcccccc}
\toprule

\textbf{SANA-Sprint 0.6B}~\cite{sanasprint} &
\textbf{NFEs} &
{\bf Color $\uparrow$ } &
{\bf Shape$\uparrow$} &
{\bf Texture$\uparrow$} &
\\
\midrule
One-step & 1 & 0.72 & 0.49 & 0.63\\
\textbf{+ Noise Hypernetwork} & 2 & \textbf{0.75} & \textbf{0.53} & \textbf{0.64}\\
\midrule
Two-step & 2 & 0.73 & 0.50 & 0.64\\
\textbf{+ Noise Hypernetwork} & 3 & \textbf{0.76} & \textbf{0.53} & 0.64\\
\midrule
Four-step & 4 & 0.73 & 0.50 & 0.64\\
\textbf{+ Noise Hypernetwork} & 5 & \textbf{0.76} & \textbf{0.54} & \textbf{0.65}\\
\bottomrule
\end{tabular}
\end{adjustbox}
\vspace{-1em}
\end{wraptable}

\paragraph{T2I-CompBench.}
T2I-CompBench is a comprehensive benchmark proposed by \citet{compt2i} for evaluating the compositional capabilities of text-to-image generation models. We evaluate on the Attribute binding tasks, which includes color, shape, and texture sub-categories, where the model should bind the attributes with the correct objects to generate the complex scene. The attribute binding subtasks are evaluated using BLIP-VQA (i.e., generating questions based on the prompt and applying VQA on the generated image). We perform these evaluations on the validation set of prompts and results are shown in Tab.~\ref{tab:t2icompbench} and observe consistent improvements across steps and categories. 

\begin{wraptable}{r}{0.55\textwidth}       
\vspace{-1.2em}
    \centering
    \caption{DPG-Bench results for SANA-Sprint highlighting generalization across inference timesteps of our Noise Hypernetwork.}
    \label{tab:dpg}
    \begin{adjustbox}{width=0.55\textwidth}
    \begin{tabular}{@{}lcc@{}}
        \toprule
        \textbf{SANA-Sprint 0.6B}~\cite{sanasprint} & \textbf{NFEs} & \textbf{DPG-Bench Score$\uparrow$}\\
        \midrule
        One-step & 1 & 77.59 \\
        \textbf{+ Noise Hypernetwork} & 2 & \textbf{79.20} \\
        \midrule
        Two-step & 2 & 79.07 \\
        \textbf{+ Noise Hypernetwork} & 3 & \textbf{79.74} \\
        \midrule
        Four-step & 4 & 79.54 \\
        \textbf{+ Noise Hypernetwork} & 5 & \textbf{80.82} \\
        \bottomrule
    \end{tabular}
    \end{adjustbox}
\end{wraptable} 
\paragraph{DPG-Bench.} We provide results on DPG-Bench~\cite{ella} in Tab.~\ref{tab:dpg}. Broadly, while performance increases for all models with increasing timesteps, we note that the results for the four step SANA-Sprint model is nearly matched by the one-step model with our noise hypernetwork. We also note that the DPG-Bench score of 80.82 surpasses powerful models such as SDXL~\cite{podell2023sdxl}, Pixart-$\Sigma$, and is only surpassed by much larger models such as SD3~\cite{sd3}, and Flux. Finally, we also note that the human-preference reward models that we utilize all have a CLIP/BLIP encoder that limits the length of the captions to $<77$ tokens, which offers minimal scope of improvements for benchmarks involving much longer prompts that exceed this context window. Future reward models that either utilize different CLIP models (e.g. Long-CLIP~\cite{longclip}) or LLM-based decoders (e.g. VQAScore~\cite{vqascore}) would enable improving prompt following of these models more dramatically in the case of long prompts. 

\subsection{Diversity Analysis}
\begin{wraptable}{r}{0.55\textwidth}    \vspace{-1.2em}
    \centering
    \caption{We measure the average LPIPS and DINO similarity scores over images generated for 50 different seeds for the 553 prompts from GenEval.}
    \label{tab:diversity}
    \begin{adjustbox}{width=0.55\textwidth}
    \vspace{1mm}
    \begin{tabular}{ccc}
         \toprule
         & LPIPS $\uparrow$ & DINO $\downarrow$\\
         \midrule
         SANA-Sprint & 0.608 \color{gray}$\pm 0.074$ & 0.780 \color{gray}$\pm 0.103$ \\
         \textbf{+ Noise HyperNetwork} & 0.592 \color{gray}$\pm 0.059$ & 0.825 \color{gray}$\pm 0.090$ \\
         \bottomrule
    \end{tabular} 
    \end{adjustbox}
\end{wraptable}
We also investigate the impact of the diversity of the generated outputs as the result of our hypernetwork. For this purpose, we generate 50 images by varying the seed from the 553 prompts of the GenEval benchmark. The average similarity of different images for the same prompt are measured using similarities from LPIPS~\cite{lpips} and DINOv2~\cite{dinov2} embeddings. The results in Tab.~\ref{tab:diversity} indicate that the noise hypernetwork does not cause any collapse due to ``reward-hacking'' and broadly, the diversity of the generated images is in the same ballpark as the base model.

\begin{wraptable}{r}{0.55\textwidth}
    \vspace{-1.5em}
    \centering
    \caption{Mean GenEval results for SANA-Sprint highlighting generalization across inference timesteps of our Noise Hypernetwork.}
    \label{app:sana_timesteps_generalization}
    \vspace{-1ex} %
    \begin{adjustbox}{width=0.55\textwidth}
    \begin{tabular}{@{}lcc@{}}
        \toprule
        \textbf{SANA-Sprint}~\cite{sanasprint} & \textbf{NFEs} & \textbf{GenEval Mean$\uparrow$}\\
        \midrule
        One-step & 1 & 0.70 \\
        + Direct fine-tune~\cite{alignprop} & 1 & 0.67\\
        \textbf{+ Noise Hypernetwork} & 2 & \textbf{0.75} \\
        \midrule
        Two-step & 2 & 0.72 \\
        + Direct fine-tune~\cite{alignprop} & 2 & 0.66\\
        \textbf{+ Noise Hypernetwork} & 3 & \textbf{0.76} \\
        \midrule
        Four-step & 4 & 0.73 \\
        + Direct fine-tune~\cite{alignprop} & 4 & 0.62\\
        \textbf{+ Noise Hypernetwork} & 5 & \textbf{0.77} \\
        \midrule
        Eight-step & 8 & 0.74 \\
        \textbf{+ Noise Hypernetwork} & 9 & \textbf{0.76} \\
        \midrule
        Sixteen-step & 16 & 0.73 \\
        \textbf{+ Noise Hypernetwork} & 17 & \textbf{0.75} \\
        \midrule
        Thirty-two-step & 32 & 0.71 \\
        \textbf{+ Noise Hypernetwork} & 33 & \textbf{0.72} \\
        \bottomrule
    \end{tabular}
    \label{tab:multistep}
    \end{adjustbox}
    \vspace{-3em}
\end{wraptable} 
\subsection{Multi-step analysis}
Here, in addition to the main text Table~\ref{tab:sana_timesteps_generalization}, we analyze the behavior of Noise Hypernetworks when moving beyond the few-step regime of $1-4$ steps. Remarkably, even when going up to $32$ inference steps, we find that Noise Hypernetworks trained with the one-step generator, improve performance. We find that as we increase the NFEs, the added performance boost of the Noise Hypernetwork reduces. However, note that the underlying model SANA-sprint~~\cite{sanasprint} was not trained to be used in the multi-step regime, but specifically for few-step generation.

\subsection{Challenges with Direct Fine-tuning}
\label{app:direct-finetuning}
We also qualitatively illustrate the problems with directly fine-tuning diffusion models on differentiable rewards in Figure~\ref{fig:direct-finetune}. As visualized, there are drastic artifacts introduced on the image which play a huge role in improving the reward scores. These artfiacts are very similar to the ones noticed in several works~\cite{clark2023directly,li2024textcraftor,jena2024elucidating} and require the development of several regularization strategies to address these issues. However as explained in Section~\ref{sec:background}, in the few-step regime the KL regularization term to the base model is difficult to be made tractable and thus, to the best of our knowledge there exists no theoretical grounded approach to learn the reward tilted distribution (Equation~\ref{eq:target_tilted_dist}) with a one-step generator. The Noise Hypernework strategy on the other hand, ensures that the images remain in the original data distribution with its principled regularization.

\subsection{LoRA Rank analysis}
Here, we ablate the LoRA rank for both HyperNoise and direct fine-tuning on SANA-Sprint. We find that a rank of $64$ also seems to be sufficient to achieve almost the same improvements as rank $128$, while a lower rank seems not to be expressive enough. On the other hand, fine-tuning seems to be suffering from increased overfitting on the reward.

\begin{table}[h]
    \centering
    \caption{GenEval results for HyperNoise on SANA-Sprint, showing generalization across timesteps.}
    \label{tab:hypernoise-results}
    \begin{tabular}{@{}lcc@{}}
        \toprule
        \textbf{Method} & \textbf{NFEs} & \textbf{GenEval Mean}$\uparrow$\\
        \midrule
        SANA-Sprint (One-step) & 1 & 0.70 \\
        \midrule
        LoRA-Rank 128 + HyperNoise & 2 & 0.75 \\
        LoRA-Rank 64 + HyperNoise & 2 & 0.75 \\
        LoRA-Rank 16 + HyperNoise & 2 & 0.71 \\
        LoRA-Rank 8 + HyperNoise & 2 & 0.70 \\
        \midrule
        SANA-Sprint (Two-step) & 2 & 0.72 \\
        HyperNoise & 3 & 0.76 \\
        \midrule
        SANA-Sprint (Four-step) & 4 & 0.73 \\
        HyperNoise & 5 & 0.77 \\
        HyperNoise (LoRA-Rank=64) & 5 & 0.76 \\
        \bottomrule
    \end{tabular}
\end{table}
\begin{table}[h]
    \centering
    \caption{GenEval results for direct LoRA fine-tuning on SANA-Sprint.}
    \label{tab:fine-tuning-results}
    \begin{tabular}{@{}lcc@{}}
        \toprule
        \textbf{Method} & \textbf{NFEs} & \textbf{GenEval Mean}$\uparrow$\\
        \midrule
        SANA-Sprint (One-step) & 1 & 0.70 \\
        \midrule
        LoRA-Rank 128 + LoRA fine-tune & 1 & 0.67\\
        LoRA-Rank 64 + LoRA fine-tune & 1 & 0.68\\
        LoRA-Rank 16 + LoRA fine-tune & 1 & 0.65\\
        LoRA-Rank 8 + LoRA fine-tune & 1 & 0.59\\
        \midrule
        SANA-Sprint (Two-step) & 2 & 0.72 \\
        LoRA fine-tune & 2 & 0.66\\
        \midrule
        SANA-Sprint (Four-step) & 4 & 0.73 \\
        LoRA fine-tune& 4 & 0.62\\
        \bottomrule
    \end{tabular}
\end{table}

\subsection{Qualitative Results}
We provide additional qualitative samples for the base SANA-Sprint result along with the generation with our proposed noise hypernetwork in Figures~\ref{fig:direct-finetune} and \ref{fig:app-qual}. We broadly observe improved prompt following as well as superior visual quality in the generated images.
\newpage
\begin{figure}[t]
    \centering
    \includegraphics[width=\textwidth]{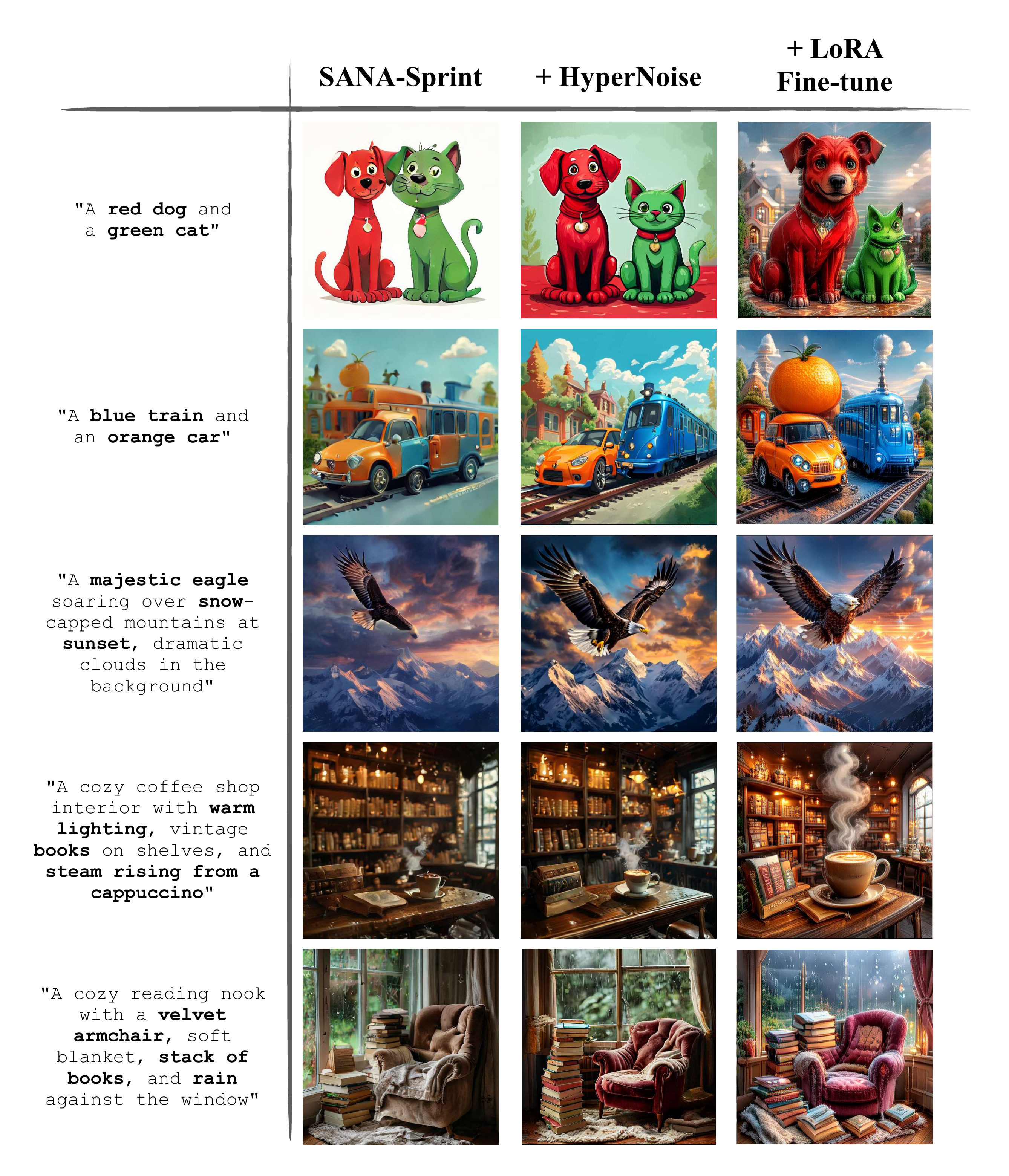}
    \caption{Examples of artifacts introduced by directly Direct Fine-tuning diffusion models on rewards~\cite{alignprop,clark2023directly,li2024textcraftor} for the same reward objective in comparison to Noise Hypernetwork training with same initial noise.}
    \label{fig:direct-finetune}
\end{figure}

\begin{figure}[t]
    \centering
    \includegraphics[width=0.8\textwidth]{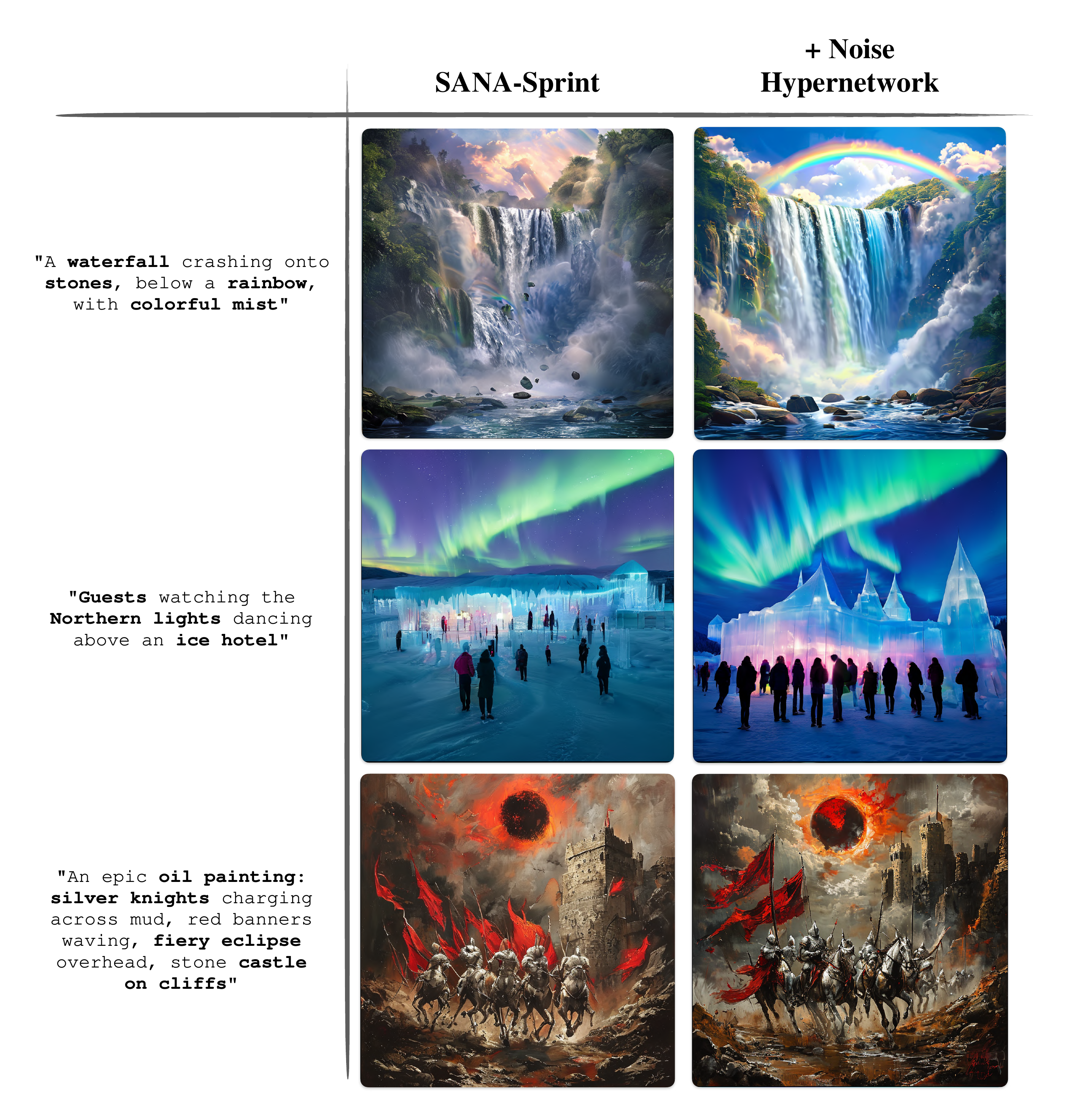}
    \caption{More qualitative results on the human-preference reward setting. Base SANA-Sprint compared to HyperNoise with same initial noise.}
    \label{fig:app-qual}
\end{figure}

\begin{figure}[t]
    \centering
    \includegraphics[width=\textwidth]{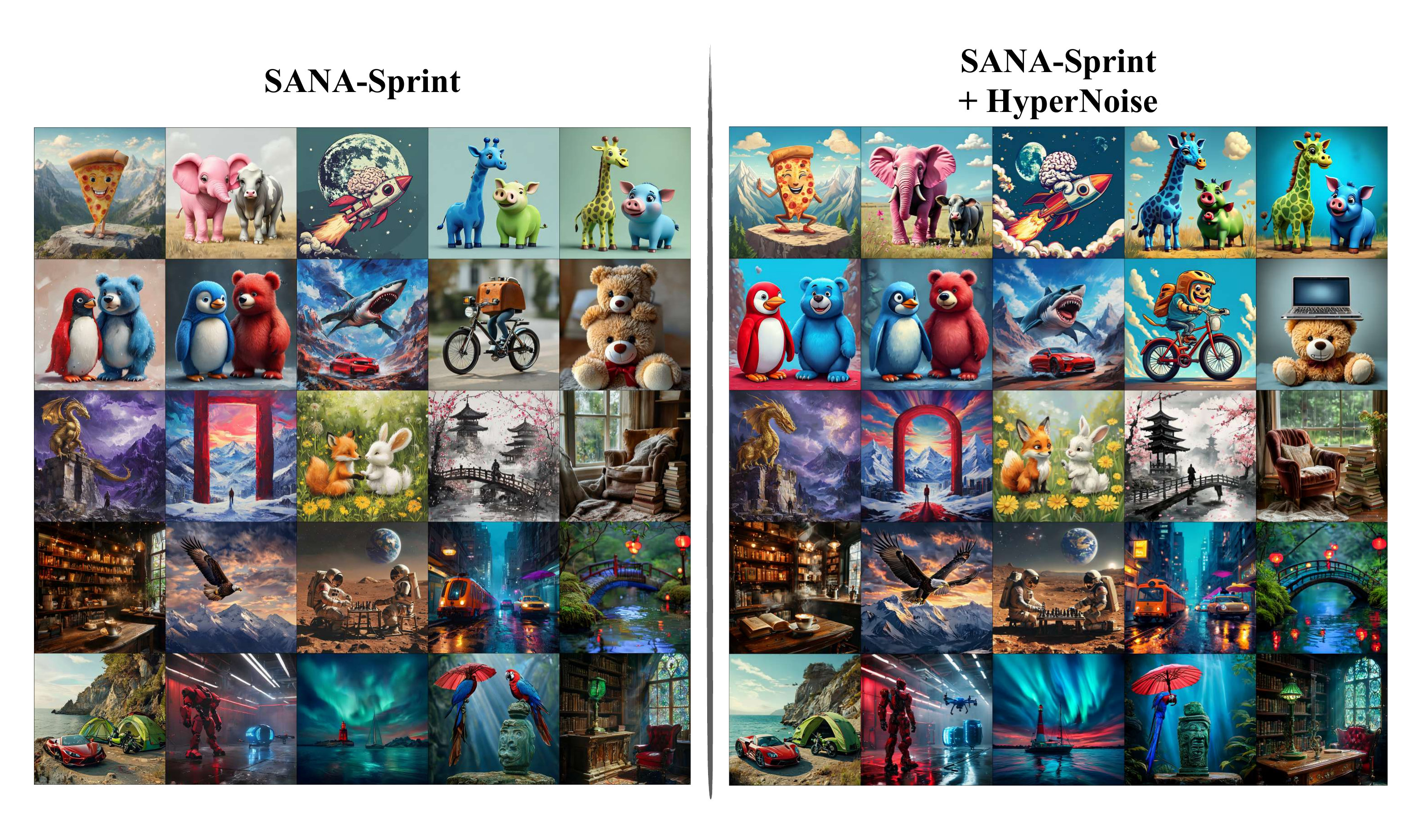}
    \caption{Non-cherry picked results on the human-preference reward setting. Base SANA-Sprint compared to HyperNoise with same initial noise.}
    \label{fig:app-qual}
\end{figure}

\end{document}